\documentclass{article}

\usepackage[preprint]{neurips_2026}

\usepackage[utf8]{inputenc} 
\usepackage[T1]{fontenc}    

\usepackage{amsmath,amssymb,amsfonts}
\usepackage{amsbsy}
\usepackage{amsthm}
\usepackage{graphicx}
\usepackage{textcomp}
\usepackage{xcolor}
\usepackage{hyperref}
\usepackage{url}
\usepackage{booktabs}
\usepackage{nicefrac}
\usepackage{microtype}
\usepackage{algorithmic}
\usepackage[]{algorithm}
\usepackage{enumerate}
\usepackage{caption}
\usepackage{subcaption}
\usepackage{multirow}
\usepackage{tabu}
\usepackage[normalem]{ulem}
\usepackage{adjustbox}
\usepackage{placeins}

\definecolor{citeblue}{RGB}{25,80,150}

\hypersetup{
    colorlinks=true,
    citecolor=citeblue,
    linkcolor=black,
    urlcolor=citeblue
}

\def\1{{\bf 1}}

\def\be{\begin{equation}}
\def\ee{\end{equation}}

\DeclareMathOperator*{\argmin}{arg\,min} 

\theoremstyle{plain}
\newtheorem{theorem}{Theorem}[section]

\newtheorem{lemma}[theorem]{Lemma}

\theoremstyle{definition}

\newtheorem{assumption}[theorem]{Assumption}

\theoremstyle{remark}
\newtheorem{remark}[theorem]{Remark}

\DeclareMathOperator*{\argmax}{arg\,max}

\title{Distance-Aware Muon: Adaptive Step Scaling for Normalized Optimization}

\author{%
  Yury Demidovich\thanks{Equal contribution.} \\
  King Abdullah University of Science and Technology \\
  \texttt{yury.demidovich@kaust.edu.sa} \\
  \And
  Abhishek Chakraborty\footnotemark[1] \\
  Arizona State University \\
  \texttt{achakr61@asu.edu} \\
  \And
  Grigory Malinovsky \\
  King Abdullah University of Science and Technology \\
  \texttt{grigorii.malinovskii@kaust.edu.sa} \\
  \And
  Angelia Nedi{\'c} \\
  Arizona State University \\
  \texttt{Angelia.Nedich@asu.edu} \\
  \And
  Peter Richt\'arik \\
  King Abdullah University of Science and Technology \\
  \texttt{peter.richtarik@kaust.edu.sa} \\
}

\begin{document}

\maketitle

\begin{abstract} Muon and related normalized optimizers decouple the choice of update direction from the choice of step scale, but their practical performance remains sensitive to the scale of the normalized step. We study adaptive scaling rules for Muon in general norm geometries and develop three complementary algorithms. For smooth non-convex objectives, we introduce Distance-Adaptive Muon, whose trust-region radius is set from the radius explored by the trajectory, and prove a stationarity guarantee under a bounded-trajectory assumption. We then turn to star-convex objectives, a tractable model of the favorable global geometry often used to reason about the empirical loss landscapes of deep neural networks, where objective-gap guarantees are possible. In this setting, we first introduce Scale-Calibrated Muon, which keeps Muon's exponential moving average but sets the step length from a local descent certificate computed from the current gradient and momentum. For this method, we prove a last-iterate \(O(1/T)\) objective-gap bound under a bounded initial sublevel-set assumption, where the corresponding radius parameter appears only in the analysis and not in the algorithm. Finally, we develop Distance-Free Muon, a recentered trust-region method that uses a scalar distance certificate and a majorized one-dimensional search to select the trust-region radius without requiring the unknown distance from the initialization to a global minimizer. Experiments on Transformer language modeling (GPT-124M/WikiText-103) and image classification (ViT-Tiny/CIFAR-100) show that the proposed adaptive scaling rules reduce sensitivity to manual scale tuning and match or improve tuned fixed-scale Muon baselines under the tested budgets.
\end{abstract}

\section{Introduction}

Choosing the scalar step size remains a central practical difficulty in training modern machine learning models. Adaptive methods such as AdaGrad, Adam, and AdamW reduce sensitivity to poorly scaled coordinates and are standard tools across architectures and datasets~\citep{duchi2011adaptive,kingma2015adam,loshchilov2019decoupled}. Yet even these methods retain learning-rate and schedule parameters whose best values depend on the problem, model, and training budget. Poor scale choices can make training unstable or overly conservative, and tuning them can dominate the cost of large-scale training runs~\citep{bengio2000gradient,feurer2019hyperparameter}.

Muon and related normalized optimizers use a geometry-dependent normalized update direction rather than the raw gradient direction. In particular, methods based on gradient orthogonalization have shown strong empirical performance in neural-network training~\citep{jordan2024muon,large2024scalable,liu2025muon}. Their key idea is to decouple the direction of the update from its scalar magnitude. Muon forms a normalized direction adapted to the geometry of matrix-valued parameters. The non-Euclidean trust-region interpretation of \citet{kovalev2025understanding} makes this separation precise: the norm geometry determines a linear-minimization-oracle direction, while the scalar trust-region radius remains to be chosen. In practice, this scalar radius is tuned as a learning rate, often with warmup, decay, clipping, and architecture-specific heuristics. In theory, existing analyses typically choose the radius using problem-dependent quantities such as a domain diameter, a distance to a minimizer, or an initial optimality gap. These quantities are rarely known in the neural-network settings where Muon is used.

The problem of removing such hidden scale parameters has been studied in parameter-free and distance-adaptive optimization~\citep{orabona2016coin,cutkosky2018black,carmon2022making,ivgi2023dog,defazio2023learning,moshtaghifar2025dada}. These works show that scalar scale selection can often be driven by observable trajectory statistics rather than by hand-tuned constants. However, their mechanisms are designed primarily for raw-gradient, online-learning, or dual-averaging updates. They do not directly address the Muon setting, where the direction is produced by a norm-induced linear minimization oracle. The question we address is whether one can choose the scalar radius adaptively while keeping the geometry-aware Muon direction.

Beyond the general non-convex setting, we also study star-convex objectives. Star-convexity is a relaxation of convexity that preserves a first-order inequality toward a global minimizer and therefore allows objective-gap guarantees rather than only stationarity guarantees. This assumption is also motivated by studies of neural-network optimization, where star-convex paths, one-point convexity, and related variants have been proposed as models of favorable non-convex loss geometry~\citep{kleinberg2018alternative,zhou2019sgd,kovalev2025understanding}. In the closest existing Muon-type star-convex analysis, \citet{kovalev2025understanding} obtains convergence guarantees using radius scales linked to distance or localization quantities, such as a distance to a minimizer, a bounded domain, or a bounded sublevel set. We aim to remove this hidden radius input while keeping the Muon/LMO direction. We discuss broader connections to normalized optimization, parameter-free methods, distance-adaptive optimization, star-convexity, and regularized model methods in Section~\ref{sec:related_work}.

This paper studies adaptive scalar scaling for Muon-type normalized optimizers. We use the non-Euclidean trust-region view of Muon as the organizing principle: the norm geometry determines the update direction, and the scalar trust-region radius is the remaining quantity to choose. We develop three complementary scale rules. Each method preserves the Muon linear-minimization direction and changes only the scalar rule used to choose the radius. The three rules cover different regimes: a trajectory-radius rule for general non-convex stationarity, a local descent certificate for star-convex last-iterate convergence, and a majorized scalar search that removes bounded localization assumptions.

\paragraph{Contributions.}
Our main contributions are as follows.

\begin{enumerate}
    \item \textbf{Distance-Adaptive Muon for smooth non-convex objectives.} We introduce a trajectory-radius rule
    \[
        \bar r_{k+1}=\max\{\bar r_k,\|x_k-x_0\|\},
        \qquad
        \eta_k=\frac{\bar r_{k+1}}{\sqrt{k+1}}.
    \]
    The update uses the distance already explored by the iterates rather than a prescribed domain scale. For smooth non-convex objectives whose generated trajectory remains bounded, we prove an \(O(\log T/\sqrt T)\) stationarity guarantee.

    \item \textbf{Scale-Calibrated Muon under bounded initial sublevel sets.} We introduce a local certificate-based scaling rule for Muon momentum. Let \(m_{k+1}\) be the momentum vector, $L$ be the Lipschitz smoothness constant, and \(g_k=\nabla f(x_k)\). Then, the method sets
    \[
        a_k=
        \left(
        \|m_{k+1}\|_*-\|g_k-m_{k+1}\|_*
        \right)_+,
        \qquad
        \eta_k=\frac{a_k}{L}.
    \]
    This certificate reduces the step whenever the momentum direction is not aligned with the current gradient. Under star-convexity and boundedness of the initial sublevel set, we prove a last-iterate \(O(LD_{\rm lev}^2/T)\) objective-gap guarantee. The level-set radius \(D_{\rm lev}\) appears only in the analysis and is not used by the algorithm.

    \item \textbf{Distance-Free Muon for star-convex objectives.} We develop a recentered Muon method with a majorized scalar radius search. The method keeps the Muon linear-minimization direction with exponential momentum, but replaces the unknown radius choice \(\eta=\beta\|x^\star-x_0\|\) by minimizing a regularized smoothness majorant along the recentered ray. A D-adaptation-style scalar proxy \(d_k\) provides a monotone lower certificate for \(\|x^\star-x_0\|\), but the proof does not require this proxy to recover the true distance. Instead, the scalar search competes with the hidden feasible radius \(R=\|x^\star-x_0\|\). Under star-convexity and smoothness, we prove the last-iterate guarantee
    \[
        f(x_T)-f(x^\star)
        =
        \widetilde O\!\left(
            \frac{L\|x^\star-x_0\|^2}{T}
        \right),
    \]
    without assuming bounded iterates, bounded gradients, a bounded domain, or a bounded initial sublevel set. To the best of our knowledge, this is the first Muon-type star-convex guarantee that removes the distance-to-solution parameter without replacing it by one of these localization assumptions.

    \item \textbf{Empirical evaluation.} We evaluate the proposed scaling rules on Transformer language modeling with GPT-124M/WikiText-103 and image classification with regularized ViT-Tiny/CIFAR-100. The experiments test whether the deterministic scale-selection principles remain useful under stochastic gradients and standard neural-network stabilizations. Under the tested hyperparameter budgets, the adaptive Muon variants reduce sensitivity to manual scale tuning and match or improve tuned fixed-scale Muon baselines. Additional diagnostics on NanoGPT/WikiText-2 and CIFAR-100/ResNet-32 are reported in Appendix~\ref{app:additional_experiments}.
\end{enumerate}

These results support the view that Muon-type methods combine two separable design choices: a geometry-aware normalized direction and a scalar radius. Existing theory largely explains the former. This paper develops adaptive rules for the latter.
\section{Problem Setup and Assumptions}
\label{sec:setup}

We study the optimization problem
\begin{equation}
    \min_{x\in \mathcal E} f(x),
    \label{eq:main_problem}
\end{equation}
where \(\mathcal E\) is a finite-dimensional real vector space, possibly a matrix space or a product of matrix spaces, and \(f:\mathcal E\to\mathbb R\) is continuously differentiable. We write \(f^\star:=\inf_{x\in\mathcal E} f(x)\). In the sections where objective-gap guarantees are proved, we assume that the infimum is attained at a global minimizer \(x^\star\in\arg\min_{x\in\mathcal E} f(x)\).

The space \(\mathcal E\) is equipped with a norm \(\|\cdot\|\), and the corresponding dual norm is
\[
    \|g\|_*:=\sup_{\|u\|\le1}\langle g,u\rangle .
\]
Distances between iterates are measured in \(\|\cdot\|\), while gradients and momentum vectors are measured in \(\|\cdot\|_*\). This norm-generic formulation covers Euclidean geometry, sign-type geometries, and the matrix spectral--nuclear geometry underlying Muon; in the latter case, the resulting linear-minimization direction is the usual orthogonalized-gradient direction~\citep{kovalev2025understanding}.

Given a momentum vector \(m\), define the norm-induced direction
\[
    u(m)\in\argmax_{\|u\|\le1}\langle m,u\rangle,
    \qquad
    \langle m,u(m)\rangle=\|m\|_* .
\]

Equivalently, we sometimes write
\[
    s(m)\in\argmin_{\|s\|\le1}\langle m,s\rangle,
    \qquad
    s(m)=-u(m),
\]
so that a step of radius \(\eta_k\) can be written as \(x_{k+1}=z_k+\eta_k s(m_{k+1})\).

A Muon-type trust-region update has the form
\begin{equation}
    x_{k+1}=z_k-\eta_k u(m_{k+1}),
    \label{eq:generic_tr_step}
\end{equation}
where \(z_k\) is the center and \(\eta_k\ge0\) is the scalar radius. The standard Muon-type update uses \(z_k=x_k\). The algorithms in this paper keep the same norm-induced direction and differ only in how they choose \(z_k\) and \(\eta_k\).

We first state the smoothness assumption shared by the analyses. Additional assumptions are introduced locally in the sections where they are needed.

\begin{assumption}[Norm smoothness]
\label{assum:smoothness}
For all \(x,y\in\mathcal E\),
\[
    \|\nabla f(y)-\nabla f(x)\|_*
    \le L\|y-x\|.
\]
Consequently,
\[
    f(y)
    \le
    f(x)+\langle\nabla f(x),y-x\rangle+\frac L2\|y-x\|^2 .
\]
\end{assumption}

Smoothness provides the curvature control needed for normalized trust-region steps and for bounding momentum-tracking errors. Our goal is to remove unknown distance-to-solution and localization radii from the scalar scaling rules, not to remove the standard smoothness information used in deterministic first-order analyses.

For the objective-gap results, we use the following global geometry condition.

\begin{assumption}[Star-convexity]
\label{assum:star_convex}
There exists a global minimizer \(x^\star\in\arg\min_{x\in\mathcal E} f(x)\) such that,
\[
    f(x)-f(x^\star)
    \le
    \langle \nabla f(x),x-x^\star\rangle  \quad \text{for all \(x\in\mathcal E\)}.
\]
\end{assumption}

\paragraph{Assumption map.}
The three algorithms require different structural assumptions. For brevity, we refer to Distance-Adaptive Muon, Scale-Calibrated Muon, and Distance-Free Muon as DA-Muon, SC-Muon, and DF-Muon, respectively.

\begin{center}
\small
\setlength{\tabcolsep}{3.5pt}
\renewcommand{\arraystretch}{1.0}
\begin{tabular}{lll}
\toprule
\textbf{Method} & \textbf{Objective} & \textbf{Extra condition} \\
\midrule
DA-Muon
& smooth non-convex
& bounded trajectory \\

SC-Muon
& smooth star-convex
& bounded initial sublevel set \\

DF-Muon
& smooth star-convex
& majorized scalar search; no boundedness \\
\bottomrule
\end{tabular}
\end{center}

The bounded-trajectory assumption is not part of the global problem formulation; it is used only for the non-convex stationarity result. Similarly, star-convexity is not assumed in the non-convex section, and the bounded initial sublevel-set assumption is used only for the scale-calibrated last-iterate guarantee.
\section{Distance-Adaptive Muon for Non-convex Objectives}
\label{sec:da_muon}

We first instantiate the non-Euclidean trust-region viewpoint on Muon \citep{kovalev2025understanding} in the smooth non-convex setting. The natural stationarity measure is the full dual gradient norm \(\|\nabla f(x)\|_*\). Distance-Adaptive Muon uses the standard Muon center \(z_k=x_k\), but chooses the trust-region radius from the radius already explored by the trajectory.

Starting from an initial scale proxy \(r>0\), the method maintains
\[
    \bar r_{k+1}
    =
    \max\{\bar r_k,\|x_k-x_0\|\},
    \qquad
    \eta_k
    =
    \frac{\bar r_{k+1}}{\sqrt{k+1}}.
\]
Thus the step scale increases only when the iterates move farther from the initialization, while still retaining the usual \(1/\sqrt{k}\)-type decay.

\begin{algorithm}[H]
\caption{Distance-Adaptive Muon}
\label{algo_dis_adaptive}
\begin{algorithmic}[1]
\REQUIRE Initial point \(x_0\in\mathcal E\), \(m_0=\nabla f(x_0)\), \(\bar r_0=r>0\), and \(\alpha\in(0,1)\)
\FOR{\(k=0,1,\dots,T-1\)}
    \STATE \(\bar r_{k+1}\gets \max\{\bar r_k,\|x_k-x_0\|\}\)
    \STATE \(\eta_k\gets \bar r_{k+1}/\sqrt{k+1}\)
    \STATE \(m_{k+1}\gets (1-\alpha)m_k+\alpha\nabla f(x_k)\)
    \STATE Choose
    \[
        u_{k+1}
        \in
        \argmax_{\|u\|\le 1}
        \langle m_{k+1},u\rangle
    \]
    \STATE \(x_{k+1}\gets x_k-\eta_k u_{k+1}\)
\ENDFOR
\end{algorithmic}
\end{algorithm}

The algorithm does not require a diameter or a distance-to-solution parameter. The following assumption is used only to control the trajectory radius in the analysis.

\begin{assumption}[Bounded DA-Muon trajectory]
\label{assum:da_bounded_trajectory}
For the time horizon \(T\), there exists \(D<\infty\) such that the iterates generated by Algorithm~\ref{algo_dis_adaptive} satisfy
\[
    \|x_k-x_0\|\le D,
    \qquad
    k=0,\ldots,T.
\]
\end{assumption}

\begin{theorem}[Stationarity of Distance-Adaptive Muon]
\label{thm_distance_adaptive}
Let Assumptions~\ref{assum:smoothness} and \ref{assum:da_bounded_trajectory} hold, assume that \(f^\star:=\inf_{x\in\mathcal E} f(x)>-\infty\), and choose the analysis-only trajectory bound \(D\) so that \(D\ge r\). Then, for all \(T\ge1\), the iterates of Algorithm~\ref{algo_dis_adaptive} satisfy
\[
    \min_{0\le k\le T-1}
    \|\nabla f(x_{k+1})\|_*
    \le
    \frac{\sqrt{2}\bigl(f(x_0)-f^\star\bigr)}{r\sqrt T}
    +
    \frac{2L C_\alpha(T)D}{\sqrt T}
    \left(\frac Dr\right)^{2/T}
    \log\!\left(\frac{eD}{r}\right),
\]
where \(C_\alpha(T)=O_\alpha(1+\log T)\). The explicit expression for \(C_\alpha(T)\) and the proof are given in Appendix~\ref{app:da_muon}.
\end{theorem}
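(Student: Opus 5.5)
The plan is the standard descent-lemma argument for normalized momentum methods, with two modifications: stepsizes $\eta_k$ that are not fixed in advance, and a DoG-style bound on the sum $\sum_k \eta_k$.

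\textbf{Step 1 (one-step descent).} By Assumption~\ref{assum:smoothness} with $x_{k+1}=x_k-\eta_k u_{k+1}$ and $\|u_{k+1}\|\le1$,
\[
f(x_{k+1})\le f(x_k)-\eta_k\langle g_k,u_{k+1}\rangle+\tfrac L2\eta_k^2 .
\]
Write $\langle g_k,u_{k+1}\rangle=\|m_{k+1}\|_*+\langle g_k-m_{k+1},u_{k+1}\rangle\ge \|m_{k+1}\|_*-\|\epsilon_k\|_*$, where $\epsilon_k:=g_k-m_{k+1}$. Also $\|m_{k+1}\|_*\ge\|g_k\|_*-\|\epsilon_k\|_*$, which gives
\[
\eta_k\|g_k\|_*\le f(x_k)-f(x_{k+1})+2\eta_k\|\epsilon_k\|_*+\tfrac L2\eta_k^2 .
\]

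\textbf{Step 2 (momentum error).} Since $m_0=g_0$, the recursion is $\epsilon_k=(1-\alpha)\epsilon_{k-1}+(1-\alpha)(g_k-g_{k-1})$. Using $\|g_k-g_{k-1}\|_*\le L\eta_{k-1}$,
\[
\|\epsilon_k\|_*\le L\sum_{j<k}(1-\alpha)^{k-j}\eta_j .
\]
Because $\bar r$ is nondecreasing, $\eta_j\le \bar r_{k+1}/\sqrt{j+1}$. Summing the resulting double series gives
\[
\sum_k\eta_k\|\epsilon_k\|_*\le L\,c_\alpha\sum_k\bar r_{k+1}^2/(k+1)\le L c_\alpha D^2(1+\log T),
\]
and similarly $\sum_k\eta_k^2\le D^2(1+\log T)$. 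This is where $C_\alpha(T)=O_\alpha(1+\log T)$ comes from, with $c_\alpha$ of order $1/\alpha$.

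\textbf{Step 3 (weighted average to a min).} Summing Step 1 and telescoping yields
\[
\sum_{k=0}^{T-1}\eta_k\|g_k\|_*\le f(x_0)-f^\star+L C_\alpha(T) D^2 .
\]
The statement is indexed by $\|\nabla f(x_{k+1})\|_*$. I would handle this either by running the same argument shifted by one index, or by bounding $\|g_{k+1}\|_*\le\|g_k\|_*+L\eta_k$ and absorbing the extra $L\sum\eta_k^2$ into $C_\alpha$. I expect the authors used the shifted version.

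It remains to divide by $\sum_k\eta_k$, which needs a lower bound on that sum. This is the \textbf{main obstacle}: $\bar r_k$ can grow from $r$ to at most $D$, and a crude bound $\sum\eta_k\ge r\sqrt T$ loses a factor $D/r$ in the second term. I would instead use the DoG-type lemma of \citet{ivgi2023dog}. Since $r\le\bar r_k\le D$ is monotone, the ratio $\bar r_{k+1}/\bar r_k$ exceeds $(D/r)^{2/T}$ on at most $T/2$ indices. So there is a window of length about $T/(2\log(eD/r))$ on which $\bar r$ is nearly constant. On that window, $\sum\eta_k\gtrsim \bar r_{\tau}\sqrt T/\log(eD/r)$ up to the factor $(D/r)^{-2/T}$.

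Restricting the telescoped sum to that window, or more simply lower bounding $\sum_k\eta_k\ge \max(r\sqrt T,\ \bar r_T\sqrt T(D/r)^{-2/T}/\log(eD/r))$, the error term $LC_\alpha\bar r^2$ divided by $\bar r\sqrt T/\log$ gives the second term $2LC_\alpha D(D/r)^{2/T}\log(eD/r)/\sqrt T$. The gap term is divided only by the crude $r\sqrt{T/2}$, which gives $\sqrt2(f(x_0)-f^\star)/(r\sqrt T)$. The factor $\sqrt2$ and the $2$ suggest the proof restricts to the last $T/2$ iterates, where $\sqrt{k+1}\le\sqrt T$. I would therefore carry out the pigeonhole window selection within $k\ge T/2$ and keep the error bounds in terms of the current $\bar r$ rather than $D$ wherever possible.
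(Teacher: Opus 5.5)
\textbf{There is a genuine gap, in the step you flagged as the main obstacle.} Steps 1--2 are fine. Your $\|\nabla f(x_k)\|_*$ inequality combined with $\|\nabla f(x_{k+1})\|_*\le\|\nabla f(x_k)\|_*+L\eta_k$ is equivalent to the paper's decomposition that brings $\nabla f(x_{k+1})$ in directly. The problem is the lower bound on $\sum_k\eta_k$.

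\emph{The ``simpler'' bound is not valid.} The bound $\sum_{k<T}\eta_k\ge\bar r_T\sqrt T\,(D/r)^{-2/T}/\log(eD/r)$ fails for nondecreasing sequences that satisfy every constraint your argument has available. The step rule only forces $\bar r_{k+1}\le\bar r_k(1+k^{-1/2})$. Take $\bar r$ equal to $r$ until step $T-m\sqrt T$, then growing by a factor $1+T^{-1/2}$ per step. Then $\bar r_T\approx re^m$ and $\sum_{k<T}\eta_k\approx 2r\sqrt T+re^m$, while your bound demands about $re^m\sqrt T/(1+m)$. The final horizon can simply be a bad one, so a horizon has to be \emph{selected}.

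\emph{The window argument does not do this as written.} That at most $T/2$ of the ratios $\bar r_{k+1}/\bar r_k$ exceed $(D/r)^{2/T}$ says nothing about long runs of consecutive good indices. An averaging argument would repair it: split the second half into $\approx\log(eD/r)$ blocks, note that $\bar r$ grows by at most a factor $e$ on one of them, and evaluate at the end of that block. But this yields $\frac{\sqrt2(f(x_0)-f^\star)}{r\sqrt T}+O\bigl(LC_\alpha(T)D\log(eD/r)/\sqrt T\bigr)$ with an absolute constant of order $2e^2$ in place of $2(D/r)^{2/T}$. So it does not prove the statement as written.

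\emph{The $D^2$ error term.} As you partly anticipate, replacing it is not optional. The error in Step 3 must be $LC_\alpha(t+1)\bar r_{t+1}^2$ at every horizon $t$, not $LC_\alpha(T)D^2$. With $D^2$, the second term is of order $LC_\alpha D^2/(r\sqrt T)$ whenever $\bar r$ stays near $r$.

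\textbf{The missing idea is to select the horizon through a \emph{sum} ratio rather than a consecutive ratio.} Using $\sum_{k\le t}\eta_k\ge(t+1)^{-1/2}\sum_{i=1}^{t+1}\bar r_i$ and $\bar r_i\ge r$, one gets for every $t$
\[
\min_{0\le k\le t}\|\nabla f(x_{k+1})\|_*\le\frac{f(x_0)-f^\star}{r\sqrt{t+1}}+LC_\alpha(t+1)\,\frac{\sqrt{t+1}\,\bar r_{t+1}^2}{\sum_{i=1}^{t+1}\bar r_i}.
\]
The paper then proceeds as follows.
\begin{enumerate}
\item Apply this only at $t+1=k\in\{\lfloor T/2\rfloor+1,\dots,T\}$. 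This is where the $\sqrt2$ comes from, via $1/\sqrt k\le\sqrt{2/T}$.
\item Use $\bar r_k^2\le D\,\bar r_{k+1}$ to bound the second term by $LC_\alpha(T)\sqrt T\,D\,\bar r_{k+1}/\sum_{i=1}^{k}\bar r_i$.
\item Minimize over $k$ with the ratio lemma $\min_{1\le k\le T}a_{k+1}/\sum_{i=1}^k a_i\le\frac1T(a_{T+1}/a_0)^{1/T}\log(e\,a_{T+1}/a_0)$ (Lemma~\ref{lem_rad_bd}). Apply it to the shifted sequence $a_j=\bar r_{\lfloor T/2\rfloor+j}$, with $\bar r_{T+1}:=\max\{\bar r_T,\|x_T-x_0\|\}\le D$.
\end{enumerate}
This gives $\min_k\bar r_{k+1}/\sum_{i\le k}\bar r_i\le\frac2T(D/r)^{2/T}\log(eD/r)$ (Lemma~\ref{lem_rad_bd_tail}), and hence exactly the stated second term.
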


Theorem~\ref{thm_distance_adaptive} gives an \(O(\log T/\sqrt T)\) stationarity guarantee for smooth non-convex objectives. The trajectory bound \(D\) appears only in the analysis: the update rule uses only the observed radius \(\bar r_k\). The dependence on the initial radius \(r=\bar r_0\) is standard in distance-adaptive methods and reflects the cost of starting from a user-specified lower scale.

The non-convex result above gives stationarity but not an objective-gap rate. We next turn to star-convex objectives, where the global geometry is strong enough to prove objective-gap guarantees while avoiding bounded-domain and clipping-domain assumptions.
\section{Scale-Calibrated Muon for Star-Convex Functions}
\label{sec:scale_calibrated_muon}

SC-Muon is a star-convex scale rule that is distance-free at the algorithmic level: it does not use the distance to a minimizer, the diameter of a prescribed clipping domain, a trajectory radius, or the level-set radius \(D_{\rm lev}\). The radius \(D_{\rm lev}\) appears only in the analysis through the bounded initial sublevel-set assumption below.

\begin{assumption}[Bounded initial sublevel set]
\label{assum:bounded_level}
Let \(x^\star\) be the global minimizer from Assumption~\ref{assum:star_convex}. There exists \(D_{\rm lev}<\infty\) such that
\[
    f(x)\le f(x_0)
    \quad\Longrightarrow\quad
    \|x-x^\star\|\le D_{\rm lev}.
\]
\end{assumption}

Assumption~\ref{assum:bounded_level} is a standard initial-level-set localization condition in descent, trust-region, and regularized Newton-type analyses~\citep{conn2000trust,nesterov2008accelerating,doikov2024super}. The descent inequality below implies that Algorithm~\ref{alg:scale_calibrated_muon} is monotone. The formal proof is given in Appendix~\ref{app:sc_muon}. Hence its iterates remain in this set. The assumption is therefore local to the trajectory and weaker than assuming a bounded clipping domain.

Let \(g_k:=\nabla f(x_k)\). The method maintains the same exponential moving average as Muon,
\[
    m_{k+1}=(1-\alpha)m_k+\alpha g_k,
    \qquad
    \alpha\in(0,1),
\]
initialized by \(m_0=g_0\). The Muon direction is chosen by the linear minimization oracle
\[
    u_k\in\argmax_{\|u\|\le1}\langle m_{k+1},u\rangle,
    \qquad
    \langle m_{k+1},u_k\rangle=\|m_{k+1}\|_*.
\]
Because \(m_{k+1}\) may be stale, we certify its alignment with the current gradient:
\[
    \langle g_k,u_k\rangle
    =
    \langle m_{k+1},u_k\rangle
    +
    \langle g_k-m_{k+1},u_k\rangle
    \ge
    \|m_{k+1}\|_*-\|g_k-m_{k+1}\|_*.
\]
We define
\[
    e_k:=\|g_k-m_{k+1}\|_*,
    \qquad
    a_k:=\left(\|m_{k+1}\|_*-e_k\right)_+,
\]
and set
\[
    \eta_k:=\frac{a_k}{L},
    \qquad
    x_{k+1}=x_k-\eta_k u_k.
\]
If \(a_k>0\), then the update direction \(-u_k\) is certified as a descent direction for the current gradient. If the momentum direction is not certified, the positive part reduces the step to zero. By \(L\)-smoothness,
\[
    f(x_{k+1})
    \le
    f(x_k)
    -
    \frac{a_k^2}{2L},
\]
so the method is monotone and all iterates remain in the initial sublevel set.

\begin{algorithm}[t]
\caption{Scale-Calibrated Muon}
\label{alg:scale_calibrated_muon}
\begin{algorithmic}[1]
\REQUIRE Initial point \(x_0\in\mathcal E\), smoothness constant \(L\), momentum parameter \(\alpha\in(0,1)\)
\STATE \(g_0\gets\nabla f(x_0)\), \(m_0\gets g_0\)
\FOR{\(k=0,1,\dots,T-1\)}
    \STATE \(g_k\gets\nabla f(x_k)\)
    \STATE \(m_{k+1}\gets(1-\alpha)m_k+\alpha g_k\)
    \STATE \(e_k\gets\|g_k-m_{k+1}\|_*\)
    \STATE Choose \(u_k\in\argmax_{\|u\|\le1}\langle m_{k+1},u\rangle\)
    \STATE \(a_k\gets\left(\|m_{k+1}\|_*-e_k\right)_+\)
    \STATE \(\eta_k\gets a_k/L\)
    \STATE \(x_{k+1}\gets x_k-\eta_k u_k\)
\ENDFOR
\end{algorithmic}
\end{algorithm}

\begin{theorem}[Last-iterate rate for Scale-Calibrated Muon]
\label{thm:certified_rate}
Let Assumptions~\ref{assum:smoothness}, \ref{assum:star_convex}, and \ref{assum:bounded_level} hold, and consider Algorithm~\ref{alg:scale_calibrated_muon}. Then, for all \(T\ge1\),
\[
    f(x_T)-f(x^\star)
    \le
    K_\alpha\frac{LD_{\rm lev}^2}{T},
\]
where \(K_\alpha>0\) depends only on the momentum parameter \(\alpha\).
\end{theorem}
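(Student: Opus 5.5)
The plan is to turn the monotone descent inequality $f(x_{k+1})\le f(x_k)-a_k^2/(2L)$ into a recursion of the form $\delta_{k+1}\le \delta_k-c_\alpha\,\delta_k^2/(LD_{\rm lev}^2)$, up to controllable error terms, where $\delta_k:=f(x_k)-f(x^\star)$. The first ingredient is star-convexity (Assumption~\ref{assum:star_convex}) combined with the bounded sublevel set (Assumption~\ref{assum:bounded_level}). Since the method is monotone, every iterate satisfies $f(x_k)\le f(x_0)$, hence $\|x_k-x^\star\|\le D_{\rm lev}$ and
\[
\delta_k\le\langle g_k,x_k-x^\star\rangle\le D_{\rm lev}\|g_k\|_* .
\]
The real work is to relate $\|g_k\|_*$ to the certificates $a_j$ that actually drive the descent.

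\textbf{Step 1: a lower bound on $a_k$.} First I bound the tracking error $e_k$. From $m_{k+1}-g_k=(1-\alpha)(m_k-g_k)=(1-\alpha)\bigl[(m_k-g_{k-1})+(g_{k-1}-g_k)\bigr]$ and Assumption~\ref{assum:smoothness}, we get $\|g_{k-1}-g_k\|_*\le L\eta_{k-1}=a_{k-1}$. This gives
\[
e_k\le(1-\alpha)(e_{k-1}+a_{k-1}),\qquad e_0=0,
\]
which unrolls to $e_k\le\sum_{j<k}(1-\alpha)^{k-j}a_j$. Next, $\|m_{k+1}\|_*\ge\|g_k\|_*-e_k$, so $a_k\ge\|g_k\|_*-2e_k$. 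Combining these,
\[
\|g_k\|_*\le a_k+2\sum_{j<k}(1-\alpha)^{k-j}a_j .
\]
Applying Cauchy--Schwarz with geometric weights then yields $\|g_k\|_*^2\le C_\alpha\sum_{j\le k}(1-\alpha)^{(k-j)/2}a_j^2$, with $C_\alpha=O(1/\alpha)$.

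\textbf{Step 2: converting to a rate.} The difficulty is that $\|g_k\|_*$ is controlled by past certificates $a_j$ rather than the current one, so the one-step inequality is not directly of the form $\delta_{k+1}\le\delta_k-c\delta_k^2$. I would handle this by strong induction on the hypothesis $\delta_k\le K LD_{\rm lev}^2/(k+1)$. The trivial bound $\delta_0\le LD_{\rm lev}^2/2$, which follows from smoothness at $x^\star$, covers small $k$.

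For the inductive step I work on a window $j\in[\lceil k/2\rceil,k]$. On this window, telescoping the descent inequality gives
\[
\sum_{j\in\text{window}}a_j^2\le 2L(\delta_{\lceil k/2\rceil}-\delta_{k+1}).
\]
Older terms $j<k/2$ are bounded by $a_j^2\le 2L\delta_j\le 4KL^2D_{\rm lev}^2/(j+1)$, again by the induction hypothesis. Their geometric weights $(1-\alpha)^{(k-j)/2}$ make their total contribution at most $O_\alpha\bigl((1-\alpha)^{k/4}\bigr)$ times $KL^2D_{\rm lev}^2$, which is $O_\alpha(KL^2D_{\rm lev}^2/k^2)$ and hence negligible.

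Summing $\delta_j^2\le D_{\rm lev}^2\|g_j\|_*^2$ over the window and using that $\delta$ is monotone, I obtain
\[
\tfrac{k}{2}\,\delta_{k+1}^2\le C'_\alpha LD_{\rm lev}^2\,\delta_{\lceil k/2\rceil}+(\text{small}).
\]
Plugging in $\delta_{\lceil k/2\rceil}\le 2KLD_{\rm lev}^2/k$ gives $\delta_{k+1}\lesssim\sqrt{C'_\alpha K}\,LD_{\rm lev}^2/k$. Choosing $K_\alpha\gtrsim C'_\alpha$ large enough then closes the induction, since $\sqrt{C'_\alpha K}\le K/4$.

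\textbf{Main obstacle.} I expect the main difficulty to be this last bookkeeping step: making the stale-momentum tail and the window argument close with a constant depending only on $\alpha$, and not picking up a $\log T$ factor. If the square-root closing argument turns out to be too loose, my fallback is a Lyapunov potential $\Phi_k=\delta_k+\lambda_\alpha\,e_k^2/L$. The recursion for $e_k$ lets the potential absorb the stale momentum error, which should give a one-step inequality $\Phi_{k+1}\le\Phi_k-c_\alpha\Phi_k^2/(LD_{\rm lev}^2)$. The standard $1/\Phi$ telescoping would then give the $O(1/T)$ last-iterate bound.
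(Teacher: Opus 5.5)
Your main route works and differs from the paper's, but one estimate in Step~2 is wrong as stated. When you sum $\delta_j^2\le D_{\rm lev}^2\|g_j\|_*^2$ over the window $j\in[s,k]$, $s=\lceil k/2\rceil$, an old certificate $a_i$ with $i<s$ enters with weight $(1-\alpha)^{(j-i)/2}$, not $(1-\alpha)^{(k-i)/2}$. For $j=s$ and $i=s-1$ this weight is $(1-\alpha)^{1/2}$, so old indices near the window boundary are not suppressed. The correct bound on their contribution to $\sum_{j=s}^{k}\|g_j\|_*^2$ is
\[
C_\alpha\sum_{i<s}a_i^2\sum_{j=s}^{k}(1-\alpha)^{(j-i)/2}
\le
\frac{C_\alpha}{1-\sqrt{1-\alpha}}\sum_{i<s}(1-\alpha)^{(s-i)/2}\,\frac{2KL^2D_{\rm lev}^2}{i+1}
=O_\alpha\!\left(\frac{KL^2D_{\rm lev}^2}{k}\right).
\]
After multiplying by $D_{\rm lev}^2$, this has the same order $KL^2D_{\rm lev}^4/k$ as your main term $C'_\alpha LD_{\rm lev}^2\,\delta_s$; it is not $O(1/k^2)$. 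The induction still survives. Both pieces are linear in $K$ and of order $1/k$, so you still get $\delta_{k+1}\le\sqrt{c_\alpha K}\,LD_{\rm lev}^2/k$, the square-root closing goes through, and no $\log T$ appears. A cleaner fix is to telescope the descent inequality over $[\lceil k/4\rceil,k]$ while summing $\delta_j^2$ only over $[\lceil k/2\rceil,k]$. Then every remaining old index is more than $k/4$ away from the window. Using only $\sum_i a_i^2\le 2L\delta_0\le L^2D_{\rm lev}^2$, these indices contribute at most $(1-\alpha)^{k/8}C_\alpha L^2D_{\rm lev}^2/(1-\sqrt{1-\alpha})$, which is genuinely negligible.

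The paper's proof is exactly your fallback. It sets $\Phi_k=\Delta_k+\frac{A}{L}e_k^2$ with $A=\gamma/(16q^2)$, $q=1-\alpha$, $\gamma=1-q^2$. Combining descent with the same recursion $e_{k+1}\le q(e_k+a_k)$ that you derive gives $\Phi_{k+1}\le\Phi_k-\frac{\chi_\alpha}{L}(a_k^2+e_k^2)$. It then shows $\Phi_k\le(\sqrt5+2A)D_{\rm lev}\sqrt{a_k^2+e_k^2}$. The gap part is your inequality $\Delta_k\le D_{\rm lev}(a_k+2e_k)$. The quadratic term $e_k^2/L$ is linearized through the a priori bound $e_k\le 2LD_{\rm lev}$, which follows from $\nabla f(x^\star)=0$, smoothness, and the fact that $m_{k+1}$ is a convex combination of gradients at points of the sublevel set. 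Your fallback sketch leaves this ingredient implicit. The closed recursion $\Phi_{k+1}\le\Phi_k-\theta_\alpha\Phi_k^2$, with $\theta_\alpha=\chi_\alpha/((\sqrt5+2A)^2LD_{\rm lev}^2)$, then gives $\Phi_T\le(\Phi_0^{-1}+\theta_\alpha T)^{-1}$ by telescoping $1/\Phi_k$. There is no induction, no windowing, no bound on $\Delta_0$, and the constant $K_\alpha=(\sqrt5+2A)^2/\chi_\alpha$ is explicit. Your route avoids designing a potential and never needs a uniform bound on $e_k$ or $\|m_{k+1}\|_*$; it uses only $\Delta_0\le LD_{\rm lev}^2/2$ at the base case. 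It handles stale momentum by unrolling $e_k$ and amortizing it against telescoped descent. The price is heavier bookkeeping and a constant that comes out of the square-root closing rather than in closed form.
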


The proof is given in Appendix~\ref{app:sc_muon}. It uses a Lyapunov function combining the objective gap with the squared momentum-tracking error, which allows the stale momentum error to be amortized over time.

DF-Muon, introduced next, uses a majorized scalar search over the trust-region radius and does not require the bounded initial sublevel-set assumption.
\section{Distance-Free Muon for Star-Convex Functions}
\label{sec:df_muon}

DF-Muon is the star-convex variant that removes the distance-to-solution input without assuming bounded iterates, bounded gradients, a bounded domain, or a bounded initial sublevel set. It chooses the trust-region radius by a regularized one-dimensional search on a smoothness majorant, rather than by evaluating the true objective along the ray.

Muon-type trust-region analyses typically require a radius proportional to the unknown distance from the initialization to a global minimizer~\citep{kovalev2025understanding}. Let
\[
    D:=\|x_0-x^\star\|.
\]
Distance-Free Muon does not take \(D\) as an input. Instead, it uses three components: a one-dimensional lower certificate for \(D\), the Muon linear-minimization direction, and a majorized scalar search that competes with the hidden feasible radius \(R=D\).

\paragraph{Distance certificate.}
The method maintains a scalar proxy \(d_k\), inspired by the D-adaptation principle of constructing lower certificates for the unknown distance-to-solution \citep{defazio2023learning}. For arbitrary weights \(\omega_k\ge0\), let
\begin{equation}
\label{eq:df_proxy_update}
\begin{gathered}
    S_{k+1}=S_k+\omega_k g_k,\qquad
    B_{k+1}=B_k-\omega_k\langle g_k,x_k-x_0\rangle,\\
    \widehat d_{k+1}
    =
    \begin{cases}
    [B_{k+1}]_+/\|S_{k+1}\|_*,
    & \text{if }\|S_{k+1}\|_*>0,\\
    0,
    & \text{if }\|S_{k+1}\|_*=0,
    \end{cases}
    \qquad
    d_{k+1}=\max\{d_k,\widehat d_{k+1}\}.
\end{gathered}
\end{equation}
As shown in Appendix~\ref{app:df_muon}, if \(0\le d_0\le D\), then \(d_k\le D\) for all \(k\). Unlike standard D-adaptation, our proof does not require \(d_k\) to recover \(D\). The proxy only regularizes the scalar radius search below.

\paragraph{Muon direction and recentered ray.}
Given exponential momentum
\[
    m_{k+1}=(1-\alpha)m_k+\alpha g_k,
\]
Distance-Free Muon keeps the Muon direction
\begin{equation}
\label{eq:df_muon_direction}
    s_k\in\operatorname*{arg\,min}_{\|s\|\le1}
    \langle m_{k+1},s\rangle .
\end{equation}
In the spectral--nuclear matrix geometry, this is exactly the orthogonalized momentum direction used by Muon-type methods. The trust-region path is recentered toward the initialization:
\begin{equation}
\label{eq:df_recentered_ray}
    c_k:=x_0+(1-\beta)(x_k-x_0),
    \qquad
    z_k(R):=c_k+\beta R s_k .
\end{equation}
Equivalently, with \(y_k:=x_k-x_0\),
\[
    z_k(R)-x_k=\beta(Rs_k-y_k),
    \qquad
    z_k(R)-x_0=(1-\beta)y_k+\beta R s_k .
\]

\paragraph{Majorized scalar search.}
Instead of setting \(R=D\), which is unknown, the method chooses \(R_k\) by minimizing a regularized smoothness majorant along the recentered ray. Define
\begin{equation}
\label{eq:df_majorant}
\begin{aligned}
    \mathcal Q_k(R)
    :=
    f(x_k)
    &+
    \beta\langle g_k,R s_k-y_k\rangle
    +
    \frac{L\beta^2}{2}\|R s_k-y_k\|^2 \\
    &+
    M\beta L\|(1-\beta)y_k+\beta R s_k\|^2
    +
    \rho L\beta^2\|R s_k-y_k\|^2 \\
    &+
    \frac{\lambda L\beta^2}{2}(R-d_{k+1})^2 .
\end{aligned}
\end{equation}
Then
\begin{equation}
\label{eq:df_radius_search}
    R_k\in\operatorname*{arg\,min}_{R\ge0}\mathcal Q_k(R),
    \qquad
    x_{k+1}=z_k(R_k).
\end{equation}
The scalar objective \(\mathcal Q_k\) is convex in \(R\). It is a linear term plus squared norms of affine functions of \(R\), together with a quadratic regularizer. Although \(D\) is unknown to the algorithm, the search is over all \(R\ge0\). The proof can therefore compare the chosen radius to the hidden feasible radius \(R=D\), recovering the scale used in fixed-radius trust-region analyses without passing \(D\) to the method.

\begin{algorithm}[H]
\caption{Distance-Free Muon}
\label{alg:pf_tr_muon}
\begin{algorithmic}[1]
\REQUIRE Initial point \(x_0\in\mathcal E\), horizon \(T\), initial proxy \(d_0=0\), \(\alpha,\beta\in(0,1)\), \(\rho,\lambda,M>0\)
\STATE \(m_0\gets\nabla f(x_0)\), \(S_0\gets0\), \(B_0\gets0\)
\FOR{\(k=0,1,\ldots,T-1\)}
    \STATE \(g_k\gets\nabla f(x_k)\)
    \STATE Choose \(\omega_k\ge0\) and update \(d_{k+1}\) by \eqref{eq:df_proxy_update}
    \STATE Update \(m_{k+1}\) and choose \(s_k\) by \eqref{eq:df_muon_direction}
    \STATE Define \(c_k,z_k(R)\) by \eqref{eq:df_recentered_ray}
    \STATE Choose \(R_k\) by \eqref{eq:df_radius_search}
    \STATE \(x_{k+1}\gets z_k(R_k)\)
\ENDFOR
\STATE \textbf{return} \(x_T\)
\end{algorithmic}
\end{algorithm}

\begin{theorem}[Last-iterate convergence of Distance-Free Muon]
\label{thm:df_muon}
Let Assumptions~\ref{assum:smoothness} and~\ref{assum:star_convex} hold. Let Algorithm~\ref{alg:pf_tr_muon} be run with \(d_0=0\), or more generally with any \(0\le d_0\le D\), where \(D=\|x_0-x^\star\|\). Suppose \(M\ge 2(1+2\rho)\), \(\alpha,\beta\in(0,1)\), and \(\alpha>\beta/2\). Then, for every \(T\ge1\),
\[
\begin{aligned}
    f(x_T)-f(x^\star)
    \le
    \left(1-\frac{\beta}{2}\right)^T
    \bigl(f(x_0)-f(x^\star)\bigr)
    +
    C_{\alpha,\beta,\rho,\lambda,M}L\beta D^2,
\end{aligned}
\]
where
\[
    C_{\alpha,\beta,\rho,\lambda,M}
    :=
    2C_0+
    \frac{4(1-\alpha)^2}{\rho(\alpha-\beta/2)^2},
    \qquad
    C_0:=1+2\rho+\frac{\lambda}{2}+M .
\]
Consequently, choosing
\[
    \beta=\min\left\{\alpha,\frac{2\log(T+1)}{T}\right\}
\]
gives, whenever \(2\log(T+1)/T\le \alpha\),
\[
    f(x_T)-f(x^\star)
    =
    \widetilde O\!\left(
        \frac{L\|x_0-x^\star\|^2}{T}
    \right),
\]
where the hidden constants depend only on \(\alpha,\rho,\lambda,M\), but not on \(D\).
\end{theorem}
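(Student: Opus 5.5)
The plan is to establish a one-step contraction for a Lyapunov function that combines the objective gap with the momentum-tracking error. The comparison radius is the hidden choice $R=D$, and it is admissible because $R_k$ minimizes $\mathcal Q_k$ over all $R\ge0$.

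\textbf{Step 1: $\mathcal Q_k$ majorizes $f$.} Since $z_k(R)-x_k=\beta(Rs_k-y_k)$, Assumption~\ref{assum:smoothness} gives
\[
f(z_k(R))\le f(x_k)+\beta\langle g_k,Rs_k-y_k\rangle+\tfrac{L\beta^2}{2}\|Rs_k-y_k\|^2 .
\]
The remaining terms of $\mathcal Q_k$ are nonnegative, so
\[
f(x_{k+1})+\Phi_k(R_k)\le \mathcal Q_k(R_k)\le \mathcal Q_k(D),
\]
where $\Phi_k$ collects those nonnegative terms. I will simply keep $\Phi_k(R_k)$ on the left as a useful negative contribution. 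In particular, it provides $-M\beta L\|x_{k+1}-x_0\|^2$ and $-\rho L\beta^2\|x_{k+1}-x_k\|^2/\beta^2\cdot\beta^2$, i.e.\ it controls the movement $\|x_{k+1}-x_k\|^2$.

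\textbf{Step 2: evaluate $\mathcal Q_k(D)$.} For the linear term, split $g_k=m_{k+1}+(g_k-m_{k+1})$. By the LMO, $\langle m_{k+1},Ds_k\rangle=-D\|m_{k+1}\|_*\le\langle m_{k+1},x^\star-x_0\rangle$ since $\|x^\star-x_0\|=D$. Hence
\[
\langle g_k,Ds_k-y_k\rangle\le\langle g_k,x^\star-x_k\rangle+2D\|g_k-m_{k+1}\|_*\le-(f(x_k)-f^\star)+2De_k ,
\]
using Assumption~\ref{assum:star_convex} and $\|Ds_k-(x^\star-x_0)\|\le2D$, with $e_k:=\|g_k-m_{k+1}\|_*$. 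The quadratic terms are bounded by $\|Ds_k-y_k\|^2\le2D^2+2\|y_k\|^2$ and $\|(1-\beta)y_k+\beta Ds_k\|^2\le(1-\beta)\|y_k\|^2+\beta D^2$ (convexity of the squared norm). The regularizer satisfies $(D-d_{k+1})^2\le D^2$ once I prove $0\le d_{k+1}\le D$. That follows because $B_{k+1}=\sum_i\omega_i\langle g_i,x_0-x_i\rangle\le\sum_i\omega_i\langle g_i,x_0-x^\star\rangle\le D\|S_{k+1}\|_*$, again by star-convexity, since $f(x_i)\ge f^\star$.

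\textbf{Step 3: Lyapunov recursion.} The $\|y_k\|^2$ terms from Step 2 (with coefficients $L\beta^2(1+2\rho)$ and $M\beta L(1-\beta)$) must be absorbed by the carried term $M\beta L\|y_{k+1}\|^2$ from $\Phi_k(R_k)$. The condition $M\ge2(1+2\rho)$ is exactly what makes the coefficients telescope. The plan is therefore to set
\[
V_k=f(x_k)-f^\star+M\beta L\|y_k\|^2+c\,\tfrac{L\beta}{\rho}\cdot\tfrac{e_k^2}{L^2}\cdot L^2
\]
and to handle $2\beta De_k\le \tfrac{\beta}{?}$ via Young's inequality, $2\beta De_k\le\epsilon\beta e_k^2/L+\beta LD^2/\epsilon$. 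For the momentum error, I use
\[
g_{k+1}-m_{k+2}=(1-\alpha)(g_{k+1}-m_{k+1})=(1-\alpha)\bigl(g_k-m_{k+1}+(g_{k+1}-g_k)\bigr),
\]
so $e_{k+1}\le(1-\alpha)(e_k+L\|x_{k+1}-x_k\|)$. The movement term is paid for by the $\rho L\|x_{k+1}-x_k\|^2$ contribution in $\Phi_k(R_k)$. Squaring this bound with a weight $(1+\gamma)$ and choosing the Lyapunov weight proportional to $(1-\alpha)^2/(\rho(\alpha-\beta/2)^2)$ gives a contraction of the error term at rate $1-\beta/2$, provided $\alpha>\beta/2$. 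This is the source of the second constant in $C$.

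Combining, I expect $V_{k+1}\le(1-\beta/2)V_k+C\,L\beta^2D^2$ (up to factor bookkeeping). Unrolling gives $(1-\beta/2)^TV_0+2CL\beta D^2$, with $V_0=f(x_0)-f^\star$ because $y_0=0$ and $e_0=0$. The stated corollary then follows by plugging in $\beta=2\log(T+1)/T$, since $(1-\beta/2)^T\le1/(T+1)$.

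\textbf{Main obstacle.} The hard part is Step 3. The coefficient on the gap is only $1-\beta$ from star-convexity, and this must be reconciled with absorbing the $\|y_k\|^2$ terms and the error terms simultaneously into a $(1-\beta/2)$-contraction. Verifying the precise constants there, especially the interplay of $M$, $\rho$, and $\alpha-\beta/2$, is where most of the care will be needed.
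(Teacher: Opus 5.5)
Your Steps 1--2 coincide with the paper. The certificate $d_{k+1}\le D$, the comparison $\mathcal Q_k(R_k)\le\mathcal Q_k(D)$, the LMO/star-convexity bound on the linear term, and $M\ge 2(1+2\rho)$ give exactly the paper's one-step inequality. With your $e_k=\|g_k-m_{k+1}\|_*$ it reads
\[
\mathcal L_{k+1}+\rho L\|x_{k+1}-x_k\|^2\le r\,\mathcal L_k+C_0L\beta^2D^2+2\beta D e_k,
\qquad
\mathcal L_k:=f(x_k)-f^\star+M\beta L\|y_k\|^2,\quad r:=1-\tfrac{\beta}{2}.
\]
The factor $1-\beta$ on the gap is not an obstacle; it is simply weakened to $r$.

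From here the paper does not put $e_k^2$ into a potential. It unrolls with weights $r^{T-1-k}$. It then bounds $\sum_k r^{T-1-k}e_k$ by $\frac{qL}{r-q}\sqrt{2/\beta}\,A_T^{1/2}$, using a convolution estimate (this needs $q:=1-\alpha<r$) and Cauchy--Schwarz, where $A_T=\sum_{i=1}^T r^{T-i}\|x_i-x_{i-1}\|^2$. It closes with a single Young inequality against the accumulated $\rho LA_T$.

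Your per-step augmented potential is a genuinely different route, and it can be made to work, but not with the scalings you wrote.
\begin{itemize}
\item The weight on $e_k^2$ must be $c/L$ with $c$ independent of $\beta$. Your term $\frac{cL\beta}{\rho}e_k^2$ has the wrong units and an extra factor $\beta$.
\item In your split $2\beta De_k\le\epsilon\beta e_k^2/L+\beta LD^2/\epsilon$ with fixed $\epsilon$, the per-step residual is $\Theta(\beta LD^2)$. Summed against $\sum_j r^j\le 2/\beta$, it gives $\Theta(LD^2)$ and the rate is lost. You need $2\beta De_k\le\frac{\delta}{L}e_k^2+\frac{L\beta^2D^2}{\delta}$ with $\delta$ independent of $\beta$.
\item The weight is capped by the movement budget $\rho L\|x_{k+1}-x_k\|^2$, so it is at most of order $\rho/(1-\alpha)^2$. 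The expression $(1-\alpha)^2/(\rho(\alpha-\beta/2)^2)$ is the shape of the residual constant, not of the weight.
\end{itemize}

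Here is a choice that works. Applying $(a+b)^2\le(1+\gamma)a^2+(1+\gamma^{-1})b^2$ with $\gamma=\alpha/q$ to $e_{k+1}\le q(e_k+L\|x_{k+1}-x_k\|)$ gives $e_{k+1}^2\le qe_k^2+\frac{q^2}{\alpha}L^2\|x_{k+1}-x_k\|^2$. Take $V_k:=\mathcal L_k+\frac{c}{L}e_k^2$ with $c=\rho\alpha/q^2$, so the movement term is paid exactly. Set $\delta:=c(r-q)=\rho\alpha(\alpha-\beta/2)/q^2>0$. Using $V_0=f(x_0)-f^\star$ (since $y_0=0$ and $e_0=0$), you get
\[
V_{k+1}\le rV_k+\Bigl(C_0+\frac{1}{\delta}\Bigr)L\beta^2D^2,
\qquad
f(x_T)-f^\star\le r^T\bigl(f(x_0)-f^\star\bigr)+\Bigl[2C_0+\frac{2(1-\alpha)^2}{\rho\alpha(\alpha-\beta/2)}\Bigr]L\beta D^2 .
\]
The bracket is at most $C_{\alpha,\beta,\rho,\lambda,M}$, so this proves the first claim with a slightly better constant. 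Tuning $\gamma$ shows your route only needs $(1-\alpha)^2<1-\beta/2$, which is weaker than $\alpha>\beta/2$. The paper's route, in exchange, needs no auxiliary potential weight and spends the movement budget once, globally.

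Your corollary also has a genuine omission. The bound $(1-\beta/2)^T\le 1/(T+1)$ leaves a term $(f(x_0)-f^\star)/(T+1)$. To express it through $LD^2$ you need
\[
f(x_0)-f^\star\le\langle\nabla f(x_0),x_0-x^\star\rangle\le\|\nabla f(x_0)-\nabla f(x^\star)\|_*D\le LD^2,
\]
which uses star-convexity, $\nabla f(x^\star)=0$, and smoothness. You also need $\beta\le\alpha$, hence $\alpha-\beta/2\ge\alpha/2$, so that the constant does not depend on $\beta$, and therefore not on $T$.
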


The theorem, proved in Appendix~\ref{app:df_muon}, shows that the unknown distance \(D=\|x_0-x^\star\|\) appears only in the analysis, not as an input to the algorithm. The result requires neither a bounded domain, bounded iterates, bounded gradients, nor a bounded initial sublevel set. The parameters \(\rho,\lambda,M\) are fixed regularization weights for the scalar search. They are not estimates of a problem-dependent scale. For example, one may take \(\rho=1\), \(\lambda=1\), and \(M=6\). Relative to \citet{kovalev2025understanding}, the method removes the distance-to-solution or boundedness parameter from the algorithm. Relative to an objective-value line search along the ray, the majorized search gives a convex one-dimensional subproblem.

\FloatBarrier
\section{Experimental Validation}
\label{sec:experiments}

AdamW is included as a standard reference optimizer. Since the experiments evaluate scalar adaptation for Muon-type normalized updates, the primary baseline is fixed Muon tuned by a learning-rate sweep. The adaptive Muon variants are therefore compared mainly against this tuned fixed-Muon baseline, rather than against a separately optimized AdamW baseline. DF-Muon is not tuned by an exhaustive task-specific search. We use small one-seed diagnostics to select simple stability or model coefficients, and otherwise keep the same practical majorized scalar rule across the experiments.

\subsection{GPT-124M on WikiText-103}
\label{subsec:gpt124m_wikitext103}

We evaluate the methods on GPT-124M/WikiText-103. We train a GPT-style decoder-only Transformer with approximately $124$M parameters for $5000$ optimizer steps on a single A100 GPU. We compare AdamW, a tuned fixed-Muon baseline, and the three adaptive Muon variants. The fixed-Muon learning rate is selected by a learning-rate sweep, which gives $\eta=0.005$ as the best fixed scale. For DA-Muon, we use the best cap from a separate cap sweep, $\eta_{\max}=0.01$. DF-Muon uses the practical Frobenius-proxy majorized scalar implementation described in Appendix~\ref{app:df_muon_implementation}. For GPT-124M/WikiText-103, we use the no-center variant selected by the one-seed majorized-model diagnostic in Appendix~\ref{app:gpt124m_wikitext103_diagnostics}; this removes the center penalty while keeping the same cap, floor, smoothing, grid, and refinement settings. SC-Muon uses the default adaptive settings and the same optimizer split and training schedule as the other Muon variants. Additional fixed-Muon learning-rate, DA-Muon cap, and observed-scale diagnostics are reported in Appendix~\ref{app:gpt124m_wikitext103_diagnostics}.

\begin{table}[t]
\centering
\caption{
GPT-style Transformer on WikiText-103 with approximately $124$M parameters. We report mean $\pm$ standard deviation over seeds $\{42,1337,2024\}$. The fixed Muon baseline uses the best learning rate from the sweep, $\eta=0.005$. DA-Muon uses the best cap from the cap sweep, $\eta_{\max}=0.01$. Runtime is reported relative to fixed Muon.
}
\label{tab:gpt124m_wikitext103_three_seeds}
\footnotesize
\setlength{\tabcolsep}{4pt}
\begin{tabular}{@{}lcccc@{}}
\toprule
\textbf{Method}
& \textbf{Train loss}
& \textbf{Val. loss}
& \textbf{Mean $\eta$}
& \textbf{Rel. time} \\
\midrule
AdamW
& $3.9964 \pm 0.0026$
& $3.9353 \pm 0.0143$
& $0.0010$
& $0.65\times$ \\
Best fixed Muon
& $3.7662 \pm 0.0029$
& $3.7166 \pm 0.0199$
& $0.0050$
& $1.00\times$ \\
DF-Muon
& $\mathbf{3.7192 \pm 0.0028}$
& $\mathbf{3.6714 \pm 0.0237}$
& $0.0300$
& $1.05\times$ \\
DA-Muon
& $3.7794 \pm 0.0017$
& $3.7273 \pm 0.0216$
& $0.0100$
& $1.03\times$ \\
SC-Muon
& $3.8589 \pm 0.0048$
& $3.8029 \pm 0.0180$
& $0.0263$
& $1.02\times$ \\
\bottomrule
\end{tabular}
\end{table}

\begin{figure}[t]
    \centering
    \includegraphics[width=0.92\linewidth]{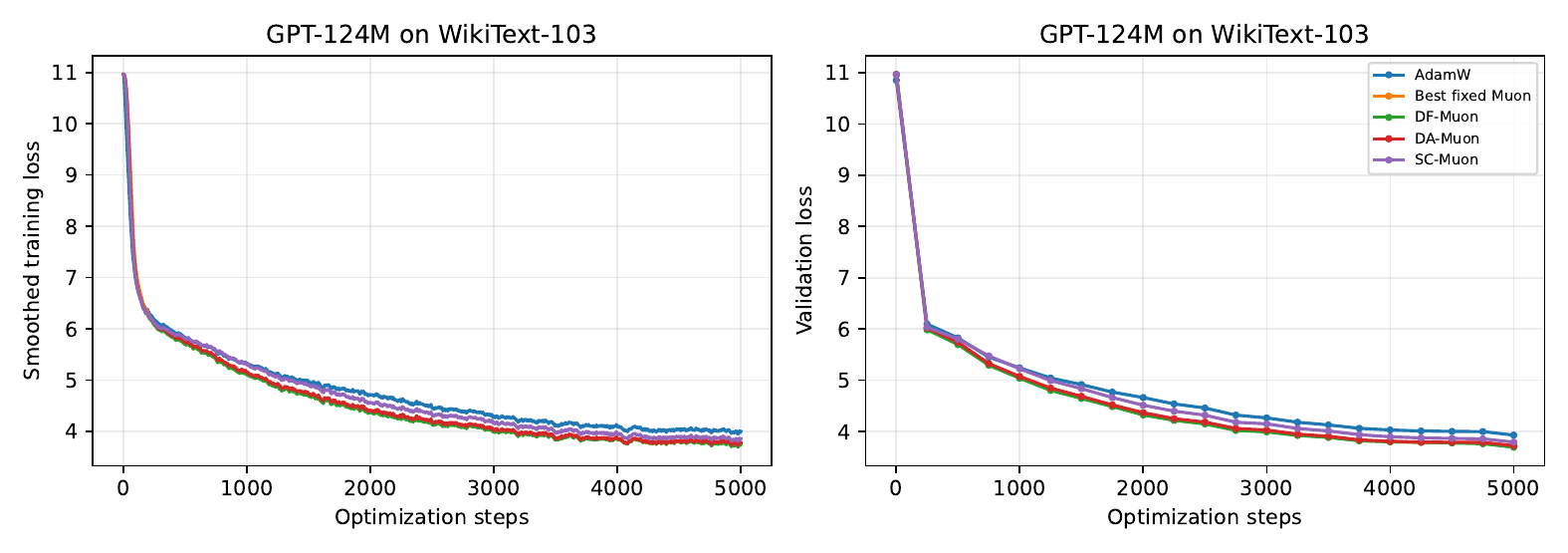}
    \caption{
    GPT-style Transformer training on WikiText-103 in a representative seed. We compare AdamW, the tuned fixed-Muon baseline, and the adaptive Muon variants. The curves are shown for representative training dynamics; the aggregate numbers in Table~\ref{tab:gpt124m_wikitext103_three_seeds} use the selected no-center DF-Muon variant.
    }
    \label{fig:gpt124m_wikitext103_train_val}
\end{figure}

DF-Muon gives the best mean training and validation loss on GPT-124M/WikiText-103. Against the tuned fixed-Muon baseline selected by the learning-rate sweep, DF-Muon improves mean validation loss from $3.7166$ to $3.6714$ and adds about $5\%$ runtime overhead. The selected no-center variant has mean base scale $0.0300$ over the final $20\%$ of training, matching its cap in this experiment. DA-Muon becomes competitive after cap tuning, but remains slightly worse than tuned fixed Muon on this benchmark. SC-Muon is stable but more conservative. These results support the practical role of the majorized DF-Muon scalar rule on this language-modeling task.

\subsection{Regularized ViT-Tiny on CIFAR-100}
\label{subsec:vit_cifar100_regularized}

We also evaluate the adaptive Muon scaling rules on image classification. We train a ViT-Tiny model on CIFAR-100, resizing images to \(224\times224\), using a regularized recipe with random resized crops, RandAugment, random erasing, label smoothing, and a common warmup--cosine schedule. All methods are run for \(100\) epochs over seeds \(\{42,1337,2024\}\). The fixed-Muon baseline uses \(\eta=0.001\), selected from a fixed-Muon learning-rate diagnostic. DF-Muon uses \(\eta_{\max}=0.01\), selected from a one-seed cap diagnostic under the same regularized ViT/CIFAR-100 setup. As in the language-modeling experiments, the Muon update is applied to matrix-valued parameters, while remaining parameters are handled by AdamW. The fixed-Muon learning-rate diagnostic, the DF-Muon cap diagnostic, and the observed effective-scale plots are reported in Appendix~\ref{app:vit_cifar100_regularized_diagnostics}.

DF-Muon uses the practical implementation described in Appendix~\ref{app:df_muon_implementation}. Its cap is selected by the diagnostic in Appendix~\ref{app:vit_cifar100_regularized_diagnostics}.

\begin{table}[t]
\centering
\caption{
Final regularized ViT-Tiny/CIFAR-100 experiment. CIFAR-100 images are resized to \(224\times224\). We report best validation cross-entropy and best top-1 accuracy over epochs. Runtime is relative to fixed Muon. The fixed Muon baseline uses the tuned learning rate \(\eta=0.001\), and DF-Muon uses \(\eta_{\max}=0.01\) selected from the cap diagnostic.
}
\label{tab:vit_cifar100_regularized}
\footnotesize
\setlength{\tabcolsep}{3.5pt}
\begin{tabular}{@{}lccccc@{}}
\toprule
\textbf{Method}
& \textbf{Best Val. CE}
& \textbf{Top-1 @ Best CE}
& \textbf{Best Top-1}
& \textbf{Mean \(\eta\)}
& \textbf{Rel. time} \\
\midrule
AdamW
& \(1.5561 \pm 0.0196\)
& \(60.04 \pm 0.85\)
& \(63.23 \pm 0.45\)
& \(0.0003\)
& \(0.83\times\) \\
Best fixed Muon
& \(\mathbf{1.3851 \pm 0.0131}\)
& \(64.15 \pm 1.28\)
& \(67.69 \pm 0.58\)
& \(0.0010\)
& \(1.00\times\) \\
DF-Muon
& \(1.3974 \pm 0.0132\)
& \(63.32 \pm 0.67\)
& \(\mathbf{67.93 \pm 0.26}\)
& \(0.0100\)
& \(1.05\times\) \\
DA-Muon
& \(1.5124 \pm 0.0209\)
& \(61.46 \pm 0.55\)
& \(63.38 \pm 0.52\)
& \(0.0100\)
& \(1.02\times\) \\
SC-Muon
& \(1.4392 \pm 0.0162\)
& \(63.19 \pm 0.51\)
& \(65.28 \pm 0.17\)
& \(0.0168\)
& \(1.02\times\) \\
\bottomrule
\end{tabular}
\end{table}

\begin{figure}[t]
    \centering
    \includegraphics[width=0.92\linewidth]{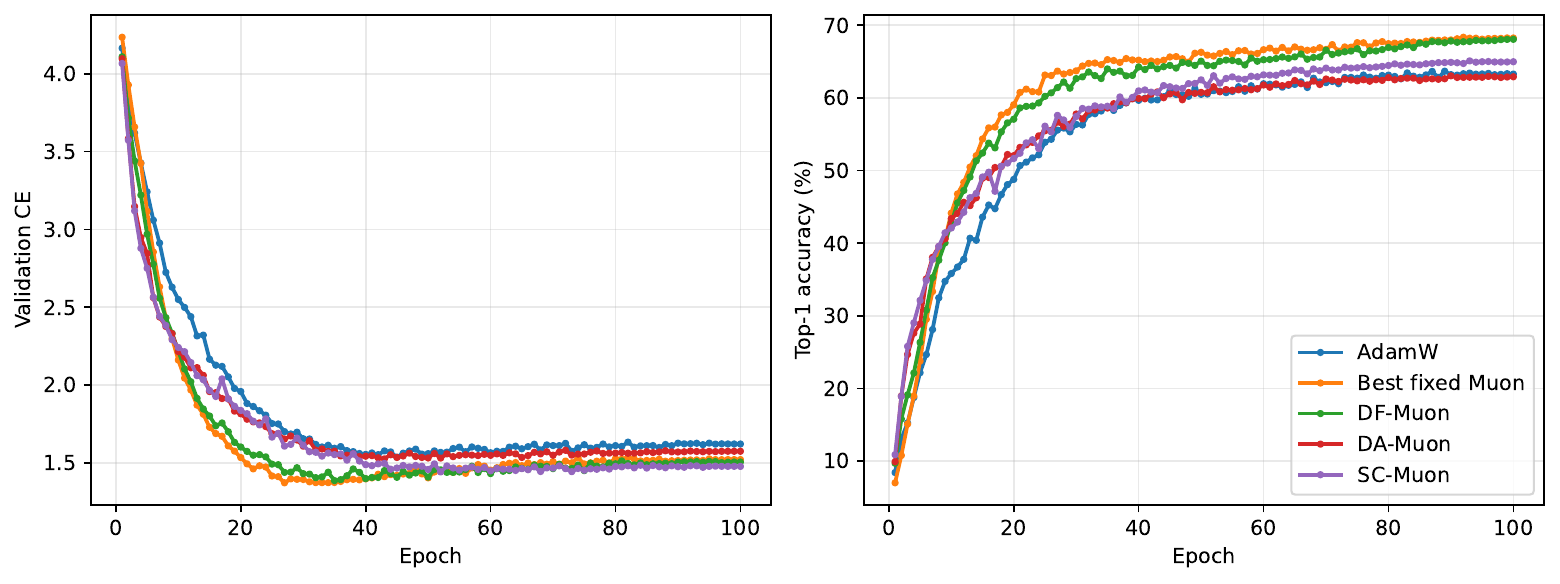}
    \caption{
    Regularized ViT-Tiny/CIFAR-100 training dynamics in a representative seed. The left panel shows validation cross-entropy, and the right panel shows top-1 accuracy. Table~\ref{tab:vit_cifar100_regularized} reports the aggregate over three seeds.
    }
    \label{fig:vit_cifar100_regularized_curves}
\end{figure}

On regularized ViT-Tiny/CIFAR-100, DF-Muon remains competitive with the tuned fixed-Muon baseline. Fixed Muon obtains the lowest best validation cross-entropy, while DF-Muon obtains the highest mean best top-1 accuracy and has only a \(5\%\) runtime overhead. DA-Muon and SC-Muon are stable under the regularized recipe but do not match the top-1 accuracy of DF-Muon or tuned fixed Muon. These results indicate that the DF-Muon scalar rule transfers beyond language modeling and can match a tuned fixed-Muon scale on a vision Transformer benchmark.

A complementary CIFAR-100/ResNet-32 diagnostic is reported in Appendix~\ref{app:cifar100_resnet32_diagnostics}.
\section{Related Work}
\label{sec:related_work}

\paragraph{Muon and LMO-based normalized optimization.}
Muon and related normalized optimizers are motivated by the empirical success of gradient orthogonalization in neural-network training~\citep{jordan2024muon,large2024scalable,liu2025muon,shah2025practical,wen2025fantastic}. The spectral-norm steepest-descent interpretation of orthogonalized updates was developed by \citet{bernstein2024old}, with earlier connections to stochastic and preconditioned spectral descent~\citep{carlson2015stochastic,carlson2015preconditioned,kovalev2018stochastic}. Early convergence analyses connected Muon to normalized SGD with momentum and spectral-norm constrained optimization~\citep{cutkosky2020momentum,li2025note,shen2025convergence,chen2025muon}. A key theoretical development is the non-Euclidean trust-region interpretation of \citet{kovalev2025understanding}, which shows that Muon-type updates can be viewed as linear-minimization-oracle steps in the spectral--nuclear matrix geometry. This connects Muon to the classical Frank--Wolfe/LMO framework~\citep{frank1956algorithm,jaggi2013revisiting} and to recent LMO-based optimizers such as Scion, Gluon, and related momentum variants~\citep{pethick2025scion,riabinin2025gluon,khirirat2025lmo}. Related work on generalized norm clipping and non-Euclidean smoothness further clarifies the role of normalized updates beyond Euclidean Lipschitz smoothness~\citep{pethick2025generalized}. These works primarily clarify or improve the geometry of the update direction. Our work is complementary: we keep the Muon/LMO direction and study how to choose the scalar trust-region radius multiplying that direction.

\paragraph{Parameter-free online learning and comparator adaptivity.}
A classical route to removing scale parameters comes from online learning. Unconstrained online linear learning and coin-betting methods obtain regret bounds that adapt to the unknown comparator norm~\citep{mcmahan2014unconstrained,orabona2016coin}. Subsequent work developed algorithms that require little prior information, handle unbounded online domains through artificial constraints or hints, and extend parameter-free guarantees to Banach spaces~\citep{cutkosky2017online,cutkosky2019artificial,cutkosky2018black}. Other refinements include scale-invariant online algorithms and comparator-norm/Lipschitz adaptivity~\citep{kempka2019adaptive,mhammedi2020lipschitz}. These methods can be converted to stochastic optimization by online-to-batch arguments, and coin-betting ideas have also led to practical learning-rate-free neural-network training methods~\citep{orabona2017training}. In contrast, \citet{carmon2022making} move beyond the online-to-batch abstraction by deriving an implicit step-size certificate for SGD and solving it by logarithmic bisection. Our work follows the same high-level goal of removing hidden distance and scale parameters, but in a different geometry: the update direction is produced by a Muon/LMO oracle, and the object to be calibrated is the scalar trust-region radius.

\paragraph{Distance-adaptive stochastic optimization.}
More recent parameter-free optimization methods estimate distance or scale quantities directly from the trajectory. DoG chooses the step size from the maximum distance traveled from initialization and accumulated gradient norms~\citep{ivgi2023dog}. DoWG refines this distance-over-gradients principle using a weighted normalization and obtains universal parameter-free guarantees for convex optimization~\citep{khaled2023dowg}. D-Adaptation estimates the unknown distance-to-solution scale online and provides learning-rate-free variants of standard optimizers~\citep{defazio2023learning}. DADA extends distance adaptation to dual averaging and obtains guarantees for broad classes of convex problems, including unbounded domains~\citep{moshtaghifar2025dada}. Our methods are inspired by this distance-adaptive philosophy, but apply it to Muon-type normalized trust-region updates rather than to raw-gradient or dual-averaging updates.

\paragraph{Star-convexity and Muon-type guarantees.}
Star-convexity is a structured non-convex condition that preserves a first-order inequality toward a global minimizer. It has been used as an analytic model for favorable non-convex geometry in neural-network optimization, including one-point convexity and star-convex path conditions~\citep{kleinberg2018alternative,zhou2019sgd}. The closest theoretical work to our star-convex results is \citet{kovalev2025understanding}, who proves non-convex and star-convex guarantees for non-Euclidean trust-region methods that recover Muon-type updates in the spectral--nuclear geometry. In the star-convex case, the resulting radius choices depend on distance or localization quantities, such as a radius proportional to the distance from the initialization to a minimizer or assumptions that localize the feasible region. Our Distance-Free Muon result keeps the Muon/LMO direction but chooses the scalar radius by a majorized scalar search, obtaining a last-iterate star-convex guarantee without assuming bounded iterates, bounded gradients, a bounded domain, or a bounded initial sublevel set.

\paragraph{Regularized model methods.}
The majorized scalar search in DF-Muon is also related in spirit to regularized model methods such as cubic regularization and adaptive regularization, although our subproblem is one-dimensional, first-order, and tied to the Muon/LMO direction~\citep{nesterov2006cubic,cartis2011adaptive1,cartis2011adaptive2}.
\section{Conclusion}
\label{sec:conclusion}

We studied adaptive scalar scaling for Muon-type normalized optimizers. The analysis separates the geometry-aware Muon direction from the scalar radius multiplying that direction. We gave three complementary scale rules: a trajectory-radius rule for smooth non-convex stationarity, a certificate-based rule for star-convex last-iterate convergence under a bounded initial sublevel set, and a distance-free star-convex rule that removes the distance-to-solution input from the algorithm. Experiments on language modeling and image classification show that these scale-selection rules remain useful in stochastic neural-network training: DF-Muon improves over tuned fixed-scale Muon on GPT-124M/WikiText-103, while the adaptive variants remain competitive across the vision diagnostics. Stochastic convergence theory and approximate majorized-search variants are natural directions for future work.

\bibliographystyle{apalike}
\bibliography{reference}

@article{kovalev2025understanding,
  title={Understanding gradient orthogonalization for deep learning via non-euclidean trust-region optimization},
  author={Kovalev, Dmitry},
  journal={arXiv preprint arXiv:2503.12645},
  year={2025}
}

@article{khaled2023dowg,
  title={DoWG unleashed: An efficient universal parameter-free gradient descent method},
  author={Khaled, Ahmed and Mishchenko, Konstantin and Jin, Chi},
  journal={Advances in Neural Information Processing Systems},
  volume={36},
  pages={6748--6769},
  year={2023}
}

@inproceedings{defazio2023learning,
  title={Learning-rate-free learning by d-adaptation},
  author={Defazio, Aaron and Mishchenko, Konstantin},
  booktitle={International Conference on Machine Learning},
  pages={7449--7479},
  year={2023},
  organization={PMLR}
}

@inproceedings{ivgi2023dog,
  title={DoG is SGD’s best friend: A parameter-free dynamic step size schedule},
  author={Ivgi, Maor and Hinder, Oliver and Carmon, Yair},
  booktitle={International Conference on Machine Learning},
  pages={14465--14499},
  year={2023},
  organization={PMLR}
}

@article{liu2023stochastic,
  title={Stochastic nonsmooth convex optimization with heavy-tailed noises: High-probability bound, in-expectation rate and initial distance adaptation},
  author={Liu, Zijian and Zhou, Zhengyuan},
  journal={arXiv preprint arXiv:2303.12277},
  year={2023}
}

@article{moshtaghifar2025dada,
  title={DADA: Dual Averaging with Distance Adaptation},
  author={Moshtaghifar, Mohammad and Rodomanov, Anton and Vankov, Daniil and Stich, Sebastian},
  journal={arXiv preprint arXiv:2501.10258},
  year={2025}
}

@article{chakraborty2026randomized,
  title={Randomized Feasibility Methods for Constrained Optimization with Adaptive Step Sizes},
  author={Chakraborty, Abhishek and Nedi{\'c}, Angelia},
  journal={arXiv preprint arXiv:2601.20076},
  year={2026}
}

@article{duchi2011adaptive,
  title={Adaptive subgradient methods for online learning and stochastic optimization.},
  author={Duchi, John and Hazan, Elad and Singer, Yoram},
  journal={Journal of machine learning research},
  volume={12},
  number={7},
  year={2011}
}

@article{kingma2015adam,
  title={Adam: A method for stochastic optimization},
  author={Kingma, Diederik P and Ba, Jimmy},
  journal={arXiv preprint arXiv:1412.6980},
  year={2014}
}

@article{loshchilov2019decoupled,
  title={Decoupled weight decay regularization},
  author={Loshchilov, Ilya and Hutter, Frank},
  journal={arXiv preprint arXiv:1711.05101},
  year={2017}
}

@article{jordan2024muon,
  title={Muon: An optimizer for hidden layers in neural networks, 2024},
  author={Jordan, Keller and Jin, Yuchen and Boza, Vlado and Jiacheng, You and Cesista, Franz and Newhouse, Laker and Bernstein, Jeremy},
  journal={URL https://kellerjordan.github.io/posts/muon/},
  volume={6},
  number={3},
  pages={4},
  year={2024}
}

@article{large2024scalable,
  title={Scalable optimization in the modular norm},
  author={Large, Tim and Liu, Yang and Huh, Minyoung and Bahng, Hyojin and Isola, Phillip and Bernstein, Jeremy},
  journal={Advances in Neural Information Processing Systems},
  volume={37},
  pages={73501--73548},
  year={2024}
}

@article{pethick2025scion,
  title={Training deep learning models with norm-constrained lmos},
  author={Pethick, Thomas and Xie, Wanyun and Antonakopoulos, Kimon and Zhu, Zhenyu and Silveti-Falls, Antonio and Cevher, Volkan},
  journal={arXiv preprint arXiv:2502.07529},
  year={2025}
}

@article{riabinin2025gluon,
  title={Gluon: Making muon \& scion great again!(bridging theory and practice of lmo-based optimizers for llms)},
  author={Riabinin, Artem and Shulgin, Egor and Gruntkowska, Kaja and Richt{\'a}rik, Peter},
  journal={arXiv preprint arXiv:2505.13416},
  year={2025}
}

@article{khirirat2025lmo,
  title={Better LMO-based Momentum Methods with Second-Order Information},
  author={Khirirat, Sarit and Sadiev, Abdurakhmon and Demidovich, Yury and Richt{\'a}rik, Peter},
  journal={arXiv preprint arXiv:2512.13227},
  year={2025}
}

@inproceedings{cutkosky2020momentum,
  title={Momentum improves normalized sgd},
  author={Cutkosky, Ashok and Mehta, Harsh},
  booktitle={International conference on machine learning},
  pages={2260--2268},
  year={2020},
  organization={PMLR}
}

@inproceedings{carmon2022making,
  title={Making SGD parameter-free},
  author={Carmon, Yair and Hinder, Oliver},
  booktitle={Conference on learning theory},
  pages={2360--2389},
  year={2022},
  organization={PMLR}
}

@article{liu2025muon,
  title={Muon is scalable for llm training},
  author={Liu, Jingyuan and Su, Jianlin and Yao, Xingcheng and Jiang, Zhejun and Lai, Guokun and Du, Yulun and Qin, Yidao and Xu, Weixin and Lu, Enzhe and Yan, Junjie and others},
  journal={arXiv preprint arXiv:2502.16982},
  year={2025}
}

@article{orabona2016coin,
  title={Coin betting and parameter-free online learning},
  author={Orabona, Francesco and P{\'a}l, D{\'a}vid},
  journal={Advances in Neural Information Processing Systems},
  volume={29},
  year={2016}
}

@inproceedings{cutkosky2018black,
  title={Black-box reductions for parameter-free online learning in banach spaces},
  author={Cutkosky, Ashok and Orabona, Francesco},
  booktitle={Conference On Learning Theory},
  pages={1493--1529},
  year={2018},
  organization={PMLR}
}

@article{bengio2000gradient,
  title={Gradient-based optimization of hyperparameters},
  author={Bengio, Yoshua},
  journal={Neural computation},
  volume={12},
  number={8},
  pages={1889--1900},
  year={2000},
  publisher={MIT Press One Rogers Street, Cambridge, MA 02142-1209, USA journals-info~…}
}

@incollection{feurer2019hyperparameter,
  title={Hyperparameter optimization},
  author={Feurer, Matthias and Hutter, Frank},
  booktitle={Automated machine learning: Methods, systems, challenges},
  pages={3--33},
  year={2019},
  publisher={Springer}
}

@inproceedings{kleinberg2018alternative,
  title={An alternative view: When does SGD escape local minima?},
  author={Kleinberg, Bobby and Li, Yuanzhi and Yuan, Yang},
  booktitle={International conference on machine learning},
  pages={2698--2707},
  year={2018},
  organization={PMLR}
}

@article{zhou2019sgd,
  title={Sgd converges to global minimum in deep learning via star-convex path},
  author={Zhou, Yi and Yang, Junjie and Zhang, Huishuai and Liang, Yingbin and Tarokh, Vahid},
  journal={arXiv preprint arXiv:1901.00451},
  year={2019}
}

@book{conn2000trust,
  title={Trust region methods},
  author={Conn, Andrew R and Gould, Nicholas IM and Toint, Philippe L},
  year={2000},
  publisher={SIAM}
}

@article{nesterov2008accelerating,
  title={Accelerating the cubic regularization of Newton’s method on convex problems},
  author={Nesterov, Yu},
  journal={Mathematical Programming},
  volume={112},
  number={1},
  pages={159--181},
  year={2008},
  publisher={Springer}
}

@article{doikov2024super,
  title={Super-universal regularized Newton method},
  author={Doikov, Nikita and Mishchenko, Konstantin and Nesterov, Yurii},
  journal={SIAM Journal on Optimization},
  volume={34},
  number={1},
  pages={27--56},
  year={2024},
  publisher={SIAM}
}

@inproceedings{mcmahan2014unconstrained,
  title={Unconstrained online linear learning in hilbert spaces: Minimax algorithms and normal approximations},
  author={McMahan, H Brendan and Orabona, Francesco},
  booktitle={Conference on Learning Theory},
  pages={1020--1039},
  year={2014},
  organization={PMLR}
}

@inproceedings{cutkosky2017online,
  title={Online learning without prior information},
  author={Cutkosky, Ashok and Boahen, Kwabena},
  booktitle={Conference on learning theory},
  pages={643--677},
  year={2017},
  organization={PMLR}
}

@inproceedings{cutkosky2019artificial,
  title={Artificial constraints and hints for unbounded online learning},
  author={Cutkosky, Ashok},
  booktitle={Conference on Learning Theory},
  pages={874--894},
  year={2019},
  organization={PMLR}
}

@article{orabona2017training,
  title={Training deep networks without learning rates through coin betting},
  author={Orabona, Francesco and Tommasi, Tatiana},
  journal={Advances in neural information processing systems},
  volume={30},
  year={2017}
}

@inproceedings{kempka2019adaptive,
  title={Adaptive scale-invariant online algorithms for learning linear models},
  author={Kempka, Michal and Kotlowski, Wojciech and Warmuth, Manfred K},
  booktitle={International conference on machine learning},
  pages={3321--3330},
  year={2019},
  organization={PMLR}
}

@inproceedings{mhammedi2020lipschitz,
  title={Lipschitz and comparator-norm adaptivity in online learning},
  author={Mhammedi, Zakaria and Koolen, Wouter M},
  booktitle={Conference on Learning Theory},
  pages={2858--2887},
  year={2020},
  organization={PMLR}
}

@article{shah2025practical,
  title={Practical efficiency of muon for pretraining},
  author={Shah, Ishaan and Polloreno, Anthony M and Stratos, Karl and Monk, Philip and Chaluvaraju, Adarsh and Hojel, Andrew and Ma, Andrew and Thomas, Anil and Tanwer, Ashish and Shah, Darsh J and others},
  journal={arXiv preprint arXiv:2505.02222},
  year={2025}
}

@article{wen2025fantastic,
  title={Fantastic pretraining optimizers and where to find them},
  author={Wen, Kaiyue and Hall, David and Ma, Tengyu and Liang, Percy},
  journal={arXiv preprint arXiv:2509.02046},
  year={2025}
}

@article{bernstein2024old,
  title={Old optimizer, new norm: An anthology},
  author={Bernstein, Jeremy and Newhouse, Laker},
  journal={arXiv preprint arXiv:2409.20325},
  year={2024}
}

@article{li2025note,
  title={A note on the convergence of muon and further},
  author={Li, Jiaxiang and Hong, Mingyi},
  journal={arXiv e-prints},
  pages={arXiv--2502},
  year={2025}
}

@article{shen2025convergence,
  title={On the convergence analysis of muon},
  author={Shen, Wei and Huang, Ruichuan and Huang, Minhui and Shen, Cong and Zhang, Jiawei},
  journal={arXiv preprint arXiv:2505.23737},
  year={2025}
}

@article{chen2025muon,
  title={Muon optimizes under spectral norm constraints},
  author={Chen, Lizhang and Li, Jonathan and Liu, Qiang},
  journal={arXiv preprint arXiv:2506.15054},
  year={2025}
}

@article{frank1956algorithm,
  title={An algorithm for quadratic programming},
  author={Frank, Marguerite and Wolfe, Philip},
  journal={Naval research logistics quarterly},
  volume={3},
  number={1-2},
  pages={95--110},
  year={1956},
  publisher={Wiley Online Library}
}

@inproceedings{jaggi2013revisiting,
  title={Revisiting Frank-Wolfe: Projection-free sparse convex optimization},
  author={Jaggi, Martin},
  booktitle={International conference on machine learning},
  pages={427--435},
  year={2013},
  organization={PMLR}
}

@article{nesterov2006cubic,
  title={Cubic regularization of Newton method and its global performance},
  author={Nesterov, Yurii and Polyak, Boris T},
  journal={Mathematical programming},
  volume={108},
  number={1},
  pages={177--205},
  year={2006},
  publisher={Springer}
}

@article{cartis2011adaptive1,
  title={Adaptive cubic regularisation methods for unconstrained optimization. Part I: motivation, convergence and numerical results},
  author={Cartis, Coralia and Gould, Nicholas IM and Toint, Philippe L},
  journal={Mathematical Programming},
  volume={127},
  number={2},
  pages={245--295},
  year={2011},
  publisher={Springer}
}

@article{cartis2011adaptive2,
  title={Adaptive cubic regularisation methods for unconstrained optimization. Part II: worst-case function-and derivative-evaluation complexity},
  author={Cartis, Coralia and Gould, Nicholas IM and Toint, Philippe L},
  journal={Mathematical programming},
  volume={130},
  number={2},
  pages={295--319},
  year={2011},
  publisher={Springer}
}

@inproceedings{carlson2015stochastic,
  title={Stochastic spectral descent for restricted Boltzmann machines},
  author={Carlson, David and Cevher, Volkan and Carin, Lawrence},
  booktitle={Artificial intelligence and statistics},
  pages={111--119},
  year={2015},
  organization={PMLR}
}

@article{carlson2015preconditioned,
  title={Preconditioned spectral descent for deep learning},
  author={Carlson, David E and Collins, Edo and Hsieh, Ya-Ping and Carin, Lawrence and Cevher, Volkan},
  journal={Advances in neural information processing systems},
  volume={28},
  year={2015}
}

@article{kovalev2018stochastic,
  title={Stochastic spectral and conjugate descent methods},
  author={Kovalev, Dmitry and Richtarik, Peter and Gorbunov, Eduard and Gasanov, Elnur},
  journal={Advances in Neural Information Processing Systems},
  volume={31},
  year={2018}
}

@article{pethick2025generalized,
  title={Generalized Gradient Norm Clipping \& Non-Euclidean $(L_0,L_1)$-Smoothness},
  author={Pethick, Thomas and Xie, Wanyun and Erdogan, Mete and Antonakopoulos, Kimon and Silveti-Falls, Tony and Cevher, Volkan},
  journal={arXiv preprint arXiv:2506.01913},
  year={2025}
}

@article{amsel2025polar,
  title={The polar express: Optimal matrix sign methods and their application to the muon algorithm},
  author={Amsel, Noah and Persson, David and Musco, Christopher and Gower, Robert M},
  journal={arXiv preprint arXiv:2505.16932},
  year={2025}
}

@article{qian2025muon,
  title={Muon is Provably Faster with Momentum Variance Reduction},
  author={Qian, Xun and Rammal, Hussein and Kovalev, Dmitry and Richtarik, Peter},
  journal={arXiv preprint arXiv:2512.16598},
  year={2025}
}

@article{gruntkowska2025error,
  title={Error feedback for Muon and friends},
  author={Gruntkowska, Kaja and Gaponov, Alexander and Tovmasyan, Zhirayr and Richt{\'a}rik, Peter},
  journal={arXiv preprint arXiv:2510.00643},
  year={2025}
}

@article{kovalev2025non,
  title={Non-Euclidean SGD for Structured Optimization: Unified Analysis and Improved Rates},
  author={Kovalev, Dmitry and Borodich, Ekaterina},
  journal={arXiv preprint arXiv:2511.11466},
  year={2025}
}

@article{merity2016pointer,
  title={Pointer sentinel mixture models},
  author={Merity, Stephen and Xiong, Caiming and Bradbury, James and Socher, Richard},
  journal={arXiv preprint arXiv:1609.07843},
  year={2016}
}

@article{krizhevsky2009learning,
  title={Learning multiple layers of features from tiny images},
  author={Krizhevsky, Alex and Hinton, Geoffrey and others},
  year={2009},
  publisher={Toronto, ON, Canada}
}

@article{paszke2019pytorch,
  title={Pytorch: An imperative style, high-performance deep learning library},
  author={Paszke, Adam and Gross, Sam and Massa, Francisco and Lerer, Adam and Bradbury, James and Chanan, Gregory and Killeen, Trevor and Lin, Zeming and Gimelshein, Natalia and Antiga, Luca and others},
  journal={Advances in neural information processing systems},
  volume={32},
  year={2019}
}

\appendix

\section{Proofs for Distance-Adaptive Muon}
\label{app:da_muon}

This appendix proves Theorem~\ref{thm_distance_adaptive}. We first collect the auxiliary summation estimates, then state the trust-region optimality identity used by Algorithm~\ref{algo_dis_adaptive}, and finally prove the momentum-tracking estimate and the stationarity theorem.

\subsection{Auxiliary summation estimates}

\begin{lemma}[Discrete convolution bound]
\label{lem_da_discrete_convolution}
Let \(\alpha\in(0,1)\), set \(q:=1-\alpha\), and let \(k\ge1\). Then
\[
    \sum_{i=0}^{k-1}
    \frac{q^{k-i}}{\sqrt{i+1}}
    \le
    \frac{q^{k/2}}{\alpha}
    +
    \frac{\sqrt{2}\,q}{\alpha\sqrt{k+2}} .
\]
\end{lemma}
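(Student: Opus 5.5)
Split the sum at the midpoint. Write \(j:=k-i\in\{1,\dots,k\}\), so the sum becomes \(\sum_{j=1}^{k} q^{j}/\sqrt{k-j+1}\). Partition the indices into a "far past" block \(i<\lfloor k/2\rfloor\) (equivalently \(j>k-\lfloor k/2\rfloor\ge k/2\)) and a "recent" block \(i\ge \lfloor k/2\rfloor\). The far block carries small geometric weights and gives the first term. In the recent block \(i+1\) is comparable to \(k\), which gives the second term.

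\emph{Far block.} Bound \(1/\sqrt{i+1}\le1\) and sum the geometric tail:
\[
\sum_{j>k/2} q^{j}\le \frac{q^{\lceil k/2\rceil}}{1-q}\le \frac{q^{k/2}}{\alpha}.
\]
This holds because \(j\ge \lceil k/2\rceil\ge k/2\) and \(q<1\).

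\emph{Recent block.} Here \(i\ge\lfloor k/2\rfloor\), so \(i+1\ge \lfloor k/2\rfloor+1\ge (k+1)/2\). Then I check that \((k+1)/2\ge (k+2)/4\), which holds trivially, and in fact I want the sharper \(i+1\ge (k+2)/2\cdot\tfrac12\). Precisely, I need \(1/\sqrt{i+1}\le \sqrt2/\sqrt{k+2}\), i.e.\ \(i+1\ge (k+2)/2\), equivalently \(i\ge k/2\). So I choose the split to be exactly \(i\ge k/2\) versus \(i<k/2\). The far block is then \(i<k/2\), i.e.\ \(j=k-i>k/2\), and the geometric bound above still gives \(q^{k/2}/\alpha\), using \(j\ge k/2\) (strictly, \(j>k/2\) gives \(q^j\le q^{k/2}\)). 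For the recent block, \(1/\sqrt{i+1}\le\sqrt2/\sqrt{k+2}\), and the weights satisfy \(j=k-i\ge1\), so
\[
\sum_{j\ge1} q^j\le \frac{q}{1-q}=\frac{q}{\alpha}.
\]
Multiplying gives \(\sqrt2\,q/(\alpha\sqrt{k+2})\).

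Adding the two blocks gives the claim. The only delicate point is choosing the split threshold so that both constants come out exactly as stated: \(q^{k/2}\) rather than \(q^{\lceil k/2\rceil}\), and \(\sqrt2/\sqrt{k+2}\) rather than \(\sqrt2/\sqrt{k+1}\). With the split \(i<k/2\) versus \(i\ge k/2\), the inequality \(i+1\ge k/2+1=(k+2)/2\) gives the second constant exactly, and \(j>k/2\) gives the first. Each of the two tails is bounded crudely by the full infinite geometric series, so no case analysis on the parity of \(k\) is needed.
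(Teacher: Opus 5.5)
Your proposal is correct, and your final split (\(i\ge k/2\) versus \(i<k/2\), i.e.\ \(j\le k/2\) versus \(j>k/2\) with \(j=k-i\)) is exactly the paper's proof: bound \(1/\sqrt{k-j+1}\) by \(\sqrt{2}/\sqrt{k+2}\) on the first block and by \(1\) on the second, then dominate each block by the full geometric tail. In a written version, state this split directly and drop the exploratory start at \(\lfloor k/2\rfloor\); for odd \(k\) that split only gives \(i+1\ge (k+1)/2\), which is too weak for the \(\sqrt{2}/\sqrt{k+2}\) constant.
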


\begin{proof}
Changing variables \(j=k-i\), we obtain
\[
    \sum_{i=0}^{k-1}
    \frac{q^{k-i}}{\sqrt{i+1}}
    =
    \sum_{j=1}^{k}
    \frac{q^j}{\sqrt{k-j+1}} .
\]
Split the sum into \(j\le k/2\) and \(j>k/2\). If \(j\le k/2\), then \(k-j+1\ge (k+2)/2\), and therefore
\[
    \sum_{1\le j\le k/2}
    \frac{q^j}{\sqrt{k-j+1}}
    \le
    \frac{\sqrt{2}}{\sqrt{k+2}}
    \sum_{j=1}^{\infty} q^j
    =
    \frac{\sqrt{2}\,q}{\alpha\sqrt{k+2}} .
\]
If \(j>k/2\), then \(1/\sqrt{k-j+1}\le1\), and hence
\[
    \sum_{k/2<j\le k}
    \frac{q^j}{\sqrt{k-j+1}}
    \le
    \sum_{j>k/2}^{\infty} q^j
    \le
    \frac{q^{k/2}}{\alpha}.
\]
Combining the two bounds proves the claim.
\end{proof}

\begin{lemma}
\label{lem_integration_upbd}
Let \(a\in(0,1)\) and \(t\ge 0\) be an integer. Then
\[
    \sum_{k=0}^t
    \frac{a^k}{\sqrt{k+1}}
    \le
    1+
    \frac{1}{a}
    \sqrt{
    \frac{\pi}{\log\!\left(\frac{1}{a}\right)}
    }.
\]
\end{lemma}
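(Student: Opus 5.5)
We bound the sum \(\sum_{k=0}^t a^k/\sqrt{k+1}\) by separating the \(k=0\) term, which equals \(1\), from the tail, and then comparing the tail with an integral. Writing \(c:=\log(1/a)>0\), so that \(a^k=e^{-ck}\), we have
\[
    \sum_{k=0}^t\frac{a^k}{\sqrt{k+1}}
    \le 1+\sum_{k=1}^{\infty}\frac{e^{-ck}}{\sqrt{k+1}} .
\]
The target bound is then the Gaussian-type integral \(\int_0^\infty e^{-cx}x^{-1/2}\,dx=\sqrt{\pi/c}\), inflated by the factor \(1/a=e^{c}\). This factor comes from shifting the integration variable by one unit.

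The key step is a monotone comparison. The function \(\phi(x):=e^{-cx}/\sqrt{x}\) is positive and decreasing on \((0,\infty)\). For each \(k\ge1\) and each \(x\in[k-1,k]\) with \(x>0\), we have \(\sqrt{k+1}\ge\sqrt{x}\) and \(e^{-ck}=e^{-c}e^{-c(k-1)}\le e^{-c}e^{-cx}\). Hence
\[
    \frac{e^{-ck}}{\sqrt{k+1}}\le \frac{1}{a}\,\frac{e^{-cx}}{\sqrt{x}}
    \qquad\text{for } x\in(k-1,k].
\]
Integrating over \(x\in(k-1,k]\) and summing over \(k\ge1\) gives
\[
    \sum_{k=1}^\infty \frac{a^k}{\sqrt{k+1}}
    \le \frac1a\int_0^\infty \frac{e^{-cx}}{\sqrt x}\,dx .
\]
The integral is improper at \(0\) but convergent. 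The loss from using \(\sqrt{k+1}\ge\sqrt{x}\) is harmless, since we only need an upper bound.

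It remains to evaluate the integral. Substituting \(x=y^2/c\), so that \(dx=2y\,dy/c\), gives
\[
    \int_0^\infty e^{-cx}x^{-1/2}\,dx=\frac{2}{\sqrt c}\int_0^\infty e^{-y^2}\,dy=\sqrt{\pi/c}.
\]
Combining the pieces yields \(1+\frac1a\sqrt{\pi/\log(1/a)}\), as claimed.

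The only point needing care is the pointwise comparison on each unit interval. In particular, the exponential factor must be matched against the left endpoint \(k-1\), and this is exactly what produces the \(1/a\) factor. Everything else is routine, so I do not expect a real obstacle here.
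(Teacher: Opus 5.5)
Your strategy (peel off the \(k=0\) term, then compare the tail with \(\int_0^\infty e^{-cx}x^{-1/2}\,dx=\sqrt{\pi/c}\)) is sound and is essentially the paper's. However, the pointwise comparison you flag as ``the only point needing care'' is justified incorrectly. For \(x\in(k-1,k]\) you have \(x>k-1\), so \(e^{-cx}<e^{-c(k-1)}\). Your claimed inequality \(e^{-c}e^{-c(k-1)}\le e^{-c}e^{-cx}\) is therefore reversed and fails for every such \(x\); for \(k=x=1\) it reads \(e^{-c}\le e^{-2c}\). Moreover \(e^{-c}=a\), not \(1/a\), so even that inequality would not produce the factor \(1/a\) in your next display. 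Your closing remark has the same inversion. On \((k-1,k]\) the left endpoint gives an upper bound on \(e^{-cx}\), which cannot bound \(e^{-ck}\) from above. What you need is a lower bound on \(e^{-cx}\), and that comes from the right endpoint.

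The displayed inequality \(e^{-ck}/\sqrt{k+1}\le a^{-1}e^{-cx}/\sqrt{x}\) is nonetheless true, and the repair is one line. Since \(x\le k\), you have \(e^{-ck}\le e^{-cx}\). Together with \(\sqrt{k+1}\ge\sqrt{x}\), this gives \(e^{-ck}/\sqrt{k+1}\le e^{-cx}/\sqrt{x}\) on \((k-1,k]\), and hence the sharper bound \(1+\sqrt{\pi/\log(1/a)}\).

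If you want the factor \(1/a\) to arise genuinely, integrate over \(x\in[k,k+1]\) instead. There \(e^{-cx}\ge e^{-c(k+1)}=a\,e^{-ck}\) and \(\sqrt{x}\le\sqrt{k+1}\). This is effectively what the paper does. It uses that \(x\mapsto a^x/\sqrt{x+1}\) is nonincreasing to get \(1+\int_0^\infty e^{-\lambda x}(x+1)^{-1/2}\,dx\). It then substitutes \(u=x+1\), which produces \(e^{\lambda}=1/a\), and enlarges \(\int_1^\infty\) to \(\int_0^\infty\). So the \(1/a\) in the lemma is slack from that shift, not something the comparison forces.
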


\begin{proof}
The function \(x\mapsto a^x/\sqrt{x+1}\) is nonincreasing on \([0,\infty)\). Hence
\[
    \sum_{k=0}^t
    \frac{a^k}{\sqrt{k+1}}
    \le
    1+
    \int_0^t
    \frac{a^k}{\sqrt{k+1}}\,dk
    =
    1+
    \int_0^t
    \frac{e^{k\log a}}{\sqrt{k+1}}\,dk .
\]
Since \(a<1\), we have \(\log a<0\). Define
\[
    \lambda:=-\log a>0.
\]
Then
\[
    \sum_{k=0}^t
    \frac{a^k}{\sqrt{k+1}}
    \le
    1+
    \int_0^\infty
    \frac{e^{-\lambda k}}{\sqrt{k+1}}\,dk .
\]
Applying the change of variables \(u=k+1\), we obtain
\begin{align*}
    \sum_{k=0}^t
    \frac{a^k}{\sqrt{k+1}}
    &\le
    1+
    e^\lambda
    \int_1^\infty
    \frac{e^{-\lambda u}}{\sqrt{u}}\,du  \\
    &\le
    1+
    e^\lambda
    \int_0^\infty
    \frac{e^{-\lambda u}}{\sqrt{u}}\,du
    =
    1+
    e^\lambda
    \sqrt{\frac{\pi}{\lambda}},
\end{align*}
where the last integral is the standard Gamma-function integral. Substituting \(\lambda=\log(1/a)\) and \(e^\lambda=1/a\) gives the desired bound.
\end{proof}

\begin{lemma}
\label{lem_rad_bd}
{\rm \cite[Lemma 30]{liu2023stochastic},
\cite[Lemma A.1]{moshtaghifar2025dada},
\cite[Lemma B.3]{chakraborty2026randomized}}
Let \(\{a_i\}_{i=0}^{T+1}\) be a positive, nondecreasing sequence. Then, for any integer \(T\ge1\),
\[
    \min_{1\le k\le T}
    \frac{a_{k+1}}{\sum_{i=1}^k a_i}
    \le
    \frac{1}{T}
    \left(
        \frac{a_{T+1}}{a_0}
    \right)^{1/T}
    \log\!\left(
        \frac{e\,a_{T+1}}{a_0}
    \right).
\]
\end{lemma}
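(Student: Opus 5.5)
Since the sequence is positive and nondecreasing, I will argue by contradiction through a telescoping product. Set \(A_k:=\sum_{i=1}^k a_i\) for \(k\ge1\) and suppose every ratio exceeds the right-hand side \(\varepsilon\), so \(a_{k+1}>\varepsilon A_k\) for all \(1\le k\le T\). Because \(a\) is nondecreasing, \(a_{k+1}\) bounds each later summand, and we get \(A_{k+1}=A_k+a_{k+1}>(1+\varepsilon)A_k\). Telescoping from \(k=1\) to \(T\) gives
\[
A_{T+1}>(1+\varepsilon)^{T}A_1=(1+\varepsilon)^T a_1\ge (1+\varepsilon)^T a_0 .
\]
I also need an upper bound for \(A_{T+1}\) in terms of \(a_{T+1}\). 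Monotonicity gives \(A_{T+1}\le (T+1)a_{T+1}\). Combining the two bounds,
\[
(1+\varepsilon)^T<(T+1)\,a_{T+1}/a_0 .
\]
The logarithmic factor \(\log(e a_{T+1}/a_0)\) should come from a sharper variant of this comparison.

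A cleaner route, which I will actually follow, is the standard one from the cited references. Consider \(\sum_{k=1}^{T}\log(A_{k+1}/A_k)=\log(A_{T+1}/A_1)\). Using \(A_{k+1}/A_k=1+a_{k+1}/A_k\), the minimum ratio \(\mu\) satisfies
\[
T\log(1+\mu)\le\log(A_{T+1}/a_1).
\]
The problem is that \(A_{T+1}\) can exceed \(a_{T+1}\) by a factor of \(T\). I will split into two cases according to whether \(\mu\le1\) or \(\mu>1\).

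\emph{Case \(\mu\le 1\).} Here \(\log(1+\mu)\ge \mu\log 2\), so \(\mu\lesssim \log(A_{T+1}/a_0)/T\). To turn this into the stated form, I will bound \(A_{T+1}\) not through the crude estimate above but through an index split. Let \(k_0\) be the largest \(k\) with \(a_k\le a_0 (a_{T+1}/a_0)^{k/T}\), i.e. compare against the geometric interpolation \(a_0\rho^k\) with \(\rho=(a_{T+1}/a_0)^{1/T}\). Since \(a_{T+1}=a_0\rho^{T}\), some consecutive index must satisfy \(a_{k+1}\le \rho\, a_k\). At such an index,
\[
\frac{a_{k+1}}{A_k}\le \frac{\rho\, a_k}{A_k}\le \rho\cdot\frac{a_k}{A_k},
\]
and \(a_k/A_k\) is controlled by a harmonic-type argument. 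Indeed, \(\sum_k a_k/A_k\le 1+\log(A_T/a_1)\) by \(x\le\log(1/(1-x))\) applied to \(a_k/A_k=1-A_{k-1}/A_k\). Averaging over the good indices then yields the \(\frac1T\rho\log(e\,a_{T+1}/a_0)\) shape.

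The main obstacle is converting \(\log(A_T/a_1)\) into \(\log(a_{T+1}/a_0)\) without losing a \(\log T\) factor. I expect to handle it by restricting the averaging to the last block of indices, where all \(a_i\) are within a factor \(a_{T+1}/a_0\) of each other. Failing that, I would rely on the paper's citation of the lemma (Lemma 30 of Liu and Zhou), because the statement is quoted from prior work and may be taken as given.
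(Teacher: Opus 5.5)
The paper gives no proof of this lemma; it imports it from the three cited works. Your closing fallback, taking it as given, is therefore exactly what the paper does. The self-contained argument you sketch before that, however, has genuine gaps, and they sit in the steps that carry the content.

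First, ``averaging over the good indices'' has nothing to average over. Every ratio $a_{k+1}/a_k$ is at least $1$, and the $T+1$ ratios multiply to $\rho^T$, so at most $T-1$ of them can exceed $\rho$. Only two good indices are guaranteed. The harmonic bound $\sum_k a_k/A_k\le 1+\log(A_T/a_1)$ controls only a sum, so with $O(1)$ good indices it cannot yield anything of order $1/T$. Concretely, take $T\ge 2$, $\rho>1$, $a_0=a_1=a_2$, and $a_{k+1}/a_k=\rho^{T/(T-1)}$ for $2\le k\le T$. The only good index in $\{1,\dots,T\}$ is $k=1$, where $a_2/A_1=1$.

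Second, the $\log T$ loss you identify is never removed: $\log(A_T/a_1)=\log T$ for a constant sequence. Your proposed fix is vacuous as stated, because all $a_i$ with $0\le i\le T+1$ are already within a factor $a_{T+1}/a_0$ of each other. A genuine tail restriction, as in Lemma~\ref{lem_rad_bd_tail}, averages over only $T-s$ indices and so costs a constant factor. The statement cannot absorb that loss, because it is tight: for a constant sequence both sides equal $1/T$.

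Third, the announced case $\mu>1$ is never treated.

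What is missing is a way to use monotonicity in the growth argument without passing through $A_{T+1}$. One way is to put $B_k:=A_k/a_{k+1}$ and $B_0:=0$. Then $B_k=(1+B_{k-1})\,a_k/a_{k+1}$, so $B_k\le\min\{k,\beta\}$ with $\beta:=\max_{1\le k\le T}B_k=1/\mu$. Telescoping $\log(a_{k+1}/a_k)$ gives the exact identity
\[
\log\frac{a_{T+1}}{a_1}=\sum_{k=1}^{T-1}\log\Bigl(1+\frac{1}{B_k}\Bigr)-\log B_T ,
\]
which loses nothing for constant sequences.

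If $\beta\ge T$, the claim is immediate, since $\mu\le 1/T$, $\rho\ge1$, and $\log(e\,a_{T+1}/a_0)\ge1$. If $\beta<T$, insert $B_k\le\min\{k,\beta\}$ and use concavity of $\log$ to handle non-integer $\beta$. This gives $\log(a_{T+1}/a_0)\ge (T-\beta)\log(1+1/\beta)$. The stated bound $\mu\le\rho\log(e\,a_{T+1}/a_0)/T$ then follows from an elementary one-variable estimate. For $\beta\ge 1/2$, use $\log(1+x)\ge 2x/(2+x)$ and $e^z\ge 1+z$; for $\beta<1/2$ a cruder estimate suffices.
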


\begin{lemma}[Tail radius-ratio bound]
\label{lem_rad_bd_tail}
Let \(\{\bar r_i\}_{i=0}^{T+1}\) be a positive, nondecreasing sequence. Then, for any integer \(T\ge2\),
\[
    \min_{\lfloor T/2\rfloor+1\le k\le T}
    \frac{\bar r_{k+1}}{\sum_{i=1}^{k}\bar r_i}
    \le
    \frac{2}{T}
    \left(
        \frac{\bar r_{T+1}}{\bar r_0}
    \right)^{2/T}
    \log\!\left(
        \frac{e\,\bar r_{T+1}}{\bar r_0}
    \right).
\]
\end{lemma}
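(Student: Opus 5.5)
The plan is to reduce the tail bound to Lemma~\ref{lem_rad_bd} by applying it to a shifted copy of the sequence that keeps the original starting value \(\bar r_0\). Set \(m:=\lfloor T/2\rfloor\) and \(n:=T-m\). Since \(T\ge2\), we have \(n\ge1\). Moreover, \(n=\lceil T/2\rceil\ge T/2\).

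Define the auxiliary sequence \(\{a_j\}_{j=0}^{n+1}\) by
\[
a_0:=\bar r_0,\qquad a_j:=\bar r_{m+j}\quad (1\le j\le n+1).
\]
In particular \(a_{n+1}=\bar r_{T+1}\). This sequence is positive. It is also nondecreasing: \(a_0=\bar r_0\le\bar r_{m+1}=a_1\) by monotonicity of \(\{\bar r_i\}\), and the remaining terms inherit monotonicity directly. So Lemma~\ref{lem_rad_bd} applies with horizon \(n\ge1\), giving
\[
\min_{1\le j\le n}\frac{a_{j+1}}{\sum_{i=1}^{j}a_i}
\le\frac1n\Bigl(\frac{\bar r_{T+1}}{\bar r_0}\Bigr)^{1/n}\log\!\Bigl(\frac{e\,\bar r_{T+1}}{\bar r_0}\Bigr).
\]

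Next I compare the two minima term by term. For \(k=m+j\) with \(1\le j\le n\), the index \(k\) ranges exactly over \(\lfloor T/2\rfloor+1\le k\le T\). For each such \(k\),
\[
\sum_{i=1}^{k}\bar r_i\ \ge\ \sum_{i=m+1}^{m+j}\bar r_i=\sum_{i=1}^{j}a_i,
\]
because the dropped terms \(\bar r_1,\dots,\bar r_m\) are positive. The numerators agree, \(\bar r_{k+1}=a_{j+1}\). Hence each ratio in the tail minimum is at most the corresponding ratio for \(\{a_j\}\), so the tail minimum is bounded by the minimum above.

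Finally, I relax the right-hand side from \(n\) to \(T\). Since \(n\ge T/2\), we get \(1/n\le 2/T\). Because \(\bar r_{T+1}/\bar r_0\ge1\), the power is nondecreasing in its exponent, so \((\bar r_{T+1}/\bar r_0)^{1/n}\le(\bar r_{T+1}/\bar r_0)^{2/T}\). The logarithmic factor is unchanged and nonnegative. This yields the claimed bound.

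The only delicate point is choosing \(a_0=\bar r_0\) rather than \(\bar r_m\). This choice keeps the ratio \(\bar r_{T+1}/\bar r_0\) in the final bound, and it is legitimate because monotonicity ensures \(a_0\le a_1\).
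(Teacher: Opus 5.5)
Your proof is correct and matches the paper's argument: shift the sequence by $\lfloor T/2\rfloor$, apply Lemma~\ref{lem_rad_bd} with horizon $n=T-\lfloor T/2\rfloor\ge T/2$, drop the head terms $\bar r_1,\dots,\bar r_{\lfloor T/2\rfloor}$ from the denominators, and use $1/n\le 2/T$ to relax the bound. The only difference is cosmetic: the paper takes $a_0=\bar r_{\lfloor T/2\rfloor}$ and then uses $\bar r_{\lfloor T/2\rfloor}\ge\bar r_0$ to pass to $\bar r_0$, so your choice $a_0=\bar r_0$ is a shortcut rather than the necessary delicate step you describe it as.
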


\begin{proof}
Set \(s=\lfloor T/2\rfloor\) and \(M=T-s\). Define the shifted sequence
\[
    a_j:=\bar r_{s+j},
    \qquad
    j=0,\ldots,M+1 .
\]
Applying Lemma~\ref{lem_rad_bd} to \(\{a_j\}_{j=0}^{M+1}\) with horizon \(M\) gives
\[
    \min_{1\le j\le M}
    \frac{a_{j+1}}{\sum_{\ell=1}^{j}a_\ell}
    \le
    \frac{1}{M}
    \left(
        \frac{a_{M+1}}{a_0}
    \right)^{1/M}
    \log\!\left(
        \frac{e\,a_{M+1}}{a_0}
    \right).
\]
For \(k=s+j\), we have
\[
    \sum_{i=1}^{k}\bar r_i
    \ge
    \sum_{i=s+1}^{k}\bar r_i
    =
    \sum_{\ell=1}^{j}a_\ell,
    \qquad
    \bar r_{k+1}=a_{j+1}.
\]
Therefore,
\[
    \min_{s+1\le k\le T}
    \frac{\bar r_{k+1}}{\sum_{i=1}^{k}\bar r_i}
    \le
    \frac{1}{M}
    \left(
        \frac{\bar r_{T+1}}{\bar r_s}
    \right)^{1/M}
    \log\!\left(
        \frac{e\,\bar r_{T+1}}{\bar r_s}
    \right).
\]
Since \(M\ge T/2\), \(\bar r_s\ge \bar r_0\), and \(\bar r_{T+1}\ge \bar r_0\), the right-hand side is at most
\[
    \frac{2}{T}
    \left(
        \frac{\bar r_{T+1}}{\bar r_0}
    \right)^{2/T}
    \log\!\left(
        \frac{e\,\bar r_{T+1}}{\bar r_0}
    \right).
\]
This proves the claim.
\end{proof}

\subsection{Trust-region optimality identity}

\begin{lemma}[Trust-region optimality identity]
\label{lem_opt}
{\rm \cite[Modification of Lemma 3]{kovalev2025understanding}.}
Let \(\eta\ge0\), let \(\Psi:\mathcal E\to\mathbb R\) be differentiable, and let \(z_+\in\mathcal E\) be a solution of
\[
    \min_{x\in\mathcal E} \Psi(x)
    \qquad
    \text{s.t.}
    \qquad
    \|x-z\|\le\eta .
\]
Then
\[
    \langle \nabla \Psi(z_+),z-z_+\rangle
    =
    \eta\|\nabla \Psi(z_+)\|_* .
\]
\end{lemma}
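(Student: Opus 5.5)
The plan is to squeeze \(\langle \nabla\Psi(z_+),z-z_+\rangle\) between two bounds that are both equal to \(\eta\|\nabla\Psi(z_+)\|_*\). One bound comes from first-order optimality over the convex ball, the other from the definition of the dual norm together with feasibility of \(z_+\). Convexity of \(\Psi\) is not needed; only convexity of the feasible set is. Write \(g:=\nabla\Psi(z_+)\) and \(\mathcal B:=\{x:\|x-z\|\le\eta\}\).

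\emph{Upper bound.} Since \(z_+\in\mathcal B\), we have \(\|z-z_+\|\le\eta\). The definition of the dual norm then gives
\[
\langle g,z-z_+\rangle\le\|g\|_*\|z-z_+\|\le\eta\|g\|_*.
\]

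\emph{Lower bound.} First derive the variational inequality \(\langle g,x-z_+\rangle\ge0\) for every \(x\in\mathcal B\). Fix \(x\in\mathcal B\). Because \(\mathcal B\) is convex, \(z_++t(x-z_+)\in\mathcal B\) for all \(t\in[0,1]\). Minimality of \(z_+\) gives
\[
\Psi\bigl(z_++t(x-z_+)\bigr)-\Psi(z_+)\ge0 .
\]
Dividing by \(t>0\) and letting \(t\downarrow0\) yields the directional derivative \(\langle g,x-z_+\rangle\ge0\).

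Next, minimize the linear function \(\langle g,x\rangle\) over \(\mathcal B\). Write \(x=z+\eta v\) with \(\|v\|\le1\). Then
\[
\min_{x\in\mathcal B}\langle g,x\rangle=\langle g,z\rangle-\eta\sup_{\|v\|\le1}\langle g,v\rangle=\langle g,z\rangle-\eta\|g\|_*.
\]
The supremum is attained because the unit ball is compact in finite dimension, so there is \(x^\dagger\in\mathcal B\) achieving this value. Applying the variational inequality at \(x^\dagger\) gives
\[
\langle g,z_+\rangle\le\langle g,x^\dagger\rangle=\langle g,z\rangle-\eta\|g\|_*,
\]
that is, \(\langle g,z-z_+\rangle\ge\eta\|g\|_*\).

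Combining the two bounds proves the identity. The case \(\eta=0\) is consistent with this: it forces \(z_+=z\), and both sides vanish.

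There is no real obstacle here. The only point needing care is that the argument uses only differentiability of \(\Psi\) and convexity of the ball, not convexity of \(\Psi\), which matters because the lemma will be applied to smooth, possibly non-convex, models.
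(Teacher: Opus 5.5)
Your proof is correct and follows essentially the same route as the paper. Both arguments combine the first-order variational inequality over the ball, tested against the minimizer of \(\langle g,\cdot\rangle\) to get \(\langle g,z-z_+\rangle\ge\eta\|g\|_*\), with the dual-norm (H\"older) bound for the reverse inequality; the only difference is that you use \(\|z-z_+\|\le\eta\) directly in the H\"older step, which lets you skip the paper's case split on \(\eta=0\) and \(g=0\) and its boundary argument \(\|z_+-z\|=\eta\).
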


\begin{proof}
Let
\[
    B_\eta(z):=\{x\in\mathcal E:\|x-z\|\le\eta\}.
\]
If \(\eta=0\), then \(z_+=z\), and the claim is immediate. Hence, assume \(\eta>0\). Since \(z_+\) minimizes \(\Psi\) over \(B_\eta(z)\), for every \(x\in B_\eta(z)\) and every \(\tau\in[0,1]\), the point \(z_+ + \tau(x-z_+)\) remains feasible. Therefore, the one-sided directional derivative at \(\tau=0\) is nonnegative:
\[
    \langle \nabla\Psi(z_+),x-z_+\rangle \ge 0,
    \qquad
    \forall x\in B_\eta(z).
\]
If \(\nabla\Psi(z_+)=0\), the claimed identity is immediate. Otherwise, \(z_+\) cannot lie in the interior of \(B_\eta(z)\), and hence
\[
    \|z_+-z\|=\eta .
\]
Set \(g:=\nabla\Psi(z_+)\) and \(w:=z_+-z\). The variational inequality above implies that, for all \(d\) with \(\|d\|\le\eta\),
\[
    \langle g,d-w\rangle\ge0,
    \qquad\text{or equivalently}\qquad
    \langle g,w\rangle\le \langle g,d\rangle .
\]
Taking the infimum over \(\|d\|\le\eta\), we obtain
\[
    \langle g,w\rangle
    \le
    \inf_{\|d\|\le\eta}\langle g,d\rangle
    =
    -\eta\|g\|_* .
\]
On the other hand, by H\"older's inequality and \(\|w\|=\eta\),
\[
    \langle g,w\rangle
    \ge
    -\|g\|_*\|w\|
    =
    -\eta\|g\|_* .
\]
Thus
\[
    \langle g,w\rangle=-\eta\|g\|_* .
\]
Since \(w=z_+-z\), this is equivalent to
\[
    \langle \nabla\Psi(z_+),z-z_+\rangle
    =
    \eta\|\nabla\Psi(z_+)\|_* ,
\]
as claimed.
\end{proof}

\subsection{Momentum tracking}

\begin{lemma}[Momentum tracking]
\label{lem_momentum_bd}
Let Assumption~\ref{assum:smoothness} hold, and consider the recursion
\[
    m_{k+1}
    =
    (1-\alpha)m_k+\alpha\nabla f(x_k),
    \qquad
    \alpha\in(0,1).
\]
Assume that the iterates satisfy
\[
    \|x_{i+1}-x_i\|\le \eta_i .
\]
Then, for all \(k\ge0\), with the summation interpreted as zero when \(k=0\),
\[
\|m_{k+1}-\nabla f(x_k)\|_*
\le
(1-\alpha)^{k+1}
\|m_0-\nabla f(x_0)\|_*
+
L\sum_{i=0}^{k-1}
(1-\alpha)^{k-i}\eta_i .
\]
Moreover, if \(m_0=\nabla f(x_0)\) and
\[
    \eta_i=\frac{\bar r_{i+1}}{\sqrt{i+1}},
\]
then, for all \(k\ge0\),
\[
    \|m_{k+1}-\nabla f(x_k)\|_*
    \le
    L\bar r_k
    \left[
        \frac{(1-\alpha)^{k/2}}{\alpha}
        +
        \frac{\sqrt{2}(1-\alpha)}
        {\alpha\sqrt{k+2}}
    \right].
\]
\end{lemma}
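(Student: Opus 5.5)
The plan is to unroll a one-step recursion for the tracking error $e_k:=m_{k+1}-\nabla f(x_k)$. Substituting the momentum update, for $k\ge1$ we write
\[
    e_k=(1-\alpha)m_k+\alpha\nabla f(x_k)-\nabla f(x_k)
    =(1-\alpha)\bigl(m_k-\nabla f(x_{k-1})\bigr)+(1-\alpha)\bigl(\nabla f(x_{k-1})-\nabla f(x_k)\bigr),
\]
so that $e_k=(1-\alpha)e_{k-1}+(1-\alpha)\bigl(\nabla f(x_{k-1})-\nabla f(x_k)\bigr)$. The triangle inequality for $\|\cdot\|_*$, Assumption~\ref{assum:smoothness}, and the hypothesis $\|x_k-x_{k-1}\|\le\eta_{k-1}$ then give
\[
    \|e_k\|_*\le(1-\alpha)\|e_{k-1}\|_*+(1-\alpha)L\eta_{k-1}.
\]
For the base case, $e_0=(1-\alpha)\bigl(m_0-\nabla f(x_0)\bigr)$, hence $\|e_0\|_*=(1-\alpha)\|m_0-\nabla f(x_0)\|_*$.

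Unrolling this linear recursion by induction on $k$ gives the first claim. The contribution of step $j\in\{1,\dots,k\}$ is $(1-\alpha)^{k-j}(1-\alpha)L\eta_{j-1}$. Reindexing with $i=j-1$ turns it into $(1-\alpha)^{k-i}L\eta_i$ for $i=0,\dots,k-1$. The base term contributes $(1-\alpha)^{k+1}\|m_0-\nabla f(x_0)\|_*$, and for $k=0$ the sum is empty, as the statement requires. In the application to Algorithm~\ref{algo_dis_adaptive}, the step hypothesis holds with equality up to $\|u_{k+1}\|\le1$, since $x_{i+1}-x_i=-\eta_i u_{i+1}$.

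For the second claim, $m_0=\nabla f(x_0)$ removes the first term. When $k=0$ the bound reads $0\le\dots$, which is trivial. For $k\ge1$ we use that $\bar r$ is nondecreasing, so $\bar r_{i+1}\le\bar r_k$ for $i\le k-1$. This gives $\eta_i\le\bar r_k/\sqrt{i+1}$, and therefore
\[
    \|e_k\|_*\le L\bar r_k\sum_{i=0}^{k-1}\frac{(1-\alpha)^{k-i}}{\sqrt{i+1}}.
\]
Lemma~\ref{lem_da_discrete_convolution} with $q=1-\alpha$ bounds this sum by exactly the bracketed expression in the statement. There is no real obstacle here. The only points needing care are the index bookkeeping in the unrolling, namely that the exponent is $k-i$ and not $k-i-1$ because of the extra factor $(1-\alpha)$ multiplying the gradient difference, and the monotonicity step $\bar r_{i+1}\le\bar r_k$, which holds precisely because $i+1\le k$.
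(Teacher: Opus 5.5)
Your proof is correct and follows the paper's argument. You use the same splitting of $m_{k+1}-\nabla f(x_k)$ into $(1-\alpha)(m_k-\nabla f(x_{k-1}))$ plus $(1-\alpha)$ times a gradient difference, then smoothness, monotonicity of $\bar r$, and Lemma~\ref{lem_da_discrete_convolution}; the only cosmetic difference is that you take dual norms at each step and unroll a scalar inequality, whereas the paper unrolls the vector identity first and then applies the triangle inequality.
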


\begin{proof}
For \(k=0\), the first claim follows directly from
\[
    m_1-\nabla f(x_0)
    =
    (1-\alpha)(m_0-\nabla f(x_0)).
\]
If, in addition, \(m_0=\nabla f(x_0)\), then the left-hand side of the second claim is zero for \(k=0\), so the second claim also holds. We now consider \(k\ge1\).

We start from the recursion
\[
m_{k+1}-\nabla f(x_k)
=
(1-\alpha)m_k+\alpha\nabla f(x_k)-\nabla f(x_k),
\]
which can be rewritten as
\[
m_{k+1}-\nabla f(x_k)
=
(1-\alpha)
\bigl(m_k-\nabla f(x_{k-1})\bigr)
+
(1-\alpha)
\bigl(\nabla f(x_{k-1})-\nabla f(x_k)\bigr).
\]
Unrolling the recursion yields
\[
m_{k+1}-\nabla f(x_k)
=
(1-\alpha)^{k+1}
(m_0-\nabla f(x_0))
+
\sum_{i=0}^{k-1}
(1-\alpha)^{k-i}
\bigl(\nabla f(x_i)-\nabla f(x_{i+1})\bigr).
\]
Taking dual norms and applying the triangle inequality yields
\[
\|m_{k+1}-\nabla f(x_k)\|_*
\le
(1-\alpha)^{k+1}
\|m_0-\nabla f(x_0)\|_*
+
\sum_{i=0}^{k-1}
(1-\alpha)^{k-i}
\|\nabla f(x_i)-\nabla f(x_{i+1})\|_* .
\]
Using \(L\)-smoothness of \(f\), we obtain
\[
\|m_{k+1}-\nabla f(x_k)\|_*
\le
(1-\alpha)^{k+1}
\|m_0-\nabla f(x_0)\|_*
+
L
\sum_{i=0}^{k-1}
(1-\alpha)^{k-i}
\|x_i-x_{i+1}\|.
\]
From the trust-region bound of Algorithm~\ref{algo_dis_adaptive}, we have
\[
    \|x_{i+1}-x_i\|\le \eta_i,
\]
which proves the first claim.

For the case \(m_0=\nabla f(x_0)\) and
\[
    \eta_i=\frac{\bar r_{i+1}}{\sqrt{i+1}},
\]
the first claim simplifies to
\[
    \|m_{k+1}-\nabla f(x_k)\|_*
    \le
    L
    \sum_{i=0}^{k-1}
    (1-\alpha)^{k-i}
    \frac{\bar r_{i+1}}{\sqrt{i+1}}
    \le
    L\bar r_k
    \sum_{i=0}^{k-1}
    \frac{(1-\alpha)^{k-i}}{\sqrt{i+1}}.
\]
Applying Lemma~\ref{lem_da_discrete_convolution} with \(q=1-\alpha\), we obtain
\[
    \|m_{k+1}-\nabla f(x_k)\|_*
    \le
    L\bar r_k
    \left[
        \frac{(1-\alpha)^{k/2}}{\alpha}
        +
        \frac{\sqrt{2}(1-\alpha)}
        {\alpha\sqrt{k+2}}
    \right].
\]
This proves the second claim.
\end{proof}

\subsection{Proof of Theorem~\ref{thm_distance_adaptive}}

\begin{proof}
Using \(L\)-smoothness of \(f\), we obtain
\begin{align}
    f(x_{k+1})
    \le
    f(x_k)
    +
    \langle \nabla f(x_k),x_{k+1}-x_k\rangle
    +
    \frac{L}{2}\|x_{k+1}-x_k\|^2 .
    \label{eq:da_smooth_start}
\end{align}
The second term on the right-hand side can be split as
\begin{align*}
    \langle \nabla f(x_k),x_{k+1}-x_k\rangle
    &=
    \langle m_{k+1},x_{k+1}-x_k\rangle \\
    &\quad+
    \langle \nabla f(x_{k+1})-m_{k+1},x_{k+1}-x_k\rangle \\
    &\quad+
    \langle \nabla f(x_k)-\nabla f(x_{k+1}),x_{k+1}-x_k\rangle .
\end{align*}
We apply H\"older's inequality to the last two terms on the right-hand side and then apply Assumption~\ref{assum:smoothness} to obtain
\begin{align*}
    \langle \nabla f(x_k),x_{k+1}-x_k\rangle
    &\le
    \langle m_{k+1},x_{k+1}-x_k\rangle
    +
    \|\nabla f(x_{k+1})-m_{k+1}\|_*
    \|x_{k+1}-x_k\| \\
    &\quad+
    L\|x_{k+1}-x_k\|^2 .
\end{align*}
Substituting the preceding relation in \eqref{eq:da_smooth_start} and using \(\|x_{k+1}-x_k\|\le\eta_k\), we get
\[
    f(x_{k+1})
    \le
    f(x_k)
    +
    \langle m_{k+1},x_{k+1}-x_k\rangle
    +
    \eta_k\|\nabla f(x_{k+1})-m_{k+1}\|_*
    +
    \frac{3L}{2}\eta_k^2 .
\]
By construction, \(x_{k+1}\) solves the linear trust-region problem
\[
    \min_{\|x-x_k\|\le\eta_k}
    \langle m_{k+1},x\rangle .
\]
Indeed, Algorithm~\ref{algo_dis_adaptive} chooses \(u_{k+1}\in\argmax_{\|u\|\le1}\langle m_{k+1},u\rangle\) and sets \(x_{k+1}=x_k-\eta_k u_{k+1}\). Applying Lemma~\ref{lem_opt} with \(\Psi(x)=\langle m_{k+1},x\rangle\), \(z=x_k\), \(z_+=x_{k+1}\), and \(\eta=\eta_k\), we obtain
\[
    \langle m_{k+1},x_{k+1}-x_k\rangle
    =
    -\eta_k\|m_{k+1}\|_* .
\]
Therefore,
\[
    f(x_{k+1})
    \le
    f(x_k)
    -
    \eta_k\|m_{k+1}\|_*
    +
    \eta_k\|\nabla f(x_{k+1})-m_{k+1}\|_*
    +
    \frac{3L}{2}\eta_k^2 .
\]
Applying the triangle inequality
\[
    -\|m_{k+1}\|_*
    \le
    \|\nabla f(x_{k+1})-m_{k+1}\|_*
    -
    \|\nabla f(x_{k+1})\|_*
\]
gives
\[
    f(x_{k+1})
    \le
    f(x_k)
    -
    \eta_k\|\nabla f(x_{k+1})\|_*
    +
    2\eta_k\|\nabla f(x_{k+1})-m_{k+1}\|_*
    +
    \frac{3L}{2}\eta_k^2 .
\]
We apply the triangle inequality to the third term on the right-hand side:
\begin{align*}
    f(x_{k+1})
    &\le
    f(x_k)
    -
    \eta_k\|\nabla f(x_{k+1})\|_*
    +
    2\eta_k\|\nabla f(x_k)-m_{k+1}\|_* \\
    &\quad+
    2\eta_k\|\nabla f(x_{k+1})-\nabla f(x_k)\|_*
    +
    \frac{3L}{2}\eta_k^2 \\
    &\le
    f(x_k)
    -
    \eta_k\|\nabla f(x_{k+1})\|_*
    +
    2\eta_k\|\nabla f(x_k)-m_{k+1}\|_* \\
    &\quad+
    2\eta_k L\|x_{k+1}-x_k\|
    +
    \frac{3L}{2}\eta_k^2 .
\end{align*}
Since Algorithm~\ref{algo_dis_adaptive} sets \(m_0=\nabla f(x_0)\) and
\[
    \eta_i=\frac{\bar r_{i+1}}{\sqrt{i+1}},
\]
the third and fourth quantities on the right-hand side can be bounded using Lemma~\ref{lem_momentum_bd} and the trust-region bound \(\|x_{k+1}-x_k\|\le\eta_k\). This yields
\begin{align*}
    f(x_{k+1})
    \le
    f(x_k)
    &-
    \eta_k\|\nabla f(x_{k+1})\|_* \\
    &+
    \frac{2\eta_k L\bar r_k}{\alpha}
    (1-\alpha)^{k/2} \\
    &+
    \frac{2\sqrt{2}(1-\alpha)\eta_k L}
    {\alpha}
    \frac{\bar r_k}{\sqrt{k+2}}
    +
    \frac{7L}{2}\eta_k^2 .
\end{align*}
Note that
\[
    \frac{\bar r_k}{\sqrt{k+2}}
    \le
    \frac{\bar r_{k+1}}{\sqrt{k+1}}
    =
    \eta_k .
\]
Hence the preceding relation simplifies to
\begin{align*}
    f(x_{k+1})
    \le
    f(x_k)
    &-
    \eta_k\|\nabla f(x_{k+1})\|_* \\
    &+
    \frac{2\eta_k L\bar r_k}{\alpha}
    (1-\alpha)^{k/2} \\
    &+
    \left[
        \frac{7}{2}
        +
        \frac{2\sqrt{2}(1-\alpha)}{\alpha}
    \right]
    L\eta_k^2 .
\end{align*}
Rearranging terms and summing both sides from \(k=0\) to \(t\), for all \(t\ge0\), gives
\begin{align}
    \sum_{k=0}^t
    \eta_k\|\nabla f(x_{k+1})\|_*
    \le
    f(x_0)-f(x_{t+1})
    &+
    \frac{2L}{\alpha}
    \sum_{k=0}^t
    \bar r_k\eta_k
    (1-\alpha)^{k/2}
    \nonumber\\
    &+
    \left[
        \frac{7}{2}
        +
        \frac{2\sqrt{2}(1-\alpha)}{\alpha}
    \right]
    L
    \sum_{k=0}^t
    \eta_k^2 .
    \label{eq:da_main_sum}
\end{align}
The left-hand side of \eqref{eq:da_main_sum} can be lower-bounded as
\begin{align}
    \sum_{k=0}^t
    \eta_k\|\nabla f(x_{k+1})\|_*
    &\ge
    \min_{0\le k\le t}
    \|\nabla f(x_{k+1})\|_*
    \sum_{k=0}^t
    \frac{\bar r_{k+1}}{\sqrt{k+1}}  \nonumber\\
    &\ge
    \frac{
    \min_{0\le k\le t}
    \|\nabla f(x_{k+1})\|_*
    }{\sqrt{t+1}}
    \sum_{k=0}^t
    \bar r_{k+1}.
    \label{eq:da_lhs_lower}
\end{align}
The momentum-tracking term on the right-hand side of \eqref{eq:da_main_sum} can be upper-bounded using Lemma~\ref{lem_integration_upbd} with \(a=\sqrt{1-\alpha}\):
\begin{align}
    \frac{2L}{\alpha}
    \sum_{k=0}^t
    \bar r_k\eta_k
    (1-\alpha)^{k/2}
    &\le
    \frac{2L\bar r_{t+1}^2}{\alpha}
    \sum_{k=0}^t
    \frac{(1-\alpha)^{k/2}}{\sqrt{k+1}}
    \nonumber\\
    &\le
    2L\bar r_{t+1}^2 A(\alpha),
    \label{eq:da_momentum_term}
\end{align}
where
\[
    A(\alpha)
    =
    \frac{1}{\alpha}
    \left[
        1
        +
        \frac{1}{\sqrt{1-\alpha}}
        \sqrt{
        \frac{2\pi}
        {\log\!\left(\frac{1}{1-\alpha}\right)}
        }
    \right].
\]
The quadratic term on the right-hand side of \eqref{eq:da_main_sum} can be upper-bounded as
\begin{align}
    \sum_{k=0}^t\eta_k^2
    \le
    \bar r_{t+1}^2
    \sum_{k=0}^t
    \frac{1}{k+1}
    \le
    \bar r_{t+1}^2(1+\log(t+1)).
    \label{eq:da_quadratic_term}
\end{align}
Using \( -f(x_{t+1})\le -f^\star\), and applying \eqref{eq:da_lhs_lower}, \eqref{eq:da_momentum_term}, and \eqref{eq:da_quadratic_term} to \eqref{eq:da_main_sum}, we get
\begin{align}
    \min_{0\le k\le t}
    \|\nabla f(x_{k+1})\|_*
    &\le
    \frac{\sqrt{t+1}(f(x_0)-f^\star)}
    {\sum_{k=0}^t \bar r_{k+1}}
    +
    L C_{\alpha}(t+1)
    \frac{\sqrt{t+1}\bar r_{t+1}^2}
    {\sum_{k=0}^t \bar r_{k+1}}  \nonumber\\
    &\le
    \frac{f(x_0)-f^\star}{r\sqrt{t+1}}
    +
    L C_{\alpha}(t+1)
    \frac{\sqrt{t+1}\bar r_{t+1}^2}
    {\sum_{k=0}^t \bar r_{k+1}},
    \label{eq:da_pre_radius}
\end{align}
where we used \(r\le \bar r_{k+1}\) for all \(k\ge0\), and
\[
    C_{\alpha}(t+1)
    :=
    2A(\alpha)
    +
    \left[
        \frac{7}{2}
        +
        \frac{2\sqrt{2}(1-\alpha)}{\alpha}
    \right]
    (1+\log(t+1)).
\]

For notational convenience, extend the radius sequence by one final update
\[
    \bar r_{T+1}:=\max\{\bar r_T,\|x_T-x_0\|\}.
\]
Since the trajectory bound is used only in the analysis, we may enlarge it and assume without loss of generality that \(D\ge r\). Under Assumption~\ref{assum:da_bounded_trajectory}, this gives
\[
    \bar r_{T+1}\le D .
\]

If \(T=1\), then \eqref{eq:da_pre_radius} with \(t=0\) gives
\[
    \min_{0\le k\le 0}
    \|\nabla f(x_{k+1})\|_*
    \le
    \frac{f(x_0)-f^\star}{r}
    +
    L C_\alpha(1)D,
\]
which is bounded by the claimed estimate because \(D/r\ge1\) and \(\log(eD/r)\ge1\). Hence, it remains to consider \(T\ge2\).

Set \(s=\lfloor T/2\rfloor\). For every \(k\in\{s+1,\ldots,T\}\), applying \eqref{eq:da_pre_radius} with \(t=k-1\) gives
\begin{align}
    \min_{0\le j\le k-1}
    \|\nabla f(x_{j+1})\|_*
    &\le
    \frac{f(x_0)-f^\star}{r\sqrt{k}}
    +
    L C_{\alpha}(k)
    \frac{\sqrt{k}\bar r_k^2}
    {\sum_{i=1}^{k}\bar r_i}.
    \label{eq:da_tail_pre_radius}
\end{align}
Since \(k\ge s+1>T/2\), \(k\le T\), \(C_\alpha(k)\le C_\alpha(T)\), and \(\bar r_k^2\le D\bar r_{k+1}\), we obtain
\[
    \min_{0\le j\le k-1}
    \|\nabla f(x_{j+1})\|_*
    \le
    \frac{\sqrt{2}\bigl(f(x_0)-f^\star\bigr)}{r\sqrt T}
    +
    L C_{\alpha}(T)\sqrt T\,D
    \frac{\bar r_{k+1}}{\sum_{i=1}^{k}\bar r_i}.
\]
Moreover,
\[
    \min_{0\le j\le T-1}
    \|\nabla f(x_{j+1})\|_*
    \le
    \min_{0\le j\le k-1}
    \|\nabla f(x_{j+1})\|_*
\]
for every \(k\le T\). Taking the minimum over \(k\in\{s+1,\ldots,T\}\), we get
\begin{align}
    \min_{0\le j\le T-1}
    \|\nabla f(x_{j+1})\|_*
    &\le
    \frac{\sqrt{2}\bigl(f(x_0)-f^\star\bigr)}{r\sqrt T}
    +
    L C_{\alpha}(T)\sqrt T\,D
    \min_{s+1\le k\le T}
    \frac{\bar r_{k+1}}{\sum_{i=1}^{k}\bar r_i}.
    \label{eq:da_before_tail_radius_lemma}
\end{align}
Applying Lemma~\ref{lem_rad_bd_tail} to the second term on the right-hand side of \eqref{eq:da_before_tail_radius_lemma} yields
\begin{align*}
    \min_{0\le j\le T-1}
    \|\nabla f(x_{j+1})\|_*
    &\le
    \frac{\sqrt{2}\bigl(f(x_0)-f^\star\bigr)}{r\sqrt T}
    +
    \frac{2L C_{\alpha}(T)D}{\sqrt T}
    \left(
        \frac{\bar r_{T+1}}{\bar r_0}
    \right)^{2/T}
    \log\!\left(
        \frac{e\,\bar r_{T+1}}{\bar r_0}
    \right) \\
    &\le
    \frac{\sqrt{2}\bigl(f(x_0)-f^\star\bigr)}{r\sqrt T}
    +
    \frac{2L C_{\alpha}(T)D}{\sqrt T}
    \left(
        \frac{D}{r}
    \right)^{2/T}
    \log\!\left(
        \frac{eD}{r}
    \right),
\end{align*}
where we used \(\bar r_{T+1}\le D\) and \(\bar r_0=r\). Hence, the claim follows.
\end{proof}
\section{Proofs for Scale-Calibrated Muon}
\label{app:sc_muon}

This appendix proves Theorem~\ref{thm:certified_rate}. We use the notation
\[
    \Delta_k:=f(x_k)-f(x^\star),
    \qquad
    q:=1-\alpha,
    \qquad
    \gamma:=1-q^2=\alpha(2-\alpha).
\]
Define
\[
    A:=\frac{\gamma}{16q^2},
    \qquad
    \chi_\alpha:=\min\left\{\frac{3}{8},\frac{\gamma^3}{16q^2}\right\}>0,
\]
and the Lyapunov function
\[
    \Phi_k:=\Delta_k+\frac{A}{L}e_k^2.
\]
Here \(\chi_\alpha\) is a Lyapunov decrease constant and should not be confused with a strong-convexity parameter.

\subsection{Basic descent and error estimates}

\begin{lemma}[Certified descent]
\label{lem:cert_descent}
For every \(k\ge0\), Algorithm~\ref{alg:scale_calibrated_muon} satisfies
\[
    f(x_{k+1})\le f(x_k)-\frac{a_k^2}{2L}.
\]
Consequently, the objective values are nonincreasing.
\end{lemma}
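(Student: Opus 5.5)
The plan is to apply the quadratic upper bound from Assumption~\ref{assum:smoothness} at \(x=x_k\), \(y=x_{k+1}=x_k-\eta_k u_k\), and then lower-bound the linear term using the alignment certificate from Section~\ref{sec:scale_calibrated_muon}. Since \(\|u_k\|\le1\), we have \(\|x_{k+1}-x_k\|\le\eta_k\). This gives
\[
    f(x_{k+1})\le f(x_k)-\eta_k\langle g_k,u_k\rangle+\frac L2\eta_k^2 .
\]

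The key step is to show \(\langle g_k,u_k\rangle\ge a_k\). Write \(\langle g_k,u_k\rangle=\langle m_{k+1},u_k\rangle+\langle g_k-m_{k+1},u_k\rangle\). The first term equals \(\|m_{k+1}\|_*\) by the choice of \(u_k\) as a maximizer. For the second term, H\"older's inequality with \(\|u_k\|\le1\) bounds it below by \(-e_k\). Hence \(\langle g_k,u_k\rangle\ge\|m_{k+1}\|_*-e_k\).

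There is a slight subtlety, the only potential obstacle, in passing from \(\|m_{k+1}\|_*-e_k\) to its positive part \(a_k\). I would split into two cases.

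\textbf{Case \(a_k=0\).} Then \(\eta_k=0\), so \(x_{k+1}=x_k\) and the claim holds trivially.

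\textbf{Case \(a_k>0\).} Then \(a_k=\|m_{k+1}\|_*-e_k\), so \(\langle g_k,u_k\rangle\ge a_k\). Substituting \(\eta_k=a_k/L\ge0\) gives
\[
    f(x_{k+1})\le f(x_k)-\frac{a_k}{L}\,a_k+\frac{L}{2}\frac{a_k^2}{L^2}=f(x_k)-\frac{a_k^2}{2L}.
\]

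Monotonicity of \(f(x_k)\) follows immediately, since \(a_k^2/(2L)\ge0\). By induction, every iterate then satisfies \(f(x_k)\le f(x_0)\), so all iterates lie in the initial sublevel set of Assumption~\ref{assum:bounded_level}.
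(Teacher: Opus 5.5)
Your proposal is correct and follows the paper's proof essentially verbatim: you split on whether \(a_k=0\) (so \(x_{k+1}=x_k\)) or \(a_k>0\), certify \(\langle g_k,u_k\rangle\ge a_k\) via the LMO identity and H\"older, then apply the smoothness upper bound with \(\|u_k\|\le1\) and substitute \(\eta_k=a_k/L\). Your closing remark that the iterates stay in the initial sublevel set is also exactly how the paper uses this lemma later.
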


\begin{proof}
If \(a_k=0\), then \(\eta_k=0\) and \(x_{k+1}=x_k\). Suppose \(a_k>0\). Then \(a_k=\|m_{k+1}\|_*-e_k\), and the choice of \(u_k\) gives
\[
\begin{aligned}
    \langle g_k,u_k\rangle
    &=\langle m_{k+1},u_k\rangle+\langle g_k-m_{k+1},u_k\rangle \\
    &\ge \|m_{k+1}\|_*-e_k=a_k.
\end{aligned}
\]
By \(L\)-smoothness and \(\|u_k\|\le1\),
\[
\begin{aligned}
    f(x_{k+1})
    &\le f(x_k)-\eta_k\langle g_k,u_k\rangle
    +\frac{L}{2}\eta_k^2\|u_k\|^2 \\
    &\le f(x_k)-\eta_k a_k+\frac{L}{2}\eta_k^2.
\end{aligned}
\]
Substituting \(\eta_k=a_k/L\) proves the claim.
\end{proof}

\begin{lemma}[Momentum-error recursion]
\label{lem:cert_error_recursion}
Let \(q:=1-\alpha\). Then
\[
    e_{k+1}\le q(e_k+a_k)
    \qquad \forall k\ge0.
\]
\end{lemma}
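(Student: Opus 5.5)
The plan is to unroll one step of the momentum recursion, which turns \(e_{k+1}\) into a \(q\)-multiple of a gradient-mismatch term. The triangle inequality then splits that term into the previous tracking error \(e_k\) and a gradient difference. Smoothness together with the step-length rule \(\eta_k=a_k/L\) controls the gradient difference.

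First, by the definition of \(e_{k+1}\) and the update \(m_{k+2}=(1-\alpha)m_{k+1}+\alpha g_{k+1}\), I will write
\[
    g_{k+1}-m_{k+2}
    =
    g_{k+1}-q\,m_{k+1}-\alpha g_{k+1}
    =
    q\bigl(g_{k+1}-m_{k+1}\bigr),
\]
so that \(e_{k+1}=q\|g_{k+1}-m_{k+1}\|_*\). This is an exact identity, and it uses \(1-\alpha=q\).

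Next, I insert \(g_k\) and apply the triangle inequality:
\[
    \|g_{k+1}-m_{k+1}\|_*
    \le
    \|g_{k+1}-g_k\|_*+\|g_k-m_{k+1}\|_*
    =
    \|g_{k+1}-g_k\|_*+e_k .
\]
By Assumption~\ref{assum:smoothness}, \(\|g_{k+1}-g_k\|_*\le L\|x_{k+1}-x_k\|\). The update of Algorithm~\ref{alg:scale_calibrated_muon} gives \(\|x_{k+1}-x_k\|=\eta_k\|u_k\|\le\eta_k=a_k/L\). Hence \(\|g_{k+1}-g_k\|_*\le a_k\). This also covers the case \(a_k=0\), in which \(x_{k+1}=x_k\) and the gradient difference vanishes.

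Combining the two displays gives \(e_{k+1}\le q(e_k+a_k)\) for every \(k\ge0\). No step here is a genuine obstacle. The only point needing care is index bookkeeping: \(e_{k+1}\) is defined with \(m_{k+2}\) and \(g_{k+1}\), not with \(m_{k+1}\). It is exactly this indexing that produces the clean factor \(q\) in front of \(g_{k+1}-m_{k+1}\).
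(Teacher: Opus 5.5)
Your proof is correct and follows the same route as the paper: the exact identity $g_{k+1}-m_{k+2}=q(g_{k+1}-m_{k+1})$, the triangle inequality through $g_k$, and smoothness with $\|x_{k+1}-x_k\|\le\eta_k=a_k/L$. Your explicit remark on the case $a_k=0$ is a harmless addition, since it is already covered by the same bound.
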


\begin{proof}
Since \(m_{k+2}=q m_{k+1}+\alpha g_{k+1}\), we have
\[
    g_{k+1}-m_{k+2}=q(g_{k+1}-m_{k+1}).
\]
Therefore, using smoothness and \(\|x_{k+1}-x_k\|\le \eta_k=a_k/L\),
\[
\begin{aligned}
    e_{k+1}
    &=\|g_{k+1}-m_{k+2}\|_* \\
    &\le q\|g_{k+1}-g_k\|_*+q\|g_k-m_{k+1}\|_* \\
    &\le qL\|x_{k+1}-x_k\|+q e_k
    \le q(a_k+e_k).
\end{aligned}
\]
\end{proof}

\begin{lemma}[Gap-to-certificate relation]
\label{lem:cert_gap_relation}
Let \(\Delta_k:=f(x_k)-f(x^\star)\). Under Assumptions~\ref{assum:star_convex} and~\ref{assum:bounded_level}, the iterates of Algorithm~\ref{alg:scale_calibrated_muon} satisfy
\[
    \Delta_k\le D_{\rm lev}(a_k+2e_k)
    \qquad \forall k\ge0.
\]
\end{lemma}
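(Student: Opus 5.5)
We combine star-convexity with a lower bound on the dual gradient norm in terms of the certificate, and use the bounded sublevel set to control the distance to the minimizer.

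\emph{Step 1: the iterates stay in the initial sublevel set.} By Lemma~\ref{lem:cert_descent}, $f(x_k)\le f(x_0)$ for every $k\ge0$. Assumption~\ref{assum:bounded_level} then gives $\|x_k-x^\star\|\le D_{\rm lev}$.

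\emph{Step 2: star-convexity.} Assumption~\ref{assum:star_convex} together with H\"older's inequality yields
\[
    \Delta_k\le\langle g_k,x_k-x^\star\rangle\le \|g_k\|_*\,\|x_k-x^\star\|\le D_{\rm lev}\|g_k\|_* .
\]
So it suffices to show $\|g_k\|_*\le a_k+2e_k$.

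\emph{Step 3: bounding the gradient norm.} By the triangle inequality, $\|g_k\|_*\le \|m_{k+1}\|_*+\|g_k-m_{k+1}\|_*=\|m_{k+1}\|_*+e_k$. We then split on the sign inside the positive part defining $a_k$.
\begin{itemize}
    \item If $\|m_{k+1}\|_*\ge e_k$, then $\|m_{k+1}\|_*=a_k+e_k$. Hence $\|g_k\|_*\le a_k+2e_k$.
    \item Otherwise $a_k=0$ and $\|m_{k+1}\|_*<e_k$. Hence $\|g_k\|_*\le 2e_k=a_k+2e_k$.
\end{itemize}
Both cases give $\|m_{k+1}\|_*\le a_k+e_k$, and therefore $\|g_k\|_*\le a_k+2e_k$.

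Combining Steps 2 and 3 gives $\Delta_k\le D_{\rm lev}(a_k+2e_k)$.

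No real obstacle is expected. The only point needing care is Step 1: the monotonicity from Lemma~\ref{lem:cert_descent} is what lets us apply the sublevel-set bound at every iterate, not just at $x_0$.
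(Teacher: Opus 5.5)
Your proof is correct and follows the paper's argument essentially verbatim. You use monotonicity from Lemma~\ref{lem:cert_descent} to place $x_k$ in the initial sublevel set, star-convexity with the dual-norm bound to get $\Delta_k\le D_{\rm lev}\|g_k\|_*$, and a two-case bound $\|g_k\|_*\le a_k+2e_k$; your split on the sign of $\|m_{k+1}\|_*-e_k$ is equivalent to the paper's split on $a_k>0$ versus $a_k=0$ (and, like the paper, you implicitly rely on the smoothness assumption through Lemma~\ref{lem:cert_descent}).
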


\begin{proof}
By Lemma~\ref{lem:cert_descent}, \(f(x_k)\le f(x_0)\) for all \(k\). Hence Assumption~\ref{assum:bounded_level} gives
\[
    \|x_k-x^\star\|\le D_{\rm lev}.
\]
Star-convexity implies
\[
    \Delta_k
    \le \langle g_k,x_k-x^\star\rangle
    \le D_{\rm lev}\|g_k\|_*.
\]
If \(a_k>0\), then \(a_k=\|m_{k+1}\|_*-e_k\), hence
\[
    \|m_{k+1}\|_*=a_k+e_k
\]
and
\[
    \|g_k\|_*
    \le
    \|m_{k+1}\|_*+\|g_k-m_{k+1}\|_*
    =
    a_k+2e_k.
\]
If \(a_k=0\), then \(\|m_{k+1}\|_*\le e_k\), so
\[
    \|g_k\|_*\le 2e_k=a_k+2e_k.
\]
Combining these inequalities proves the result.
\end{proof}

\subsection{Lyapunov analysis}

The objective gap alone may not decrease rapidly when the momentum is stale. The Lyapunov function \(\Phi_k\) combines the objective gap with the momentum-tracking error, allowing the descent generated by the certificate \(a_k\) to amortize the stale momentum error.

\begin{lemma}[Lyapunov descent]
\label{lem:cert_lyap_descent}
For all \(k\ge0\),
\[
    \Phi_{k+1}\le \Phi_k-\frac{\chi_\alpha}{L}(a_k^2+e_k^2).
\]
\end{lemma}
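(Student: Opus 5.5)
The plan is to combine the certified descent of Lemma~\ref{lem:cert_descent} with a squared version of the error recursion of Lemma~\ref{lem:cert_error_recursion}. A Young-type splitting with a carefully chosen parameter then makes the coefficients of \(a_k^2\) and \(e_k^2\) match \(\chi_\alpha\). Star-convexity and Lemma~\ref{lem:cert_gap_relation} are not needed for this step; they enter only later, when \(\Phi_k\) is related back to \(a_k^2+e_k^2\).

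\textbf{Step 1: squared error recursion.} Squaring Lemma~\ref{lem:cert_error_recursion} and applying \((x+y)^2\le(1+\varepsilon)x^2+(1+1/\varepsilon)y^2\) gives
\[
    e_{k+1}^2\le q^2(1+\varepsilon)e_k^2+q^2\Bigl(1+\tfrac1\varepsilon\Bigr)a_k^2 .
\]
I will take \(\varepsilon=\gamma\). Since \(q^2=1-\gamma\), this yields
\[
    q^2(1+\gamma)=1-\gamma^2-\gamma q^2\cdot 0\;\;\text{i.e.}\;\;\gamma-q^2\gamma=\gamma(1-q^2)=\gamma^2,
\]
so the contraction gap on \(e_k^2\) is \(1-q^2(1+\varepsilon)=\gamma^2\). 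For the other coefficient, \(1+1/\gamma=(1+\gamma)/\gamma\le 2/\gamma\), which gives \(e_{k+1}^2\le(1-\gamma^2)e_k^2+\frac{2q^2}{\gamma}a_k^2\).

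\textbf{Step 2: combine.} Adding \(\frac{A}{L}\) times this bound to \(\Delta_{k+1}\le\Delta_k-\frac{a_k^2}{2L}\) from Lemma~\ref{lem:cert_descent} gives
\[
    \Phi_{k+1}\le\Phi_k-\frac1L\Bigl[\Bigl(\tfrac12-\tfrac{2Aq^2}{\gamma}\Bigr)a_k^2+A\gamma^2e_k^2\Bigr].
\]
With \(A=\gamma/(16q^2)\), the \(a_k^2\) coefficient is \(\tfrac12-\tfrac18=\tfrac38\), and the \(e_k^2\) coefficient is \(\gamma^3/(16q^2)\). Both are at least \(\chi_\alpha\), which is the claim.

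\textbf{Edge cases.} The case \(a_k=0\), where \(x_{k+1}=x_k\), is already covered: both lemmas used hold for every \(k\), so no separate treatment is needed. The expected obstacle is only the choice of \(\varepsilon\). The natural choice \(\varepsilon=\gamma/(2q^2)\) yields the weaker constant \(\gamma^2/(32q^2)\), which does not dominate \(\gamma^3/(16q^2)\) when \(\gamma>1/2\). Choosing \(\varepsilon=\gamma\) instead reproduces exactly the constants defining \(A\) and \(\chi_\alpha\).
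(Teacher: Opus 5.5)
Your proof is correct and is essentially the paper's argument. The split $(x+y)^2\le(1+\gamma)x^2+(1+1/\gamma)y^2$, multiplied by $Aq^2=\gamma/16$, is exactly the paper's Young inequality $\frac{\gamma}{8}a_ke_k\le\frac{1}{16}a_k^2+\frac{\gamma^2}{16}e_k^2$; you apply it to the squared error recursion before adding the certified descent rather than after, and it gives the same coefficients $3/8$ and $\gamma^3/(16q^2)$, while the stray term $-\gamma q^2\cdot 0$ in your Step~1 display is only a typo, since $1-q^2(1+\gamma)=\gamma(1-q^2)=\gamma^2$ holds.
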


\begin{proof}
By Lemmas~\ref{lem:cert_descent} and~\ref{lem:cert_error_recursion},
\[
\begin{aligned}
    \Phi_k-\Phi_{k+1}
    &\ge
    \frac{a_k^2}{2L}
    -
    \frac{A}{L}
    \left(q^2(e_k+a_k)^2-e_k^2\right) \\
    &=
    \frac{1}{L}
    \left[
        \left(\frac12-Aq^2\right)a_k^2
        -2Aq^2a_ke_k
        +A(1-q^2)e_k^2
    \right].
\end{aligned}
\]
Using \(A=\gamma/(16q^2)\) and \(\gamma=1-q^2\), the bracket equals
\[
    \left(\frac12-\frac{\gamma}{16}\right)a_k^2
    -\frac{\gamma}{8}a_ke_k
    +\frac{\gamma^2}{16q^2}e_k^2.
\]
Since \(\gamma\le1\), the first coefficient is at least \(7/16\). By Young's inequality,
\[
    \frac{\gamma}{8}a_ke_k
    \le
    \frac{1}{16}a_k^2+\frac{\gamma^2}{16}e_k^2.
\]
Thus the bracket is at least
\[
    \frac{3}{8}a_k^2+\frac{\gamma^3}{16q^2}e_k^2
    \ge
    \chi_\alpha(a_k^2+e_k^2),
\]
which proves the lemma.
\end{proof}

\begin{lemma}[Potential control]
\label{lem:cert_phi_control}
For all \(k\ge0\),
\[
    \Phi_k\le C_{{\rm sc},\alpha}D_{\rm lev}\sqrt{a_k^2+e_k^2},
    \qquad
    C_{{\rm sc},\alpha}:=\sqrt5+2A.
\]
\end{lemma}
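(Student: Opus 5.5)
We bound the two pieces of \(\Phi_k=\Delta_k+\frac{A}{L}e_k^2\) separately and then combine them with Cauchy--Schwarz. For the objective part we use Lemma~\ref{lem:cert_gap_relation}, which gives \(\Delta_k\le D_{\rm lev}(a_k+2e_k)\). By Cauchy--Schwarz in \(\mathbb R^2\),
\[
    a_k+2e_k\le\sqrt{1^2+2^2}\,\sqrt{a_k^2+e_k^2}=\sqrt5\,\sqrt{a_k^2+e_k^2},
\]
so
\[
    \Delta_k\le\sqrt5\,D_{\rm lev}\sqrt{a_k^2+e_k^2}.
\]
This accounts for the \(\sqrt5\) in \(C_{{\rm sc},\alpha}\).

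For the error part we need \(\frac{A}{L}e_k^2\le 2A\,D_{\rm lev}\sqrt{a_k^2+e_k^2}\). Since \(e_k\le\sqrt{a_k^2+e_k^2}\), it is enough to show the a priori bound
\[
    e_k\le 2LD_{\rm lev}\qquad\text{for all }k\ge0.
\]
This is the main step, because \(e_k\) is not directly controlled by the level set. We will prove it by bounding gradients: every iterate lies in the initial sublevel set by Lemma~\ref{lem:cert_descent}, so \(\|x_k-x^\star\|\le D_{\rm lev}\). Since \(\nabla f(x^\star)=0\), \(L\)-smoothness then gives
\[
    \|g_k\|_*=\|\nabla f(x_k)-\nabla f(x^\star)\|_*\le LD_{\rm lev}.
\]

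Next we bound the momentum. It is a convex combination of past gradients: with \(m_0=g_0\), unrolling gives \(m_{k+1}=q^{k+1}g_0+\sum_{i=0}^{k}\alpha q^{k-i}g_i\), and the weights sum to \(1\). Hence \(\|m_{k+1}\|_*\le LD_{\rm lev}\), and therefore
\[
    e_k=\|g_k-m_{k+1}\|_*\le\|g_k\|_*+\|m_{k+1}\|_*\le 2LD_{\rm lev}.
\]
A sharper route also works: \(g_k-m_{k+1}=q(g_k-m_k)\), and \(\|g_k-m_k\|_*\le 2LD_{\rm lev}\) by the same argument. This bound only needs the gradient bound along the trajectory, so it does not depend on the Lyapunov recursion.

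Combining the two parts,
\[
    \frac{A}{L}e_k^2\le\frac{A}{L}\,e_k\cdot 2LD_{\rm lev}=2A\,D_{\rm lev}e_k\le 2A\,D_{\rm lev}\sqrt{a_k^2+e_k^2}.
\]
Adding this to the bound on \(\Delta_k\) gives \(\Phi_k\le(\sqrt5+2A)D_{\rm lev}\sqrt{a_k^2+e_k^2}\), which is the claim.
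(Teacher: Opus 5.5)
Your proposal is correct and follows the paper's proof essentially step for step. The $\sqrt5$ term comes from Cauchy--Schwarz applied to $\Delta_k\le D_{\rm lev}(a_k+2e_k)$, and the $2A$ term from the a priori bound $e_k\le 2LD_{\rm lev}$, which rests on $\|g_i\|_*\le LD_{\rm lev}$ along the monotone trajectory and on $m_{k+1}$ being a convex combination of past gradients. Your explicit unrolling of the momentum weights, and the sharper $e_k\le 2qLD_{\rm lev}$ remark, only fill in details the paper states without computation.
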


\begin{proof}
Lemma~\ref{lem:cert_gap_relation} gives
\[
    \Delta_k
    \le
    D_{\rm lev}(a_k+2e_k)
    \le
    \sqrt5\,D_{\rm lev}\sqrt{a_k^2+e_k^2}.
\]
It remains to bound \(Ae_k^2/L\). Since \(f(x_k)\le f(x_0)\), Assumption~\ref{assum:bounded_level} gives
\[
    \|x_k-x^\star\|\le D_{\rm lev}.
\]
Because we are in the unconstrained differentiable setting,
\[
    \nabla f(x^\star)=0.
\]
Smoothness gives
\[
    \|g_k\|_*\le LD_{\rm lev}.
\]
Moreover, \(m_{k+1}\) is a convex combination of past gradients and all past iterates remain in the same initial sublevel set, so
\[
    \|m_{k+1}\|_*\le LD_{\rm lev}.
\]
Therefore
\[
    e_k\le 2LD_{\rm lev},
\]
and hence
\[
    \frac{e_k^2}{L}
    \le
    2D_{\rm lev}e_k
    \le
    2D_{\rm lev}\sqrt{a_k^2+e_k^2}.
\]
Combining the two estimates yields the claim.
\end{proof}

\subsection{Refined Lyapunov guarantee}

\begin{theorem}[Refined Lyapunov guarantee for Scale-Calibrated Muon]
\label{thm:sc_muon_refined}
Let Assumptions~\ref{assum:smoothness}, \ref{assum:star_convex}, and \ref{assum:bounded_level} hold, and consider Algorithm~\ref{alg:scale_calibrated_muon} with \(\alpha\in(0,1)\). Then
\[
    \Delta_T\le \Phi_T
    \qquad
    \text{for all } T\ge0.
\]
Moreover, if \(\Phi_0=0\), then \(\Phi_T=0\) for all \(T\ge0\). Otherwise, for all \(T\ge0\),
\[
    \Phi_T
    \le
    \frac{1}{
        \Phi_0^{-1}
        +
        \dfrac{\chi_\alpha T}
        {C_{{\rm sc},\alpha}^2LD_{\rm lev}^2}
    },
    \qquad
    \Phi_0=\Delta_0.
\]
In particular, for all \(T\ge1\),
\[
    f(x_T)-f(x^\star)
    \le
    \frac{C_{{\rm sc},\alpha}^2}{\chi_\alpha}
    \frac{LD_{\rm lev}^2}{T}.
\]
\end{theorem}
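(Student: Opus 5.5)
The plan is to combine Lemma~\ref{lem:cert_lyap_descent} and Lemma~\ref{lem:cert_phi_control} into a one-step recursion of the form \(\Phi_{k+1}\le\Phi_k-c\Phi_k^2\), and then invert it in the standard way.

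\textbf{Initial facts.} First, \(\Delta_T\le\Phi_T\) holds trivially, because \(A>0\) and \(e_T^2\ge0\). Next, \(\Phi_0=\Delta_0\), since \(m_1=(1-\alpha)g_0+\alpha g_0=g_0\) and hence \(e_0=0\). Finally, \(\Phi_k\ge0\) for all \(k\), and by Lemma~\ref{lem:cert_lyap_descent} the sequence \(\Phi_k\) is nonincreasing. Therefore, if \(\Phi_0=0\), then \(\Phi_T=0\) for all \(T\).

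\textbf{Key recursion.} Lemma~\ref{lem:cert_phi_control} gives
\[
a_k^2+e_k^2\ge \frac{\Phi_k^2}{C_{{\rm sc},\alpha}^2D_{\rm lev}^2}.
\]
Substituting this into Lemma~\ref{lem:cert_lyap_descent} yields
\[
\Phi_{k+1}\le\Phi_k-c\,\Phi_k^2,\qquad c:=\frac{\chi_\alpha}{C_{{\rm sc},\alpha}^2LD_{\rm lev}^2}.
\]

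\textbf{Inverting the recursion.} Suppose \(\Phi_0>0\). If some \(\Phi_k=0\), then all later values are zero by monotonicity, and the bound holds trivially. Otherwise \(\Phi_{k+1}>0\), and dividing the recursion by \(\Phi_k\Phi_{k+1}\) gives
\[
\frac{1}{\Phi_{k+1}}-\frac{1}{\Phi_k}\ge c\,\frac{\Phi_k}{\Phi_{k+1}}\ge c,
\]
where the last inequality uses \(\Phi_{k+1}\le\Phi_k\). Telescoping from \(0\) to \(T-1\) gives \(\Phi_T^{-1}\ge\Phi_0^{-1}+cT\). This is exactly the stated bound. Dropping the \(\Phi_0^{-1}\) term then gives
\[
\Delta_T\le\Phi_T\le\frac{1}{cT}=\frac{C_{{\rm sc},\alpha}^2}{\chi_\alpha}\,\frac{LD_{\rm lev}^2}{T}\qquad\text{for }T\ge1,
\]
which also proves Theorem~\ref{thm:certified_rate} with \(K_\alpha=C_{{\rm sc},\alpha}^2/\chi_\alpha\).

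\textbf{Main obstacle.} The delicate point is making sure Lemma~\ref{lem:cert_phi_control} is legitimately applicable at every index \(k\). It relies on all iterates staying in the initial sublevel set and on the bound \(\|m_{k+1}\|_*\le LD_{\rm lev}\). Both follow from the monotonicity in Lemma~\ref{lem:cert_descent}, together with \(m_{k+1}\) being a convex combination of the gradients \(g_0,\dots,g_k\). The remaining edge cases, namely \(\Phi_k=0\) midway and division by zero in the reciprocal step, are handled by monotonicity as described above.
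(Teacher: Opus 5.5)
Your proposal is correct and follows the paper's proof essentially step for step. You combine Lemma~\ref{lem:cert_phi_control} with Lemma~\ref{lem:cert_lyap_descent} to get $\Phi_{k+1}\le\Phi_k-c\,\Phi_k^2$, invert through reciprocals and telescope, and use $e_0=0$ to identify $\Phi_0=\Delta_0$. The only cosmetic difference is in the reciprocal step: you divide by $\Phi_k\Phi_{k+1}$ and use monotonicity, whereas the paper uses $1/\bigl(\Phi_k(1-\theta_\alpha\Phi_k)\bigr)\ge 1/\Phi_k+\theta_\alpha$.
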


\begin{proof}
By Lemma~\ref{lem:cert_phi_control},
\[
    a_k^2+e_k^2
    \ge
    \frac{\Phi_k^2}{C_{{\rm sc},\alpha}^2D_{\rm lev}^2}.
\]
Combining this with Lemma~\ref{lem:cert_lyap_descent},
\[
    \Phi_{k+1}
    \le
    \Phi_k
    -
    \frac{\chi_\alpha}
    {C_{{\rm sc},\alpha}^2LD_{\rm lev}^2}
    \Phi_k^2.
\]
Let
\[
    \theta_\alpha
    :=
    \frac{\chi_\alpha}
    {C_{{\rm sc},\alpha}^2LD_{\rm lev}^2}.
\]
If \(\Phi_k=0\), then the preceding recursion and nonnegativity of \(\Phi_{k+1}\) imply \(\Phi_{k+1}=0\). Thus, if \(\Phi_0=0\), then \(\Phi_T=0\) for all \(T\ge0\).

Now suppose \(\Phi_0>0\). As long as \(\Phi_k>0\), the recursion gives
\[
    \Phi_{k+1}
    \le
    \Phi_k(1-\theta_\alpha\Phi_k).
\]
Since \(\Phi_{k+1}\ge0\), we have \(\theta_\alpha\Phi_k\le1\), and therefore
\[
    \frac{1}{\Phi_{k+1}}
    \ge
    \frac{1}{\Phi_k(1-\theta_\alpha\Phi_k)}
    \ge
    \frac{1}{\Phi_k}+\theta_\alpha.
\]
Summing from \(k=0\) to \(T-1\) yields
\[
    \frac{1}{\Phi_T}
    \ge
    \frac{1}{\Phi_0}+\theta_\alpha T.
\]
Equivalently,
\[
    \Phi_T
    \le
    \frac{1}{
        \Phi_0^{-1}
        +
        \dfrac{\chi_\alpha T}
        {C_{{\rm sc},\alpha}^2LD_{\rm lev}^2}
    }.
\]
Finally, since \(m_0=g_0\), we have
\[
    m_1=(1-\alpha)m_0+\alpha g_0=g_0,
\]
so \(e_0=0\) and hence \(\Phi_0=\Delta_0\). Also,
\[
    \Delta_T\le\Phi_T
\]
by definition of \(\Phi_T\). For \(T\ge1\), the displayed bound implies
\[
    f(x_T)-f(x^\star)
    =
    \Delta_T
    \le
    \frac{C_{{\rm sc},\alpha}^2}{\chi_\alpha}
    \frac{LD_{\rm lev}^2}{T}.
\]
This proves the theorem.
\end{proof}

\begin{proof}[Proof of Theorem~\ref{thm:certified_rate}]
Theorem~\ref{thm:sc_muon_refined} gives, for all \(T\ge1\),
\[
    f(x_T)-f(x^\star)
    \le
    \frac{C_{{\rm sc},\alpha}^2}{\chi_\alpha}
    \frac{LD_{\rm lev}^2}{T}.
\]
Thus Theorem~\ref{thm:certified_rate} holds with
\[
    K_\alpha:=\frac{C_{{\rm sc},\alpha}^2}{\chi_\alpha}.
\]
The distance parameter \(D_{\rm lev}\) appears only in the analysis through Assumption~\ref{assum:bounded_level}; it is not used by Algorithm~\ref{alg:scale_calibrated_muon}.
\end{proof}
\section{Proofs for Distance-Free Muon}
\label{app:df_muon}

This appendix proves Theorem~\ref{thm:df_muon}. Throughout the section, let
\[
    f^\star:=f(x^\star),
    \qquad
    D:=\|x_0-x^\star\|,
    \qquad
    y_k:=x_k-x_0,
    \qquad
    u^\star:=x^\star-x_0 .
\]
Thus \(D=\|u^\star\|\). We also write
\[
    \Delta_k:=f(x_k)-f^\star .
\]

\subsection{Validity of the scalar D-certificate}

The scalar proxy \(d_k\) is inspired by D-adaptation, but the proof below only uses the one-sided lower-certificate property \(d_k\le D\).

\begin{lemma}[Validity of the D-certificate]
\label{lem:df_muon_dcert}
Suppose Assumption~\ref{assum:star_convex} holds and \(0\le d_0\le D\). Then the sequence generated by Algorithm~\ref{alg:pf_tr_muon} satisfies
\[
    d_k\le D
    \qquad
    \text{for all } k\ge0.
\]
\end{lemma}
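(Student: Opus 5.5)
The proof is by induction on \(k\), in the spirit of the D-adaptation lower-bound argument. Since \(d_{k+1}=\max\{d_k,\widehat d_{k+1}\}\) and \(d_0\le D\) by hypothesis, it suffices to show \(\widehat d_{k+1}\le D\) for every \(k\ge0\).

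If \(\|S_{k+1}\|_*=0\), then \(\widehat d_{k+1}=0\le D\). If \([B_{k+1}]_+=0\), the same holds. So it remains to treat the case \(\|S_{k+1}\|_*>0\) and \(B_{k+1}>0\).

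The key step is the inequality
\[
    B_{k+1}\le \langle S_{k+1},u^\star\rangle,
    \qquad u^\star=x^\star-x_0 .
\]
Unrolling the recursions with \(S_0=0\) and \(B_0=0\) gives
\[
    S_{k+1}=\sum_{i=0}^k\omega_i g_i,
    \qquad
    B_{k+1}=-\sum_{i=0}^k\omega_i\langle g_i,y_i\rangle,
\]
so that
\[
    \langle S_{k+1},u^\star\rangle-B_{k+1}
    =\sum_{i=0}^k\omega_i\langle g_i,u^\star+y_i\rangle
    =\sum_{i=0}^k\omega_i\langle g_i,x_i-x^\star\rangle .
\]
By Assumption~\ref{assum:star_convex}, each term satisfies \(\langle g_i,x_i-x^\star\rangle\ge f(x_i)-f^\star\ge0\). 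Since the weights satisfy \(\omega_i\ge0\), the sum is nonnegative, which establishes the key inequality.

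Hölder's inequality then gives
\[
    B_{k+1}\le\langle S_{k+1},u^\star\rangle\le\|S_{k+1}\|_*\,D .
\]
Dividing by \(\|S_{k+1}\|_*>0\) yields \(\widehat d_{k+1}=[B_{k+1}]_+/\|S_{k+1}\|_*\le D\), which completes the induction.

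The only step that needs care is the sign bookkeeping. The point is that \(-\langle g_i,y_i\rangle+\langle g_i,u^\star\rangle\) recombines exactly into \(\langle g_i,x_i-x^\star\rangle\). The argument uses only star-convexity and \(\omega_i\ge0\); no smoothness is needed, and nothing depends on how the iterates \(x_i\) were generated.
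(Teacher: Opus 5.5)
There is a genuine error at the step you yourself flagged as needing care. Your key inequality $B_{k+1}\le\langle S_{k+1},u^\star\rangle$ is false, and the second equality in your display is wrong. With $u^\star=x^\star-x_0$ and $y_i=x_i-x_0$, you get $u^\star+y_i=x_i+x^\star-2x_0$, not $x_i-x^\star$. The vector equal to $x_i-x^\star$ is $y_i-u^\star$. Your closing remark is also off by a sign: $-\langle g_i,y_i\rangle+\langle g_i,u^\star\rangle=\langle g_i,x^\star-x_i\rangle\le-(f(x_i)-f^\star)\le0$.

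Here is a counterexample to your key inequality. Take $f(x)=x^2/2$ on $\mathbb R$, $x_0=1$, $x^\star=0$, $k=0$ and $\omega_0>0$. Then $y_0=0$ gives $B_1=0$, while $\langle S_1,u^\star\rangle=\omega_0\cdot 1\cdot(-1)=-\omega_0<0$. So the inequality you assert fails.

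The correct bookkeeping is
\[
-B_{k+1}-\langle S_{k+1},u^\star\rangle=\sum_{i=0}^k\omega_i\langle g_i,y_i-u^\star\rangle=\sum_{i=0}^k\omega_i\langle g_i,x_i-x^\star\rangle\ge\sum_{i=0}^k\omega_i\bigl(f(x_i)-f^\star\bigr)\ge0,
\]
that is, $B_{k+1}\le\langle S_{k+1},x_0-x^\star\rangle$. Hölder's inequality applies equally to this pairing and gives $B_{k+1}\le\|S_{k+1}\|_*\,\|x_0-x^\star\|=D\|S_{k+1}\|_*$. The rest of your argument then goes through unchanged: the reduction to $\widehat d_{k+1}\le D$, the case split, the induction through the maximum, and the observation that only star-convexity and $\omega_i\ge0$ are used. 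With this one sign corrected, your proof coincides with the paper's.
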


\begin{proof}
By Assumption~\ref{assum:star_convex}, for every \(i\),
\[
    f(x_i)-f^\star
    \le
    \langle \nabla f(x_i),x_i-x^\star\rangle .
\]
Since \(x_i-x^\star=(x_i-x_0)-(x^\star-x_0)=y_i-u^\star\), we obtain
\[
    f(x_i)-f^\star
    \le
    \langle \nabla f(x_i),y_i\rangle
    -
    \langle \nabla f(x_i),u^\star\rangle .
\]
Multiplying by \(\omega_i\ge0\) and summing from \(i=0\) to \(k\) gives
\[
    \sum_{i=0}^k\omega_i(f(x_i)-f^\star)
    \le
    \sum_{i=0}^k\omega_i
    \langle \nabla f(x_i),y_i\rangle
    -
    \left\langle
        \sum_{i=0}^k\omega_i\nabla f(x_i),
        u^\star
    \right\rangle .
\]
The left-hand side is nonnegative because \(x^\star\) is a global minimizer. Using the definitions
\[
    S_{k+1}:=\sum_{i=0}^k\omega_i\nabla f(x_i),
    \qquad
    B_{k+1}:=
    -\sum_{i=0}^k\omega_i
    \langle\nabla f(x_i),x_i-x_0\rangle ,
\]
we therefore have
\[
    0
    \le
    -B_{k+1}
    -
    \langle S_{k+1},u^\star\rangle .
\]
Equivalently,
\[
    B_{k+1}
    \le
    -\langle S_{k+1},u^\star\rangle
    \le
    \|S_{k+1}\|_*\,\|u^\star\|
    =
    D\|S_{k+1}\|_*.
\]
If \(\|S_{k+1}\|_*>0\), then
\[
    \widehat d_{k+1}
    =
    \frac{[B_{k+1}]_+}{\|S_{k+1}\|_*}
    \le D.
\]
If \(\|S_{k+1}\|_*=0\), then by definition \(\widehat d_{k+1}=0\le D\). Since \(d_{k+1}=\max\{d_k,\widehat d_{k+1}\}\) and \(d_0\le D\), induction gives the claim.
\end{proof}

\subsection{Existence and convexity of the scalar search}

\begin{lemma}[Existence and convexity of the majorized scalar search]
\label{lem:df_muon_existence}
For every \(k\), the scalar minimization problem defining \(R_k\) in Algorithm~\ref{alg:pf_tr_muon} admits a solution. Moreover, \(R\mapsto\mathcal Q_k(R)\) is convex on \([0,\infty)\).
\end{lemma}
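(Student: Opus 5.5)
We will show that \(\mathcal Q_k\) is a continuous, convex, coercive function of \(R\) on \([0,\infty)\). Existence of a minimizer then follows from the Weierstrass theorem on a compact sublevel set.

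\textbf{Convexity.} We treat each summand of \eqref{eq:df_majorant} as a function of \(R\), with \(k\) fixed. The term \(f(x_k)\) is constant. The term \(\beta\langle g_k,Rs_k-y_k\rangle=\beta R\langle g_k,s_k\rangle-\beta\langle g_k,y_k\rangle\) is affine in \(R\). Each of \(R\mapsto Rs_k-y_k\) and \(R\mapsto(1-\beta)y_k+\beta Rs_k\) is an affine map into \(\mathcal E\). Composing an affine map with the convex function \(v\mapsto\|v\|^2\) gives a convex function; the latter is convex as the square of the nonnegative convex function \(\|\cdot\|\), squared by the nondecreasing convex map \(t\mapsto t^2\) on \([0,\infty)\). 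The three squared-norm terms carry the nonnegative coefficients \(L\beta^2/2\), \(M\beta L\), and \(\rho L\beta^2\), so they are convex. The regularizer \(\tfrac{\lambda L\beta^2}{2}(R-d_{k+1})^2\) is a convex quadratic. A sum of convex functions is convex, so \(\mathcal Q_k\) is convex on \([0,\infty)\); it is in fact convex on all of \(\mathbb R\). Continuity holds because each term is a composition of continuous maps.

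\textbf{Existence.} The main point is coercivity, and the regularizer supplies it without any use of \(s_k\). There is a subtlety here: \(s_k\) may be zero if \(m_{k+1}=0\), since the argmin may then be taken as \(0\), and in that case the norm terms give no growth. All three squared-norm terms are nonnegative, so we can drop them:
\[
    \mathcal Q_k(R)\ge f(x_k)-\beta\|g_k\|_*\bigl(R\|s_k\|+\|y_k\|\bigr)+\frac{\lambda L\beta^2}{2}(R-d_{k+1})^2 .
\]
This lower bound is a quadratic in \(R\) with positive leading coefficient \(\lambda L\beta^2/2\), which is positive because \(\lambda,L,\beta>0\). Hence \(\mathcal Q_k(R)\to\infty\) as \(R\to\infty\). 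It follows that the sublevel set \(\{R\ge0:\mathcal Q_k(R)\le\mathcal Q_k(0)\}\) is closed, nonempty, and bounded, and therefore compact. The continuous function \(\mathcal Q_k\) attains its minimum on this set, and that minimizer is a global minimizer over \([0,\infty)\).

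If \(L=0\), the function \(f\) has constant gradient. Star-convexity together with the existence of a minimizer then forces \(g_k=0\), so \(\mathcal Q_k\) is constant and any \(R\ge0\), for instance \(R=0\), is a minimizer. We will note this degenerate case in one line.

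No step is expected to be a real obstacle. The only point needing care is to base coercivity on the \(\lambda\)-regularizer rather than on \(\|s_k\|\), which may vanish.
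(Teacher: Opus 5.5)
Your proof is correct and follows essentially the same route as the paper: the $L=0$ case is handled separately, convexity comes from squared norms of affine maps of $R$ plus linear and quadratic terms, and existence follows from continuity together with coercivity supplied by the regularizer $\frac{\lambda L\beta^2}{2}(R-d_{k+1})^2$. Your explicit lower bound is a slightly more careful version of the paper's one-line coercivity claim: it shows that the possibly negative linear term $\beta R\langle g_k,s_k\rangle$ cannot offset the quadratic growth, and it does not need $s_k\neq 0$.
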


\begin{proof}
If \(L=0\), then by smoothness \(\nabla f\) is constant on \(\mathcal E\). Since \(f\) has an unconstrained global minimizer, this constant is zero, so \(f\equiv f^\star\), and the claims are trivial. Hence assume \(L>0\).

For fixed \(k\), the map
\[
    R\mapsto z_k(R)=x_0+(1-\beta)(x_k-x_0)+\beta R s_k
\]
is continuous. Therefore all terms in \(\mathcal Q_k(R)\) are continuous on \([0,\infty)\). Since \(\lambda>0\), \(L>0\), and \(\beta>0\), the term
\[
    \frac{\lambda L\beta^2}{2}(R-d_{k+1})^2
\]
is coercive as \(R\to+\infty\). Hence \(\mathcal Q_k(R)\to+\infty\) as \(R\to+\infty\), and \(\mathcal Q_k\) attains its minimum on the closed set \([0,\infty)\).

Convexity follows because
\[
    R\mapsto \|R s_k-y_k\|^2,
    \qquad
    R\mapsto \|(1-\beta)y_k+\beta R s_k\|^2
\]
are squared norms of affine functions of \(R\), hence convex. The remaining nonconstant terms are linear or quadratic in \(R\). Thus \(R\mapsto\mathcal Q_k(R)\) is convex.
\end{proof}

\subsection{One-step majorized inequality}

Define the Lyapunov function
\[
    \mathcal L_k
    :=
    \Delta_k
    +
    M\beta L\|y_k\|^2 .
\]
The quadratic term is created by the regularized scalar search and is used to control the momentum error; it is not a boundedness assumption.

Also define
\[
    e_k:=\nabla f(x_k)-m_{k+1}.
\]

\begin{lemma}[One-step majorized inequality]
\label{lem:df_muon_onestep}
Let Assumptions~\ref{assum:smoothness} and~\ref{assum:star_convex} hold, and let Algorithm~\ref{alg:pf_tr_muon} be run with \(0\le d_0\le D\). If \(D>0\) and
\[
    M\ge 2(1+2\rho),
\]
then, for every \(k\ge0\),
\[
\begin{aligned}
    \mathcal L_{k+1}
    +
    \rho L\|x_{k+1}-x_k\|^2
    \le
    \left(1-\frac{\beta}{2}\right)\mathcal L_k
    +
    C_0L\beta^2D^2
    +
    2\beta D\|e_k\|_*,
\end{aligned}
\]
where
\[
    C_0:=1+2\rho+\frac{\lambda}{2}+M.
\]
\end{lemma}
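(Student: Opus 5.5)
The plan is to sandwich the new Lyapunov value between the majorant at the chosen radius and the majorant at the hidden radius:
\[
    \mathcal L_{k+1}+\rho L\|x_{k+1}-x_k\|^2
    \le \mathcal Q_k(R_k)-f^\star
    \le \mathcal Q_k(D)-f^\star .
\]
Then $\mathcal Q_k(D)$ is bounded explicitly using star-convexity and the LMO property of $s_k$. The second inequality holds because $R_k$ minimizes $\mathcal Q_k$ over $R\ge0$ and $D\ge0$ is feasible. Lemma~\ref{lem:df_muon_existence} guarantees that $R_k$ exists.

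\textbf{Upper majorant.} Write $x_{k+1}=z_k(R_k)$, so that $x_{k+1}-x_k=\beta(R_ks_k-y_k)$ and $y_{k+1}=(1-\beta)y_k+\beta R_ks_k$. By Assumption~\ref{assum:smoothness},
\[
    f(x_{k+1})\le f(x_k)+\beta\langle g_k,R_ks_k-y_k\rangle+\tfrac{L\beta^2}{2}\|R_ks_k-y_k\|^2 .
\]
The remaining terms of $\mathcal Q_k(R_k)$ are identified directly. The $M$-term equals $M\beta L\|y_{k+1}\|^2$, and the $\rho$-term equals $\rho L\|x_{k+1}-x_k\|^2$. The $\lambda$-regularizer is nonnegative and is simply dropped. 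This gives the first inequality after subtracting $f^\star$.

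\textbf{Evaluating at $R=D$.} Split $g_k=m_{k+1}+e_k$ in the linear term. By the LMO property, $\langle m_{k+1},Ds_k\rangle=-D\|m_{k+1}\|_*$. Since $\|u^\star\|=D$, this is at most $\langle m_{k+1},u^\star\rangle=\langle g_k,u^\star\rangle-\langle e_k,u^\star\rangle$. Hence
\[
    \beta\langle g_k,Ds_k-y_k\rangle\le \beta\langle g_k,u^\star-y_k\rangle+2\beta D\|e_k\|_* .
\]
Since $u^\star-y_k=x^\star-x_k$, Assumption~\ref{assum:star_convex} bounds the first term by $-\beta\Delta_k$.

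The quadratic terms are handled as follows.
\begin{itemize}
    \item Use $\|Ds_k-y_k\|^2\le 2D^2+2\|y_k\|^2$. Together with the coefficient $L\beta^2/2+\rho L\beta^2$, this gives $(1+2\rho)L\beta^2(D^2+\|y_k\|^2)$.
    \item Convexity of $\|\cdot\|^2$ gives $\|(1-\beta)y_k+\beta Ds_k\|^2\le(1-\beta)\|y_k\|^2+\beta D^2$.
    \item Lemma~\ref{lem:df_muon_dcert} gives $0\le d_{k+1}\le D$, so $(D-d_{k+1})^2\le D^2$.
\end{itemize}

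\textbf{Collecting terms.} The result is
\[
    \mathcal Q_k(D)-f^\star\le (1-\beta)\Delta_k+M\beta L\|y_k\|^2\Bigl[(1-\beta)+\tfrac{(1+2\rho)\beta}{M}\Bigr]+C_0L\beta^2D^2+2\beta D\|e_k\|_* .
\]
The $D^2$ coefficients add up to exactly $C_0=1+2\rho+\lambda/2+M$.

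The main obstacle is the bookkeeping that makes the $\|y_k\|^2$ coefficient contract. The assumption $M\ge2(1+2\rho)$ makes the bracket at most $1-\beta/2$. Also, $\Delta_k\ge0$ gives $(1-\beta)\Delta_k\le(1-\beta/2)\Delta_k$. Together these yield $(1-\beta/2)\mathcal L_k$ and hence the claim.

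The delicate point is the error term, which must be bounded as $2\beta D\|e_k\|_*$ and not something larger. This is why the comparison is done at the competitor $u^\star$ rather than at $Ds_k$ alone: it ensures only $\langle e_k,Ds_k\rangle$ and $\langle e_k,u^\star\rangle$ appear, each of size at most $D\|e_k\|_*$.
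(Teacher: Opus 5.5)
Your proposal is correct and follows the paper's proof essentially line by line. You bound $\mathcal L_{k+1}+\rho L\|x_{k+1}-x_k\|^2$ by $\mathcal Q_k(R_k)-f^\star\le\mathcal Q_k(D)-f^\star$, control the linear term via the LMO comparison $\langle m_{k+1},Ds_k\rangle\le\langle m_{k+1},u^\star\rangle$ together with star-convexity, and use $M\ge 2(1+2\rho)$ to contract the $\|y_k\|^2$ coefficient to $1-\beta/2$. The only cosmetic difference is that you get the LMO comparison from H\"older's inequality rather than from feasibility of $u^\star/D$, so that step of your argument does not need $D>0$.
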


\begin{proof}
By \(L\)-smoothness and the definition of \(\mathcal Q_k\), for every \(R\ge0\),
\[
    f(z_k(R))
    \le
    f(x_k)
    +
    \beta\langle g_k,R s_k-y_k\rangle
    +
    \frac{L\beta^2}{2}\|R s_k-y_k\|^2 .
\]
At the chosen radius \(R_k\), this implies
\[
\begin{aligned}
    f(x_{k+1})
    &+
    M\beta L\|y_{k+1}\|^2
    +
    \rho L\|x_{k+1}-x_k\|^2 \\
    &\le
    \mathcal Q_k(R_k)
    -
    \frac{\lambda L\beta^2}{2}(R_k-d_{k+1})^2 .
\end{aligned}
\]
Since the last term on the right-hand side is nonpositive,
\[
    f(x_{k+1})
    +
    M\beta L\|y_{k+1}\|^2
    +
    \rho L\|x_{k+1}-x_k\|^2
    \le
    \mathcal Q_k(R_k).
\]
By optimality of \(R_k\) in the scalar search and since \(R=D\) is feasible,
\[
    \mathcal Q_k(R_k)\le \mathcal Q_k(D).
\]
Therefore
\[
\begin{aligned}
    f(x_{k+1})
    &+
    M\beta L\|y_{k+1}\|^2
    +
    \rho L\|x_{k+1}-x_k\|^2 \\
    &\le
    f(x_k)
    +
    \beta\langle g_k,D s_k-y_k\rangle
    +
    \frac{L\beta^2}{2}\|D s_k-y_k\|^2 \\
    &\quad
    +
    M\beta L\|(1-\beta)y_k+\beta D s_k\|^2
    +
    \rho L\beta^2\|D s_k-y_k\|^2 \\
    &\quad
    +
    \frac{\lambda L\beta^2}{2}(D-d_{k+1})^2 .
\end{aligned}
\]
By Lemma~\ref{lem:df_muon_dcert}, \(0\le d_{k+1}\le D\), hence
\[
    (D-d_{k+1})^2\le D^2.
\]

We now control the linear term. Decompose
\[
    D s_k-y_k=(u^\star-y_k)+(D s_k-u^\star).
\]
By Assumption~\ref{assum:star_convex},
\[
    \langle g_k,u^\star-y_k\rangle
    =
    \langle \nabla f(x_k),x^\star-x_k\rangle
    \le
    -\Delta_k.
\]
Write \(g_k=m_{k+1}+e_k\). Since
\[
    s_k\in\operatorname*{arg\,min}_{\|s\|\le1}
    \langle m_{k+1},s\rangle
\]
and \(\|u^\star/D\|\le1\), we have
\[
    \langle m_{k+1},D s_k\rangle
    \le
    \langle m_{k+1},u^\star\rangle .
\]
Therefore
\[
    \langle m_{k+1},D s_k-u^\star\rangle\le0.
\]
Moreover,
\[
    \|D s_k-u^\star\|\le 2D.
\]
Consequently,
\[
\begin{aligned}
    \beta\langle g_k,D s_k-y_k\rangle
    &\le
    -\beta\Delta_k
    +
    \beta\langle e_k,D s_k-u^\star\rangle \\
    &\le
    -\beta\Delta_k
    +
    2\beta D\|e_k\|_* .
\end{aligned}
\]

Since \(\|s_k\|\le1\),
\[
    \|D s_k-y_k\|^2
    \le
    2D^2+2\|y_k\|^2.
\]
Also, by convexity of \(v\mapsto \|v\|^2\),
\[
    \|(1-\beta)y_k+\beta D s_k\|^2
    \le
    (1-\beta)\|y_k\|^2+\beta D^2.
\]
Substituting the preceding inequalities gives
\[
\begin{aligned}
    \mathcal L_{k+1}
    &+
    \rho L\|x_{k+1}-x_k\|^2 \\
    &\le
    (1-\beta)\Delta_k
    +
    M\beta L(1-\beta)\|y_k\|^2 \\
    &\quad
    +
    L\beta^2(1+2\rho)\|y_k\|^2 \\
    &\quad
    +
    L\beta^2
    \left(1+2\rho+\frac{\lambda}{2}\right)D^2
    +
    M L\beta^2D^2
    +
    2\beta D\|e_k\|_* .
\end{aligned}
\]
Because \(M\ge2(1+2\rho)\),
\[
    L\beta^2(1+2\rho)\|y_k\|^2
    \le
    \frac{\beta}{2}M\beta L\|y_k\|^2.
\]
Also,
\[
    (1-\beta)\Delta_k
    \le
    \left(1-\frac{\beta}{2}\right)\Delta_k.
\]
Therefore
\[
\begin{aligned}
    \mathcal L_{k+1}
    &+
    \rho L\|x_{k+1}-x_k\|^2 \\
    &\le
    \left(1-\frac{\beta}{2}\right)
    \left(
        \Delta_k+M\beta L\|y_k\|^2
    \right)
    +
    C_0L\beta^2D^2
    +
    2\beta D\|e_k\|_*,
\end{aligned}
\]
which is the desired inequality.
\end{proof}

\subsection{Momentum-error control}

Let
\[
    a_i:=\|x_i-x_{i-1}\|,
    \qquad
    r:=1-\frac{\beta}{2},
    \qquad
    q:=1-\alpha.
\]

\begin{lemma}[Weighted momentum-error bound]
\label{lem:df_muon_momentum}
Suppose Assumption~\ref{assum:smoothness} holds and
\[
    \alpha>\frac{\beta}{2}.
\]
Then, for every \(T\ge1\),
\[
    \sum_{k=0}^{T-1}r^{T-1-k}\|e_k\|_*
    \le
    \frac{(1-\alpha)L}{\alpha-\beta/2}
    \sqrt{\frac{2}{\beta}}
    \left(
        \sum_{i=1}^T r^{T-i}a_i^2
    \right)^{1/2}.
\]
\end{lemma}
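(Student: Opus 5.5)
The plan is to turn the exponential moving average into a scalar recursion for \(\|e_k\|_*\), unroll it, exchange the order of summation in the weighted sum, and finish with Cauchy--Schwarz. This is the same tracking argument as in Lemma~\ref{lem_momentum_bd} and Lemma~\ref{lem:cert_error_recursion}, but with a geometric weight \(r^{T-1-k}\) attached.

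\emph{Step 1: error recursion.} Since \(m_{k+1}=q m_k+\alpha g_k\), we have
\[
    e_k=g_k-m_{k+1}=q(g_k-m_k)=q\bigl(e_{k-1}+g_k-g_{k-1}\bigr)
    \qquad (k\ge1).
\]
Also \(e_0=q(g_0-m_0)=0\) because \(m_0=\nabla f(x_0)\). Taking dual norms and using Assumption~\ref{assum:smoothness} gives
\[
    \|e_k\|_*\le q\|e_{k-1}\|_*+qLa_k .
\]
Unrolling from \(e_0=0\) yields
\[
    \|e_k\|_*\le L\sum_{i=1}^{k}q^{k-i+1}a_i .
\]

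\emph{Step 2: swap the sums.} Substituting this bound into the weighted sum gives
\[
    \sum_{k=0}^{T-1}r^{T-1-k}\|e_k\|_*
    \le L\sum_{i=1}^{T-1}a_i\sum_{k=i}^{T-1}r^{T-1-k}q^{k-i+1}.
\]
The inner sum equals \(q\,r^{T-1-i}\sum_{j=0}^{T-1-i}(q/r)^j\). The hypothesis \(\alpha>\beta/2\) is exactly \(q<r\), so the geometric series is bounded by
\[
    \frac{1}{1-q/r}=\frac{r}{r-q}=\frac{r}{\alpha-\beta/2}.
\]
Hence the inner sum is at most \(q\,r^{T-i}/(\alpha-\beta/2)\), and therefore
\[
    \sum_{k=0}^{T-1}r^{T-1-k}\|e_k\|_*\le \frac{(1-\alpha)L}{\alpha-\beta/2}\sum_{i=1}^{T}r^{T-i}a_i .
\]
Extending the outer sum to \(i=T\) only adds a nonnegative term.

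\emph{Step 3: Cauchy--Schwarz.} Write \(r^{T-i}a_i=r^{(T-i)/2}\cdot r^{(T-i)/2}a_i\). Then
\[
    \sum_{i=1}^T r^{T-i}a_i\le\Bigl(\sum_{i=1}^T r^{T-i}\Bigr)^{1/2}\Bigl(\sum_{i=1}^T r^{T-i}a_i^2\Bigr)^{1/2},
\]
and \(\sum_{j\ge0}r^j=1/(1-r)=2/\beta\), which produces the factor \(\sqrt{2/\beta}\) and gives the claimed bound.

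The only delicate point is Step 2. One has to keep the momentum decay \(q\) and the Lyapunov contraction \(r\) distinct, and combine them into the ratio \(q/r\); the condition \(\alpha>\beta/2\) is precisely what makes the resulting geometric series summable with the constant \(1/(\alpha-\beta/2)\). Everything else is routine.
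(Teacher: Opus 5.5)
Your proposal is correct and follows the paper's proof essentially step for step. Both use the same recursion $\|e_k\|_*\le q\|e_{k-1}\|_*+qLa_k$ started from $e_0=0$, the same exchange of summation with the bound $\sum_{k=i}^{T-1}r^{T-1-k}q^{k-i}\le r^{T-i}/(r-q)$ (valid because $q<r$), and the same Cauchy--Schwarz step using $\sum_i r^{T-i}\le 2/\beta$; the only cosmetic difference is that you extend the sum to $i=T$ before Cauchy--Schwarz rather than after.
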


\begin{proof}
Since \(m_0=\nabla f(x_0)\), we have \(e_0=0\). For \(k\ge1\),
\[
\begin{aligned}
    e_k
    &=
    \nabla f(x_k)-m_{k+1}  \\
    &=
    (1-\alpha)
    \bigl(\nabla f(x_k)-m_k\bigr) \\
    &=
    q\bigl(\nabla f(x_k)-\nabla f(x_{k-1})+e_{k-1}\bigr).
\end{aligned}
\]
By \(L\)-smoothness,
\[
    \|e_k\|_*
    \le
    qL a_k+q\|e_{k-1}\|_*.
\]
Unrolling this recursion gives
\[
    \|e_k\|_*
    \le
    qL\sum_{i=1}^k q^{k-i}a_i.
\]
Therefore
\[
\begin{aligned}
    \sum_{k=0}^{T-1}r^{T-1-k}\|e_k\|_*
    &\le
    qL
    \sum_{k=1}^{T-1}r^{T-1-k}
    \sum_{i=1}^k q^{k-i}a_i  \\
    &=
    qL
    \sum_{i=1}^{T-1}a_i
    \sum_{k=i}^{T-1}r^{T-1-k}q^{k-i}.
\end{aligned}
\]
Since \(q<r\), or equivalently \(\alpha>\beta/2\),
\[
    \sum_{k=i}^{T-1}r^{T-1-k}q^{k-i}
    \le
    \frac{r^{T-i}}{r-q}.
\]
Hence
\[
    \sum_{k=0}^{T-1}r^{T-1-k}\|e_k\|_*
    \le
    \frac{qL}{r-q}
    \sum_{i=1}^{T-1}r^{T-i}a_i.
\]
By Cauchy--Schwarz,
\[
    \sum_{i=1}^{T-1}r^{T-i}a_i
    \le
    \left(\sum_{i=1}^{T-1}r^{T-i}\right)^{1/2}
    \left(\sum_{i=1}^{T-1}r^{T-i}a_i^2\right)^{1/2}.
\]
Since
\[
    \sum_{i=1}^{T-1}r^{T-i}
    \le
    \frac{1}{1-r}
    =
    \frac{2}{\beta},
\]
and extending the second sum to \(i=T\) only increases it, we obtain
\[
    \sum_{k=0}^{T-1}r^{T-1-k}\|e_k\|_*
    \le
    \frac{qL}{r-q}
    \sqrt{\frac{2}{\beta}}
    \left(
        \sum_{i=1}^T r^{T-i}a_i^2
    \right)^{1/2}.
\]
Substituting \(q=1-\alpha\) and \(r-q=\alpha-\beta/2\) proves the claim.
\end{proof}

\subsection{Proof of Theorem~\ref{thm:df_muon}}

\begin{proof}[Proof of Theorem~\ref{thm:df_muon}]
If \(D=0\), then \(x_0=x^\star\). Since \(x^\star\) is an unconstrained global minimizer, \(\nabla f(x^\star)=0\). The warm start gives \(m_0=0\), and the majorized scalar search is minimized at \(R_0=0\); hence \(x_1=x^\star\). Inductively, \(x_k=x^\star\) for all \(k\), so the claim is immediate. Assume \(D>0\).

Let
\[
    A_T:=\sum_{i=1}^T r^{T-i}a_i^2,
    \qquad
    r:=1-\frac{\beta}{2}.
\]
Multiplying Lemma~\ref{lem:df_muon_onestep} by \(r^{T-1-k}\) and summing from \(k=0\) to \(T-1\), we get
\[
\begin{aligned}
    \mathcal L_T+\rho L A_T
    &\le
    r^T\mathcal L_0
    +
    C_0L\beta^2D^2
    \sum_{k=0}^{T-1}r^{T-1-k} \\
    &\quad
    +
    2\beta D
    \sum_{k=0}^{T-1}r^{T-1-k}\|e_k\|_* .
\end{aligned}
\]
Since \(1-r=\beta/2\),
\[
    \sum_{k=0}^{T-1}r^{T-1-k}
    \le
    \frac{2}{\beta}.
\]
Therefore
\[
    \mathcal L_T +\rho L A_T
    \le
    r^T\mathcal L_0
    +
    2C_0L\beta D^2
    +
    2\beta D
    \sum_{k=0}^{T-1}r^{T-1-k}\|e_k\|_* .
\]
By Lemma~\ref{lem:df_muon_momentum},
\[
    \sum_{k=0}^{T-1}r^{T-1-k}\|e_k\|_*
    \le
    \frac{(1-\alpha)L}{\alpha-\beta/2}
    \sqrt{\frac{2}{\beta}}
    A_T^{1/2}.
\]
Thus
\[
\begin{aligned}
    2\beta D
    \sum_{k=0}^{T-1}r^{T-1-k}\|e_k\|_*
    &\le
    \frac{2\sqrt2(1-\alpha)LD\sqrt{\beta}}
         {\alpha-\beta/2}
    A_T^{1/2}.
\end{aligned}
\]
Using Young's inequality,
\[
    ab\le \frac{\rho L}{2}a^2+\frac{b^2}{2\rho L},
\]
with
\[
    a=A_T^{1/2},
    \qquad
    b=
    \frac{2\sqrt2(1-\alpha)LD\sqrt{\beta}}
         {\alpha-\beta/2},
\]
we obtain
\[
\begin{aligned}
    2\beta D
    \sum_{k=0}^{T-1}r^{T-1-k}\|e_k\|_*
    &\le
    \frac{\rho L}{2}A_T
    +
    \frac{4(1-\alpha)^2L\beta D^2}
         {\rho(\alpha-\beta/2)^2}.
\end{aligned}
\]
Consequently,
\[
    \mathcal L_T +\frac{\rho L}{2}A_T
    \le
    r^T \mathcal L_0
    +
    2C_0L\beta D^2
    +
    \frac{4(1-\alpha)^2L\beta D^2}
         {\rho(\alpha-\beta/2)^2}.
\]
Dropping the nonnegative term \((\rho L/2)A_T\), and using
\[
    \mathcal L_T\ge f(x_T)-f^\star,
    \qquad
    \mathcal L_0=f(x_0)-f^\star,
\]
gives
\[
\begin{aligned}
    f(x_T)-f^\star
    \le
    r^T(f(x_0)-f^\star)
    +
    \left[
        2C_0
        +
        \frac{4(1-\alpha)^2}
        {\rho(\alpha-\beta/2)^2}
    \right]
    L\beta D^2.
\end{aligned}
\]
Substituting \(r=1-\beta/2\) proves the first claim.

For the explicit rate, choose
\[
    \beta=\min\left\{\alpha,\frac{2\log(T+1)}{T}\right\}.
\]
Whenever \(2\log(T+1)/T\le\alpha\), we have \(\beta=2\log(T+1)/T\), and hence
\[
    \left(1-\frac{\beta}{2}\right)^T
    \le
    \exp\left(-\frac{\beta T}{2}\right)
    =
    \frac{1}{T+1}.
\]
Moreover, since \(\beta\le\alpha\),
\[
    \alpha-\frac{\beta}{2}\ge\frac{\alpha}{2}.
\]
Therefore
\[
\begin{aligned}
    f(x_T)-f^\star
    &\le
    \frac{f(x_0)-f^\star}{T+1} \\
    &\quad+
    2\left[
        2C_0
        +
        \frac{16(1-\alpha)^2}{\rho\alpha^2}
    \right]
    \frac{LD^2\log(T+1)}{T}.
\end{aligned}
\]

It remains only to bound the initial gap by the same \(LD^2\) scale. Since \(x^\star\) is an unconstrained global minimizer and \(f\) is differentiable, \(\nabla f(x^\star)=0\). By star-convexity, duality, and smoothness,
\[
\begin{aligned}
    f(x_0)-f(x^\star)
    &\le
    \langle \nabla f(x_0),x_0-x^\star\rangle
    \le
    \|\nabla f(x_0)\|_*D \\
    &=
    \|\nabla f(x_0)-\nabla f(x^\star)\|_*D
    \le
    LD^2 .
\end{aligned}
\]
Thus
\[
    f(x_T)-f^\star
    =
    \widetilde O\!\left(\frac{LD^2}{T}\right),
    \qquad
    D=\|x_0-x^\star\|,
\]
with constants depending only on \(\alpha,\rho,\lambda\), and \(M\), but not on \(D\).
\end{proof}

\begin{remark}[What is distance-free?]
The algorithm never receives \(D=\|x_0-x^\star\|\) as an input. The proof uses \(D\) only as a hidden comparator radius in the scalar search. The scalar proxy \(d_k\) is a D-adaptation-style lower certificate and is used only through the valid inequality \(d_k\le D\). Thus the result avoids both the fixed-radius choice \(\eta=\beta D\) and the need to prove that \(d_k\) discovers \(D\).
\end{remark}

\begin{remark}[Majorized versus objective-value scalar search]
The scalar search minimizes a regularized smoothness majorant rather than the true one-dimensional loss \(R\mapsto f(z_k(R))\). This avoids a global search over the objective value along the ray and yields a convex one-dimensional subproblem. The proof still compares the chosen radius to the hidden reference radius \(R=D\), which is why the distance parameter appears only in the analysis.
\end{remark}
\section{Additional Experiments}
\label{app:additional_experiments}

\subsection{Implementation Details for DF-Muon}
\label{app:df_muon_implementation}

The deterministic DF-Muon analysis in Section~\ref{sec:df_muon} is stated for the exact minimizer of a norm-generic one-dimensional majorized model. In the neural-network experiments, we use a practical stochastic implementation of the same scalar-calibration principle. The Muon direction itself is unchanged: for matrix-valued parameters we use the standard orthogonalized Muon direction associated with the spectral--nuclear geometry. The scalar radius, however, is computed using product-Frobenius aggregate statistics over the Muon-updated matrix blocks, rather than by solving the exact spectral-geometry scalar problem.

Concretely, let \(u_{t,j}\) denote the Muon direction for matrix block \(j\), let \(g_{t,j}\) be the corresponding stochastic gradient, and let \(y_{t,j}=x_{t,j}-x_{0,j}\). At each optimizer step we aggregate
\[
    A_t=\sum_j \|u_{t,j}\|_F^2,\qquad
    B_t=\sum_j \langle y_{t,j},u_{t,j}\rangle,\qquad
    C_t=\sum_j \|y_{t,j}\|_F^2,\qquad
    G_t=\sum_j \langle g_{t,j},u_{t,j}\rangle .
\]
These quantities define a cheap one-dimensional Frobenius-proxy majorized score for the scalar radius. We minimize this score over the capped interval \([\eta_{\min},\eta_{\max}]\) by a small grid search followed by local refinement, smooth the selected scale, and then multiply it by the common warmup--cosine schedule before applying the update. This procedure uses no additional forward or backward passes; it only reuses quantities already available during the optimizer step.

Thus, the experiments should be viewed as a practical Frobenius-proxy implementation of the majorized DF-Muon radius-selection principle, rather than as an exact solver for the norm-generic deterministic scalar problem in Theorem~\ref{thm:df_muon}. This distinction is important in spectral--nuclear Muon geometry: the exact scalar problem remains one-dimensional and convex, but may be nonsmooth if written directly in the spectral norm. The Frobenius-proxy implementation avoids this cost while retaining the Muon orthogonalized direction.

Unless stated otherwise, the DF-Muon experiments use a grid of \(21\) candidate scales and \(6\) local refinement steps. For GPT-124M/WikiText-103, we use \(\eta_{\min}=0.006\), initial scale \(0.015\), cap \(\eta_{\max}=0.03\), and smoothing coefficient \(0.70\). For the regularized ViT-Tiny/CIFAR-100 experiment, we use the same implementation with \(\eta_{\max}=0.01\), selected from the cap diagnostic in Appendix~\ref{app:vit_cifar100_regularized_diagnostics}.

\subsection{Additional GPT-124M/WikiText-103 Diagnostics}
\label{app:gpt124m_wikitext103_diagnostics}

This section provides additional diagnostics for the GPT-124M/WikiText-103 experiment in Section~\ref{subsec:gpt124m_wikitext103}. We include the fixed-Muon learning-rate sweep used to select the tuned baseline, the DA-Muon cap sweep used to select its cap, a one-seed DF-Muon majorized-model diagnostic used to select the no-center variant, and the observed base scales selected by the adaptive variants.

\paragraph{Fixed-Muon learning-rate sweep.}
Table~\ref{tab:gpt124m_muon_lr_sweep} reports the fixed-Muon learning-rate sweep on WikiText-103 for one representative seed. The sweep selects $\eta=0.005$ as the best fixed-Muon learning rate. Larger fixed learning rates are substantially worse in this setting, indicating that the large Transformer benchmark is sensitive to the scalar scale.

\begin{table}[h]
\centering
\caption{
Fixed-Muon learning-rate sweep on the GPT-style Transformer/WikiText-103
benchmark for one representative seed. Runtime is reported relative to
Muon with $\eta=0.015$.
}
\label{tab:gpt124m_muon_lr_sweep}
\footnotesize
\setlength{\tabcolsep}{4pt}
\begin{tabular}{@{}lcccc@{}}
\toprule
\textbf{Method}
& \textbf{Train loss}
& \textbf{Val. loss}
& \textbf{Mean $\eta$}
& \textbf{Rel. time} \\
\midrule
Muon, $\eta=0.005$
& $3.7558$
& $\mathbf{3.7650}$
& $0.0050$
& $1.01\times$ \\
Muon, $\eta=0.01$
& $3.7755$
& $3.7834$
& $0.0100$
& $1.00\times$ \\
Muon, $\eta=0.015$
& $3.8081$
& $3.8130$
& $0.0150$
& $1.00\times$ \\
Muon, $\eta=0.02$
& $4.0177$
& $4.0166$
& $0.0200$
& $1.00\times$ \\
Muon, $\eta=0.03$
& $4.8027$
& $4.7822$
& $0.0300$
& $1.00\times$ \\
\bottomrule
\end{tabular}
\end{table}

\begin{figure}[h]
    \centering
    \begin{minipage}{0.48\linewidth}
        \centering
        \includegraphics[width=\linewidth]{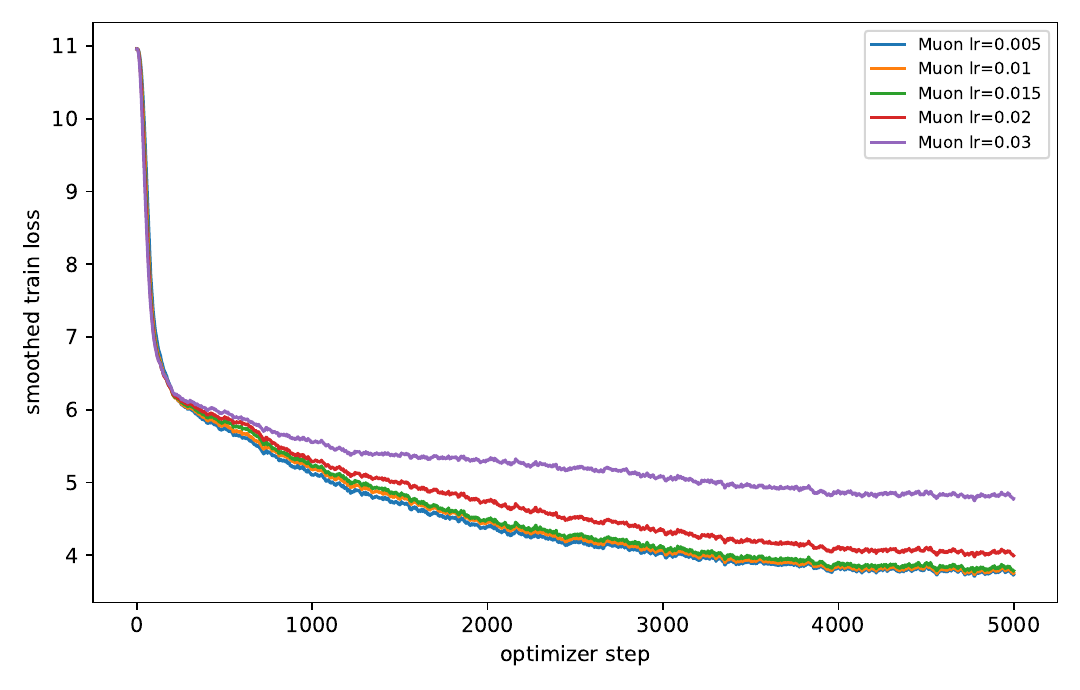}
        \centerline{\small (a) Training loss}
    \end{minipage}
    \hfill
    \begin{minipage}{0.48\linewidth}
        \centering
        \includegraphics[width=\linewidth]{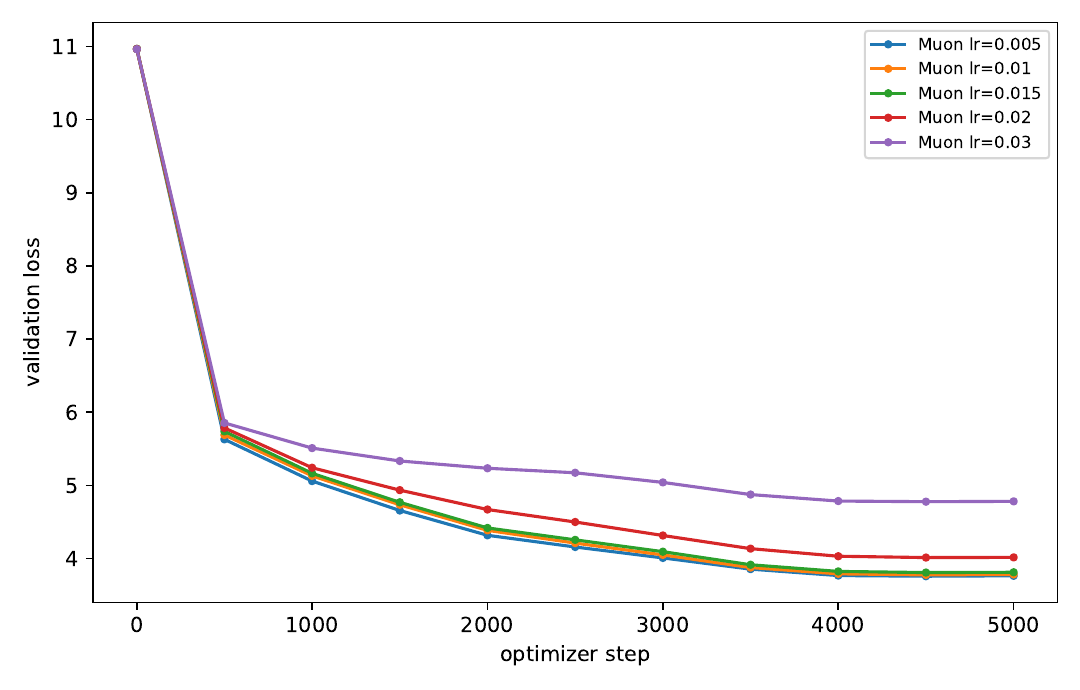}
        \centerline{\small (b) Validation loss}
    \end{minipage}
    \caption{
    Fixed-Muon learning-rate sweep on GPT-124M/WikiText-103. The smaller
    learning rate $\eta=0.005$ gives the best validation loss among the
    tested fixed scales.
    }
    \label{fig:gpt124m_muon_lr_sweep}
\end{figure}

\paragraph{DA-Muon cap sweep.}
The default DA-Muon cap $\eta_{\max}=0.03$ is too aggressive on this benchmark. Table~\ref{tab:gpt124m_da_cap_sweep} reports a cap sweep. Tuning the cap makes DA-Muon competitive, with the best result at $\eta_{\max}=0.01$, but it remains slightly worse than the tuned fixed-Muon baseline and DF-Muon.

\begin{table}[h]
\centering
\caption{
DA-Muon cap sweep on GPT-124M/WikiText-103 for one representative seed.
Runtime is reported relative to the tuned fixed-Muon baseline with
$\eta=0.005$.
}
\label{tab:gpt124m_da_cap_sweep}
\footnotesize
\setlength{\tabcolsep}{4pt}
\begin{tabular}{@{}lcccc@{}}
\toprule
\textbf{Method}
& \textbf{Train loss}
& \textbf{Val. loss}
& \textbf{Mean $\eta$}
& \textbf{Rel. time} \\
\midrule
AdamW
& $3.9941$
& $3.9874$
& $0.0010$
& $0.66\times$ \\
Best fixed Muon, $\eta=0.005$
& $3.7550$
& $\mathbf{3.7636}$
& $0.0050$
& $1.00\times$ \\
DA-Muon, $\eta_{\max}=0.005$
& $3.7728$
& $3.7818$
& $0.0050$
& $1.03\times$ \\
DA-Muon, $\eta_{\max}=0.01$
& $3.7680$
& $3.7779$
& $0.0100$
& $1.03\times$ \\
DA-Muon, $\eta_{\max}=0.015$
& $3.7724$
& $3.7793$
& $0.0150$
& $1.03\times$ \\
DA-Muon, $\eta_{\max}=0.02$
& $3.8854$
& $3.8908$
& $0.0200$
& $1.03\times$ \\
DA-Muon, $\eta_{\max}=0.03$
& $4.4655$
& $4.4528$
& $0.0300$
& $1.02\times$ \\
\bottomrule
\end{tabular}
\end{table}

\begin{figure}[h]
    \centering
    \begin{minipage}{0.48\linewidth}
        \centering
        \includegraphics[width=\linewidth]{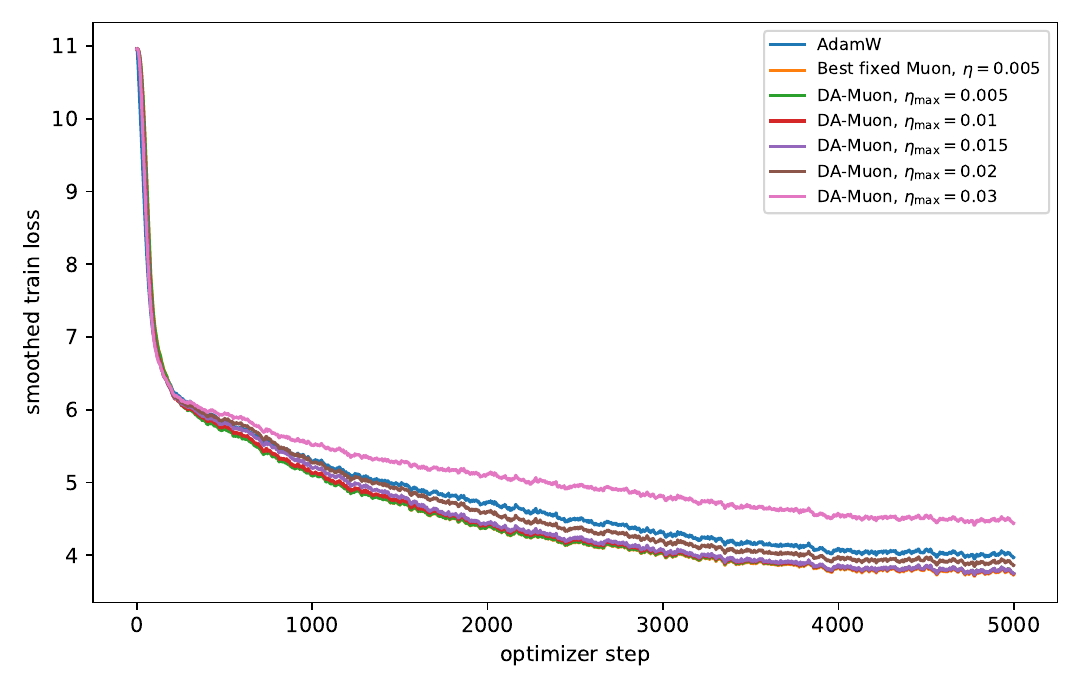}
        \centerline{\small (a) Training loss}
    \end{minipage}
    \hfill
    \begin{minipage}{0.48\linewidth}
        \centering
        \includegraphics[width=\linewidth]{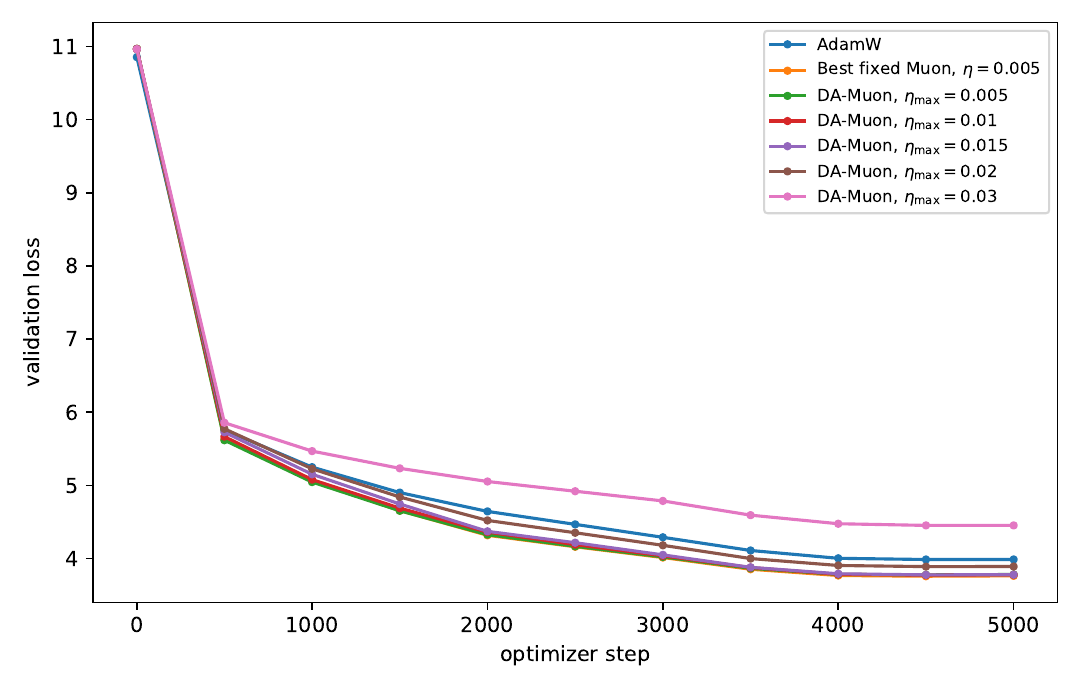}
        \centerline{\small (b) Validation loss}
    \end{minipage}
    \caption{
    DA-Muon cap sweep on GPT-124M/WikiText-103. Smaller caps substantially
    improve DA-Muon over the default $\eta_{\max}=0.03$ setting.
    }
    \label{fig:gpt124m_da_cap_sweep}
\end{figure}

\paragraph{DF-Muon majorized-model diagnostic.}
For DF-Muon, we ran a one-seed diagnostic over five nearby variants of the practical majorized scalar model. All variants use the same cap, floor, smoothing coefficient, grid size, and number of local refinement steps. The no-center variant, which sets the center-penalty coefficient to zero while keeping the step and proxy coefficients fixed, gives the best validation loss. We therefore use this variant in the final three-seed GPT-124M/WikiText-103 comparison.

\begin{table}[h]
\centering
\caption{
One-seed DF-Muon majorized-model diagnostic on GPT-124M/WikiText-103. All
variants use \(\eta_{\min}=0.006\), initial scale \(0.015\),
\(\eta_{\max}=0.03\), smoothing coefficient \(0.70\), \(21\) grid points, and
\(6\) local refinement steps.
}
\label{tab:app_gpt124m_df_major_diagnostic}
\footnotesize
\setlength{\tabcolsep}{4pt}
\begin{tabular}{@{}lcccc@{}}
\toprule
\textbf{Variant}
& \textbf{Step coeff.}
& \textbf{Center coeff.}
& \textbf{Proxy coeff.}
& \textbf{Val. loss} \\
\midrule
Default
& \(0.10\)
& \(0.02\)
& \(0.10\)
& \(3.6947\) \\
Less conservative
& \(0.05\)
& \(0.02\)
& \(0.10\)
& \(3.6906\) \\
More conservative
& \(0.15\)
& \(0.02\)
& \(0.10\)
& \(3.6960\) \\
No-center
& \(0.10\)
& \(0.00\)
& \(0.10\)
& \(\mathbf{3.6677}\) \\
Less proxy
& \(0.10\)
& \(0.02\)
& \(0.05\)
& \(3.6969\) \\
\bottomrule
\end{tabular}
\end{table}

\paragraph{Observed adaptive scales.}
Because all methods use a warmup--cosine schedule, the schedule-multiplied actual step size approaches zero near the end of training. We therefore report the base scale $\eta_t$ selected by each adaptive rule before multiplying by the common schedule. Table~\ref{tab:gpt124m_wikitext103_observed_eta} reports the mean base scale over the final $20\%$ of training, averaged over seeds.

\begin{table}[h]
\centering
\caption{
Observed base scales on GPT-124M/WikiText-103 over three seeds. For adaptive
methods, the mean base scale is averaged over the final $20\%$ of training and
then averaged over seeds.
}
\label{tab:gpt124m_wikitext103_observed_eta}
\footnotesize
\setlength{\tabcolsep}{5pt}
\begin{tabular}{@{}lcc@{}}
\toprule
\textbf{Method}
& \textbf{Selection rule}
& \textbf{Mean base $\eta$} \\
\midrule
AdamW
& fixed
& $0.0010$ \\
Best fixed Muon
& fixed sweep
& $0.0050$ \\
DF-Muon
& adaptive, no-center
& $0.0300$ \\
DA-Muon
& adaptive cap sweep
& $0.0100$ \\
SC-Muon
& adaptive
& $0.0263$ \\
\bottomrule
\end{tabular}
\end{table}

\begin{figure}[h]
    \centering
    \includegraphics[width=0.82\linewidth]{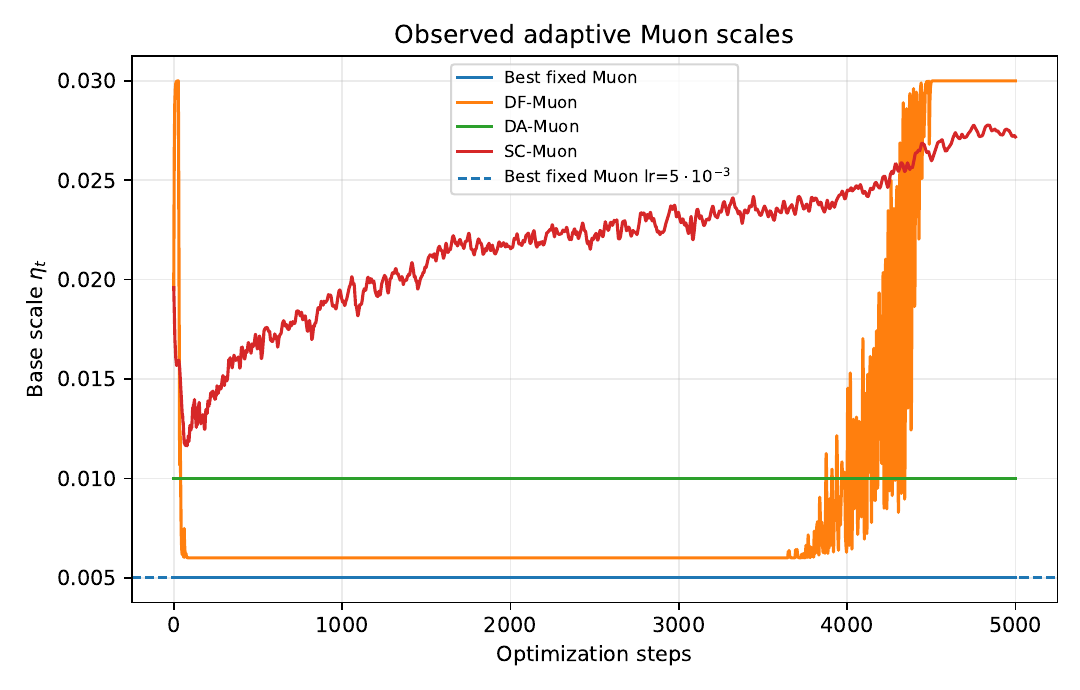}
    \caption{
    Observed base scales selected by the Muon variants on GPT-124M/WikiText-103
    in a representative seed. The tuned fixed-Muon baseline uses
    $\eta=0.005$, while DA-Muon uses the tuned cap $\eta_{\max}=0.01$.
    DF-Muon follows a data-dependent scale trajectory and achieves the best
    three-seed validation loss in Table~\ref{tab:gpt124m_wikitext103_three_seeds}.
    }
    \label{fig:gpt124m_wikitext103_observed_eta}
\end{figure}

\paragraph{Interpretation.}
The diagnostics show that the improvement of DF-Muon is not due to comparing against an under-tuned fixed-Muon baseline. The fixed-Muon sweep selects $\eta=0.005$, and the main three-seed comparison uses this tuned baseline. The DA-Muon sweep shows that the radius-based rule is sensitive to its cap in large Transformer training: the default $\eta_{\max}=0.03$ is too aggressive, whereas $\eta_{\max}=0.01$ is competitive. The DF-Muon majorized-model diagnostic selects the no-center variant, which removes an overly conservative center penalty in this GPT-124M setting. With this variant, DF-Muon remains the strongest method in the three-seed comparison, improving over the tuned fixed-Muon baseline with only a small runtime overhead.

\subsection{NanoGPT/WikiText-2 Diagnostics}
\label{app:nanogpt_wikitext2_diagnostics}

We include a smaller Transformer language-modeling experiment to complement the GPT-124M/WikiText-103 results in Section~\ref{subsec:gpt124m_wikitext103}. The goal is not to establish a strong language-modeling benchmark, but to check whether the adaptive Muon scaling rules show the same qualitative behavior in a short-budget stochastic Transformer setting.

\paragraph{Experimental setup.}
We train a NanoGPT-style decoder-only Transformer on WikiText-2 using GPT-2 tokenization. The model has $6$ layers, $6$ attention heads, embedding dimension $384$, context length $128$, and approximately $30$M parameters. All methods are run for $T=1000$ optimizer steps. AdamW uses learning rate $10^{-3}$. The fixed-Muon baseline uses $\eta=0.015$, selected from a small fixed-Muon learning-rate grid. The adaptive Muon variants use the same maximum base scale $\eta_{\max}=0.03$. As in the main experiment, Muon is applied to matrix-valued parameters, while biases, normalization parameters, embeddings, and other non-matrix parameters are optimized with AdamW.

\paragraph{Three-seed comparison.}
Table~\ref{tab:app_nanogpt_wikitext2_three_seeds} reports the aggregate over three seeds. The updated majorized DF-Muon implementation gives the best mean training and validation loss. DA-Muon and SC-Muon also improve over the tuned fixed-Muon baseline on validation loss, but the gains are smaller.

\begin{table}[h]
\centering
\caption{
NanoGPT/WikiText-2 results over three seeds for the updated majorized DF-Muon
implementation. We report mean $\pm$ standard deviation over seeds
$\{42,1337,2024\}$. The fixed-Muon baseline uses $\eta=0.015$; adaptive Muon
variants use $\eta_{\max}=0.03$.
}
\label{tab:app_nanogpt_wikitext2_three_seeds}
\footnotesize
\setlength{\tabcolsep}{4pt}
\begin{tabular}{@{}lccc@{}}
\toprule
\textbf{Method}
& \textbf{Train loss}
& \textbf{Val. loss}
& \textbf{Mean base $\eta$} \\
\midrule
AdamW
& $5.5576 \pm 0.0163$
& $5.7926 \pm 0.0339$
& $0.0010$ \\
Best fixed Muon
& $5.4489 \pm 0.0196$
& $5.7162 \pm 0.0256$
& $0.0150$ \\
DF-Muon
& $\mathbf{5.3073 \pm 0.0156}$
& $\mathbf{5.6630 \pm 0.0297}$
& $0.0202$ \\
DA-Muon
& $5.4445 \pm 0.0205$
& $5.7048 \pm 0.0311$
& $0.0300$ \\
SC-Muon
& $5.4122 \pm 0.0176$
& $5.6955 \pm 0.0356$
& $0.0202$ \\
\bottomrule
\end{tabular}
\end{table}

\paragraph{Representative curves.}
Figure~\ref{fig:app_nanogpt_wikitext2_curves} shows representative training and validation curves. The validation curves confirm that the improvement is not restricted to the training objective: DF-Muon gives the lowest validation loss on the representative seed, while DA-Muon and SC-Muon remain competitive with the fixed-Muon baseline.

\begin{figure}[h]
    \centering
    \begin{minipage}{0.48\linewidth}
        \centering
        \includegraphics[width=\linewidth]{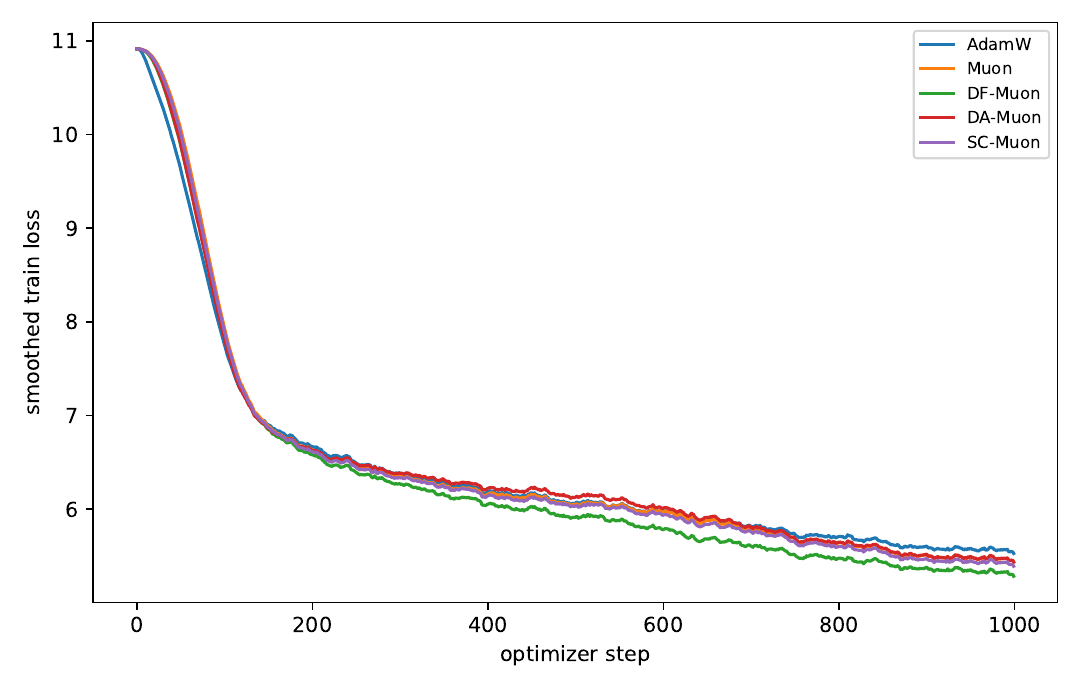}
        \centerline{\small (a) Training loss}
    \end{minipage}
    \hfill
    \begin{minipage}{0.48\linewidth}
        \centering
        \includegraphics[width=\linewidth]{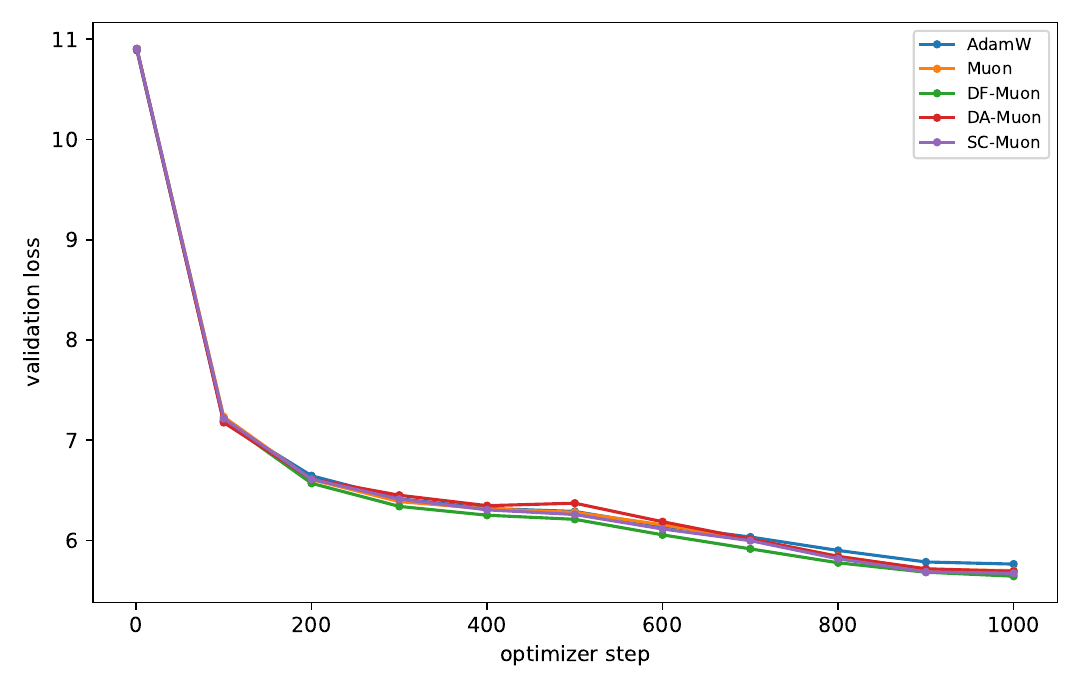}
        \centerline{\small (b) Validation loss}
    \end{minipage}
    \caption{
    Representative NanoGPT/WikiText-2 curves for seed $42$. We compare AdamW,
    the tuned fixed-Muon baseline, and the three adaptive Muon variants.
    Table~\ref{tab:app_nanogpt_wikitext2_three_seeds} reports the aggregate
    over three seeds.
    }
    \label{fig:app_nanogpt_wikitext2_curves}
\end{figure}

\paragraph{Observed adaptive scales.}
Because all methods use a warmup--cosine schedule, the schedule-multiplied actual step size approaches zero near the end of training. We therefore report the base scale $\eta_t$ selected by each adaptive rule before multiplication by the common schedule. Table~\ref{tab:app_nanogpt_wikitext2_observed_eta} reports the final-$20\%$ mean base scale averaged over seeds, and Figure~\ref{fig:app_nanogpt_wikitext2_observed_eta} shows the representative scale trajectories.

DA-Muon saturates the cap $\eta_{\max}=0.03$ on this small Transformer task. SC-Muon selects a moderate scale around $0.02$. DF-Muon follows a different trajectory: it first becomes conservative and then increases its scale late in training. This behavior is consistent with the validation results in Table~\ref{tab:app_nanogpt_wikitext2_three_seeds}, where DF-Muon gives the strongest aggregate performance.

\begin{table}[h]
\centering
\caption{
Observed base scales on NanoGPT/WikiText-2 over three seeds. For adaptive
methods, the mean base scale is averaged over the final $20\%$ of training and
then averaged over seeds.
}
\label{tab:app_nanogpt_wikitext2_observed_eta}
\footnotesize
\setlength{\tabcolsep}{5pt}
\begin{tabular}{@{}lcc@{}}
\toprule
\textbf{Method}
& \textbf{Selection rule}
& \textbf{Mean base $\eta$} \\
\midrule
AdamW
& fixed
& $0.0010$ \\
Best fixed Muon
& fixed
& $0.0150$ \\
DF-Muon
& adaptive
& $0.0202$ \\
DA-Muon
& adaptive
& $0.0300$ \\
SC-Muon
& adaptive
& $0.0202$ \\
\bottomrule
\end{tabular}
\end{table}

\begin{figure}[h]
    \centering
    \includegraphics[width=0.82\linewidth]{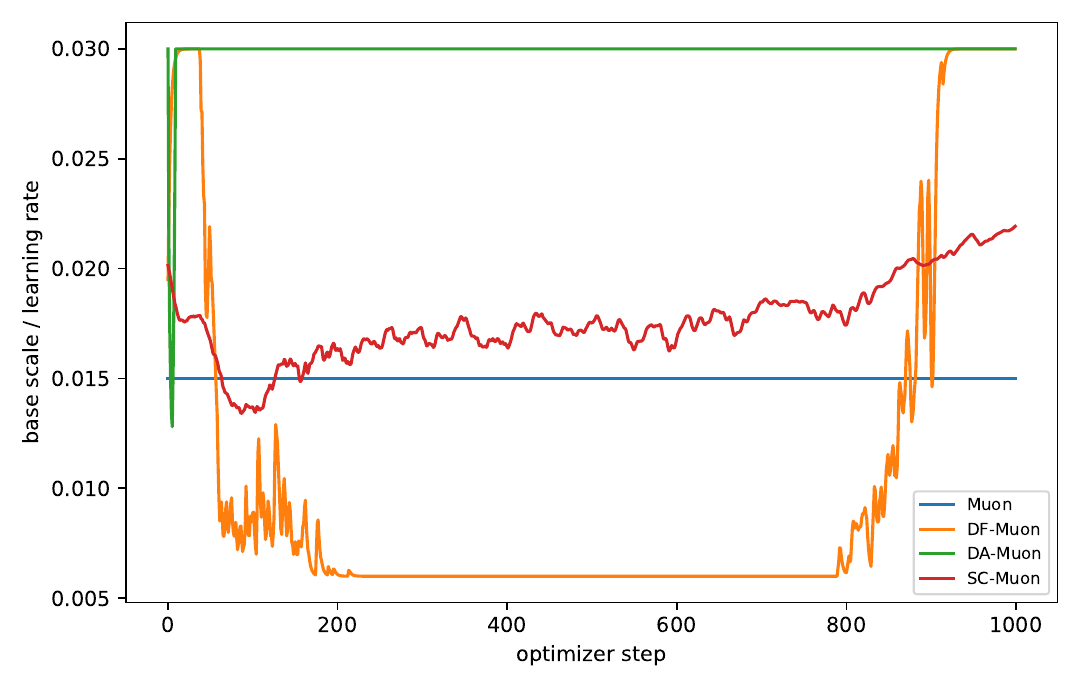}
    \caption{
    Observed base scales selected by the Muon variants on NanoGPT/WikiText-2
    in representative seed $42$. The fixed-Muon baseline uses $\eta=0.015$,
    DA-Muon saturates the cap $\eta_{\max}=0.03$, SC-Muon stabilizes around
    $0.02$, and DF-Muon follows a conservative-then-increasing scale
    trajectory.
    }
    \label{fig:app_nanogpt_wikitext2_observed_eta}
\end{figure}

\paragraph{Interpretation.}
The NanoGPT experiment supports the same qualitative conclusion as the larger GPT-124M/WikiText-103 benchmark: adaptive scalar scaling can improve Muon even against a tuned fixed-Muon baseline. In this smaller setting, all three adaptive variants improve mean validation loss over fixed Muon. The updated majorized DF-Muon is the strongest method, reducing mean validation loss from $5.7162$ for fixed Muon to $5.6630$. DA-Muon and SC-Muon also improve over fixed Muon, reaching validation losses $5.7048$ and $5.6955$, respectively. The observed-scale diagnostics show that these improvements arise from different scale-selection behavior rather than from a single shared effective learning rate.

\subsection{Regularized ViT-Tiny/CIFAR-100 Diagnostics}
\label{app:vit_cifar100_regularized_diagnostics}

This section provides additional diagnostics for the regularized ViT-Tiny/CIFAR-100 experiment in Section~\ref{subsec:vit_cifar100_regularized}. The goal is to document the fixed-Muon scale selection, the DF-Muon cap selection, and the effective scales used by the adaptive methods.

\paragraph{Fixed-Muon learning-rate diagnostic.}
We first ran a one-seed fixed-Muon learning-rate diagnostic on ViT-Tiny/CIFAR-100. Table~\ref{tab:app_vit_cifar100_muon_lr_sweep} reports the sweep. The smallest tested Muon scale, \(\eta=0.001\), gives the best validation cross-entropy and top-1 accuracy in this diagnostic, and is therefore used as the tuned fixed-Muon baseline in Table~\ref{tab:vit_cifar100_regularized}.

\begin{table}[h]
\centering
\caption{
Fixed-Muon learning-rate diagnostic on ViT-Tiny/CIFAR-100 for one
representative seed. CIFAR-100 images are resized to \(224\times224\).
Runtime is reported relative to the reference fixed-Muon run.
}
\label{tab:app_vit_cifar100_muon_lr_sweep}
\footnotesize
\setlength{\tabcolsep}{4pt}
\begin{tabular}{@{}lccccc@{}}
\toprule
\textbf{Method}
& \textbf{Train CE}
& \textbf{Val. CE}
& \textbf{Top-1}
& \textbf{Mean \(\eta\)}
& \textbf{Rel. time} \\
\midrule
Muon, \(\eta=0.001\)
& \(0.3589\)
& \(\mathbf{1.6752}\)
& \(\mathbf{57.74}\)
& \(0.0010\)
& \(1.01\times\) \\
Muon, \(\eta=0.002\)
& \(0.0775\)
& \(2.0090\)
& \(55.85\)
& \(0.0020\)
& \(0.99\times\) \\
Muon, \(\eta=0.003\)
& \(0.0327\)
& \(2.2269\)
& \(53.85\)
& \(0.0030\)
& \(0.99\times\) \\
Muon, \(\eta=0.005\)
& \(0.0295\)
& \(2.3036\)
& \(51.61\)
& \(0.0050\)
& \(1.00\times\) \\
Muon, \(\eta=0.0075\)
& \(0.0561\)
& \(2.3027\)
& \(49.49\)
& \(0.0075\)
& \(1.01\times\) \\
Muon, \(\eta=0.01\)
& \(0.1395\)
& \(2.2052\)
& \(48.98\)
& \(0.0100\)
& \(0.99\times\) \\
\bottomrule
\end{tabular}
\end{table}

\paragraph{DF-Muon cap diagnostic.}
We then ran a matched regularized \(100\)-epoch one-seed diagnostic for DF-Muon, varying only the cap \(\eta_{\max}\). Table~\ref{tab:app_vit_cifar100_df_cap_sweep} reports the result. The cap \(\eta_{\max}=0.01\) gives the best top-1 accuracy among the DF-Muon variants, while remaining close to the tuned fixed-Muon baseline in best validation cross-entropy. We therefore use \(\eta_{\max}=0.01\) in the final three-seed comparison.

\begin{table}[h]
\centering
\caption{
Regularized ViT-Tiny/CIFAR-100 DF-Muon cap diagnostic for one representative
seed. CIFAR-100 images are resized to \(224\times224\). Fixed Muon uses the
tuned learning rate \(\eta=0.001\). DF-Muon varies only the cap
\(\eta_{\max}\). We report best validation cross-entropy and best top-1
accuracy over epochs; runtime is relative to fixed Muon. Since this is a
one-seed diagnostic, no standard deviations are reported.
}
\label{tab:app_vit_cifar100_df_cap_sweep}
\footnotesize
\setlength{\tabcolsep}{3pt}
\resizebox{\linewidth}{!}{%
\begin{tabular}{@{}lccccc@{}}
\toprule
\textbf{Variant}
& \textbf{Best Val. CE}
& \textbf{Top-1 @ Best CE}
& \textbf{Best Top-1}
& \textbf{Mean \(\eta\)}
& \textbf{Rel. time} \\
\midrule
Fixed Muon, \(\eta=0.001\)
& \(\mathbf{1.3508}\)
& \(64.97\)
& \(67.76\)
& \(0.0010\)
& \(1.00\times\) \\
DF-Muon, \(\eta_{\max}=0.005\)
& \(1.3721\)
& \(64.11\)
& \(68.25\)
& \(0.0050\)
& \(1.00\times\) \\
DF-Muon, \(\eta_{\max}=0.01\)
& \(1.3572\)
& \(64.47\)
& \(\mathbf{68.57}\)
& \(0.0100\)
& \(1.00\times\) \\
DF-Muon, \(\eta_{\max}=0.015\)
& \(1.3744\)
& \(63.94\)
& \(68.42\)
& \(0.0150\)
& \(1.00\times\) \\
DF-Muon, \(\eta_{\max}=0.02\)
& \(1.3633\)
& \(64.85\)
& \(67.73\)
& \(0.0200\)
& \(1.04\times\) \\
DF-Muon, \(\eta_{\max}=0.03\)
& \(1.3543\)
& \(64.38\)
& \(67.56\)
& \(0.0297\)
& \(1.03\times\) \\
\bottomrule
\end{tabular}%
}
\end{table}

\begin{figure}[h]
    \centering
    \includegraphics[width=0.92\linewidth]{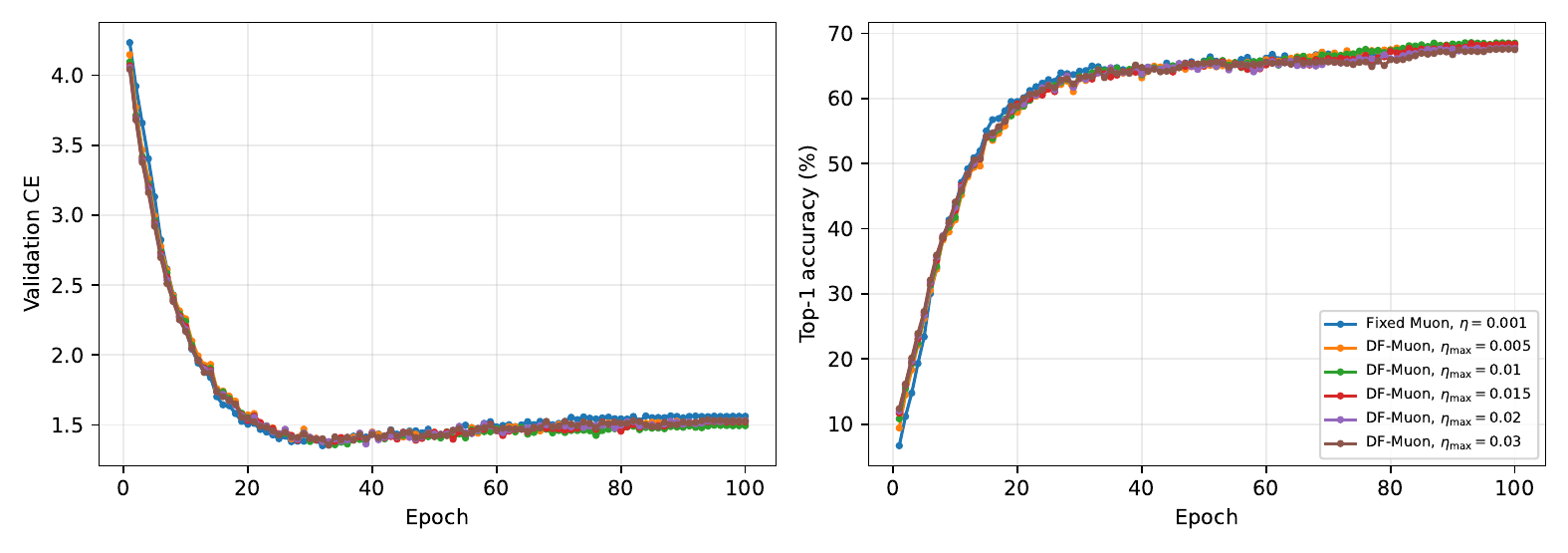}
    \caption{
    DF-Muon cap diagnostic on regularized ViT-Tiny/CIFAR-100 in a
    representative seed. The cap \(\eta_{\max}=0.01\) gives the best top-1
    accuracy among the tested DF-Muon caps, and is used in the final three-seed
    comparison.
    }
    \label{fig:app_vit_cifar100_df_cap_sweep_curves}
\end{figure}

\paragraph{Observed effective scales.}
Because all ViT-Tiny/CIFAR-100 runs use a common warmup--cosine schedule, the schedule-multiplied effective step scale approaches zero near the end of training. We therefore report both the base scale selected by each method and the effective scale after multiplication by the common schedule. Table~\ref{tab:app_vit_cifar100_regularized_eta} summarizes the mean scales over the final \(20\%\) of training.

\begin{table}[h]
\centering
\caption{
Observed base and effective scales in the regularized ViT-Tiny/CIFAR-100 run.
The effective scale equals the base scale selected by each method multiplied by
the common warmup--cosine schedule.
}
\label{tab:app_vit_cifar100_regularized_eta}
\footnotesize
\setlength{\tabcolsep}{5pt}
\begin{tabular}{@{}lcc@{}}
\toprule
\textbf{Method}
& \textbf{Mean base \(\eta\)}
& \textbf{Mean effective scale} \\
\midrule
AdamW
& \(0.0003\)
& \(0.000011\) \\
Best fixed Muon
& \(0.0010\)
& \(0.000036\) \\
DF-Muon
& \(0.0100\)
& \(0.000357\) \\
DA-Muon
& \(0.0100\)
& \(0.000357\) \\
SC-Muon
& \(0.0168\)
& \(0.000594\) \\
\bottomrule
\end{tabular}
\end{table}

\begin{figure}[h]
    \centering
    \includegraphics[width=0.86\linewidth]{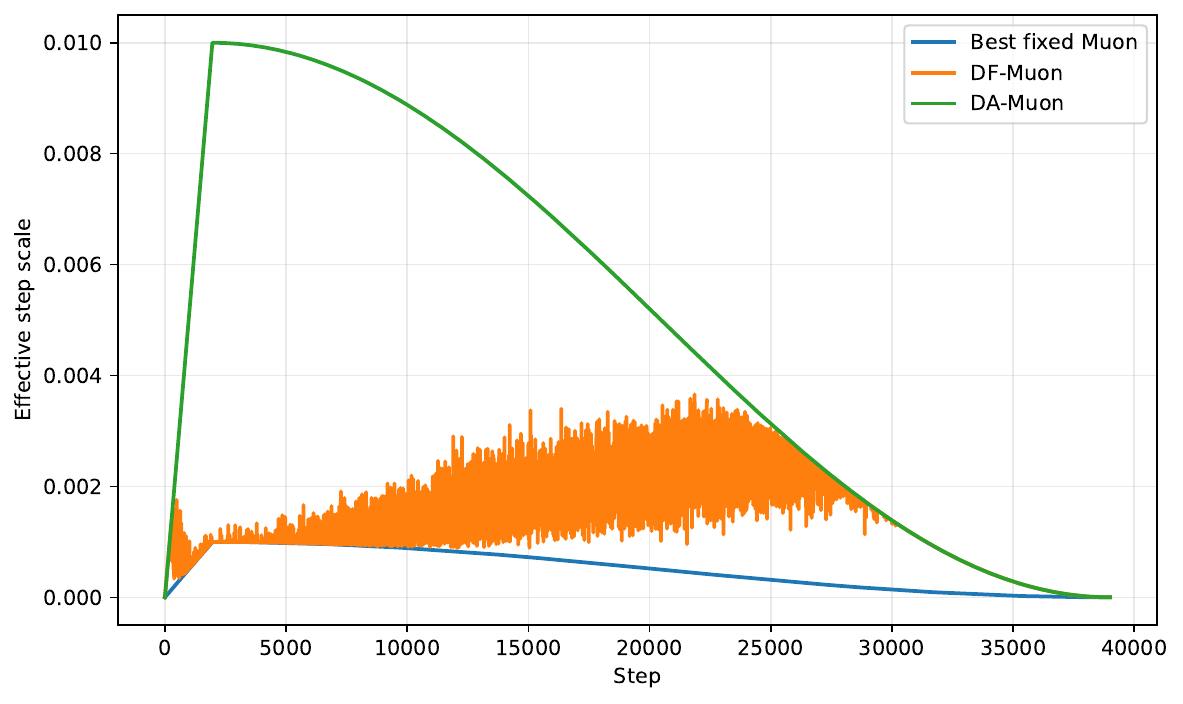}
    \caption{
    Effective step scales on regularized ViT-Tiny/CIFAR-100 for a
    representative seed, excluding SC-Muon for readability. The effective
    scale equals the base scale selected by each method multiplied by the
    common warmup--cosine schedule.
    }
    \label{fig:app_vit_cifar100_effective_scale_no_sc}
\end{figure}

\begin{figure}[h]
    \centering
    \includegraphics[width=0.86\linewidth]{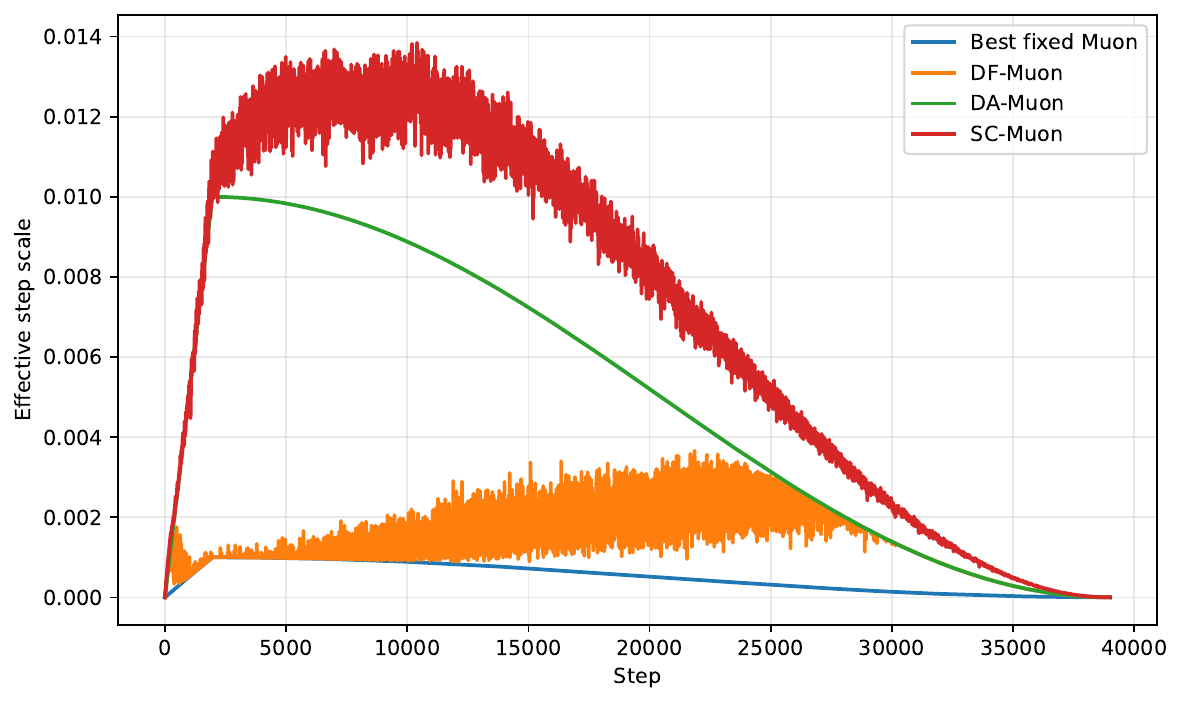}
    \caption{
    Effective step scales on regularized ViT-Tiny/CIFAR-100 for a
    representative seed, including SC-Muon. SC-Muon selects a larger and
    noisier effective scale, while DF-Muon uses an intermediate scale between
    the tuned fixed-Muon baseline and the more aggressive adaptive variants.
    }
    \label{fig:app_vit_cifar100_effective_scale_all}
\end{figure}

\paragraph{Interpretation.}
The diagnostics show that the vision comparison is made against a tuned fixed-Muon baseline and that DF-Muon is not evaluated with an arbitrary cap. The fixed-Muon diagnostic selects \(\eta=0.001\). The DF-Muon cap diagnostic selects \(\eta_{\max}=0.01\), which gives the best top-1 accuracy among the tested caps. In the final three-seed run, fixed Muon obtains the lowest best validation cross-entropy, while DF-Muon obtains the highest best top-1 accuracy with only a small runtime overhead. The scale diagnostics indicate that DF-Muon chooses a more aggressive scale than the fixed-Muon baseline but remains less aggressive than DA-Muon and SC-Muon for much of training.

\subsection{Additional CIFAR-100/ResNet-32 Diagnostics}
\label{app:cifar100_resnet32_diagnostics}

This section reports a complementary image-classification diagnostic on CIFAR-100 with a CIFAR-style ResNet-32. The goal is to check whether the adaptive Muon scalar rules remain competitive with a tuned fixed-Muon baseline outside Transformer language modeling. We train for $T=2000$ optimizer steps with batch size $128$. AdamW uses learning rate $3\cdot 10^{-3}$. Fixed Muon is tuned by a learning-rate sweep, and the adaptive Muon variants use the same maximum base scale as the best fixed-Muon scale.

\paragraph{Fixed-Muon learning-rate sweep.}
Table~\ref{tab:app_cifar100_full_muon_sweep} reports the fixed-Muon learning-rate sweep on one representative seed. The best fixed-Muon learning rate in the sweep is $\eta=0.05$, which achieves the lowest training cross-entropy, lowest test cross-entropy, and highest test accuracy among the tested fixed scales. We therefore use $\eta=0.05$ as the tuned fixed-Muon baseline and set $\eta_{\max}=0.05$ for the adaptive variants.

\begin{table}[h]
\centering
\caption{
Fixed-Muon learning-rate sweep on CIFAR-100/ResNet-32 for one representative
seed.
}
\label{tab:app_cifar100_full_muon_sweep}
\footnotesize
\setlength{\tabcolsep}{6pt}
\begin{tabular}{@{}lccc@{}}
\toprule
\textbf{Method}
& \textbf{Train CE}
& \textbf{Test CE}
& \textbf{Test acc.} \\
\midrule
Muon, $\eta=0.01$
& $1.9777$
& $3.8283$
& $45.93\%$ \\
Muon, $\eta=0.015$
& $1.8756$
& $3.7921$
& $48.03\%$ \\
Muon, $\eta=0.02$
& $1.7952$
& $3.7490$
& $50.16\%$ \\
Muon, $\eta=0.03$
& $1.7495$
& $3.7402$
& $50.34\%$ \\
Muon, $\eta=0.05$
& $\mathbf{1.6675}$
& $\mathbf{3.7171}$
& $\mathbf{52.20\%}$ \\
\bottomrule
\end{tabular}
\end{table}

\begin{figure}[h]
    \centering
    \includegraphics[width=0.82\linewidth]{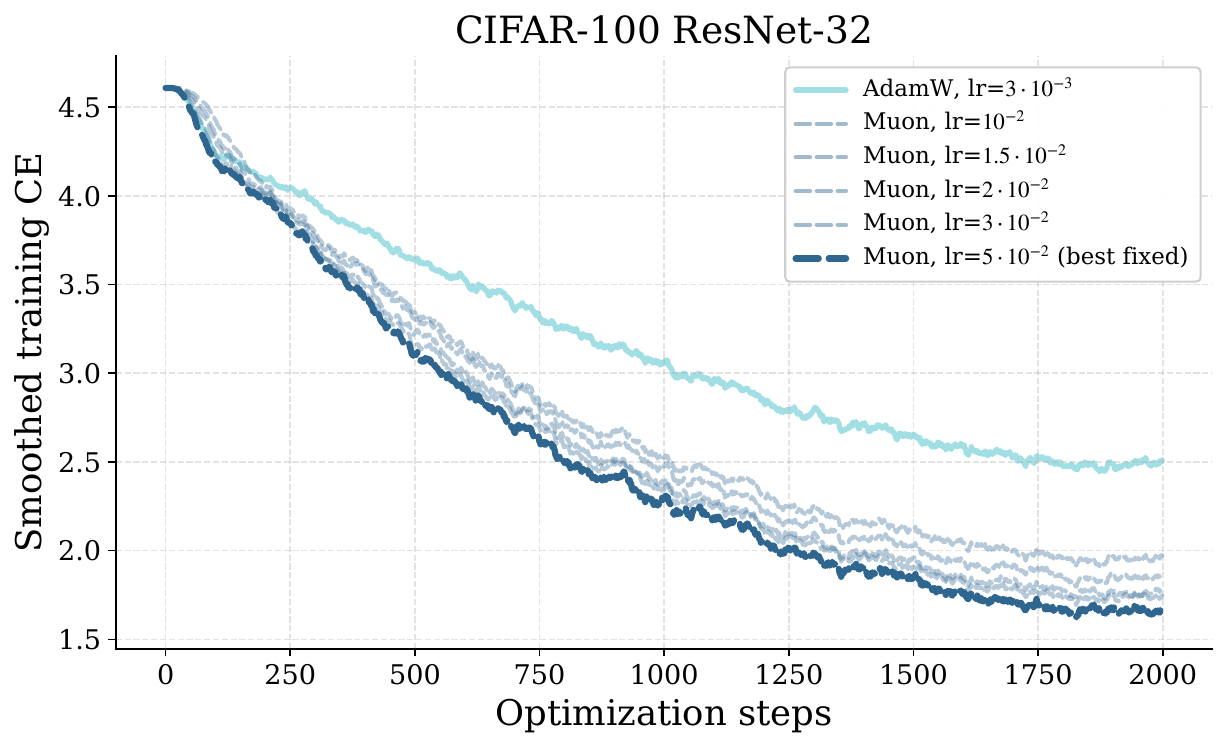}
    \caption{
    Fixed-Muon learning-rate sweep on CIFAR-100/ResNet-32. The largest tested
    scale, $\eta=0.05$, gives the strongest final training curve and is used
    as the tuned fixed-Muon baseline.
    }
    \label{fig:app_cifar100_muon_sweep}
\end{figure}

\paragraph{Three-seed comparison.}
Table~\ref{tab:app_cifar100_resnet32_three_seeds} reports the three-seed aggregate for AdamW, tuned fixed Muon, and the adaptive Muon variants. All Muon-based methods substantially outperform AdamW under this short training budget. The updated majorized DF-Muon improves the tuned fixed-Muon baseline from $52.24\%$ to $52.96\%$ mean test accuracy and slightly improves mean test cross-entropy. DA-Muon gives the lowest mean training CE, while SC-Muon gives the lowest mean test CE.

\begin{table}[h]
\centering
\caption{
CIFAR-100/ResNet-32 results over three seeds. We report mean $\pm$ standard
deviation over seeds $\{42,1337,2024\}$. The fixed-Muon baseline uses the best
learning rate from the sweep, $\eta=0.05$; adaptive Muon variants use
$\eta_{\max}=0.05$.
}
\label{tab:app_cifar100_resnet32_three_seeds}
\footnotesize
\setlength{\tabcolsep}{4pt}
\begin{tabular}{@{}lcccc@{}}
\toprule
\textbf{Method}
& \textbf{Train CE}
& \textbf{Test CE}
& \textbf{Test acc.}
& \textbf{Mean base $\eta$} \\
\midrule
AdamW
& $2.5096 \pm 0.0018$
& $3.9469 \pm 0.1759$
& $34.24 \pm 0.48\%$
& $0.0030$ \\
Best fixed Muon
& $1.6682 \pm 0.0147$
& $3.5589 \pm 0.1477$
& $52.24 \pm 0.20\%$
& $0.0500$ \\
DF-Muon
& $1.6271 \pm 0.0115$
& $3.5505 \pm 0.1834$
& $\mathbf{52.96 \pm 0.54\%}$
& $0.0500$ \\
DA-Muon
& $\mathbf{1.6241 \pm 0.0017}$
& $3.5430 \pm 0.1754$
& $52.91 \pm 0.26\%$
& $0.0500$ \\
SC-Muon
& $1.6321 \pm 0.0114$
& $\mathbf{3.5407 \pm 0.1894}$
& $52.47 \pm 0.65\%$
& $0.0350$ \\
\bottomrule
\end{tabular}
\end{table}

\paragraph{Representative training curve.}
Figure~\ref{fig:app_cifar100_training_curve} shows the representative training curve for seed $1337$. The Muon-based methods reduce training cross-entropy much faster than AdamW. Among the Muon variants, the adaptive methods remain close to the tuned fixed-Muon baseline throughout training.

\begin{figure}[h]
    \centering
    \includegraphics[width=0.82\linewidth]{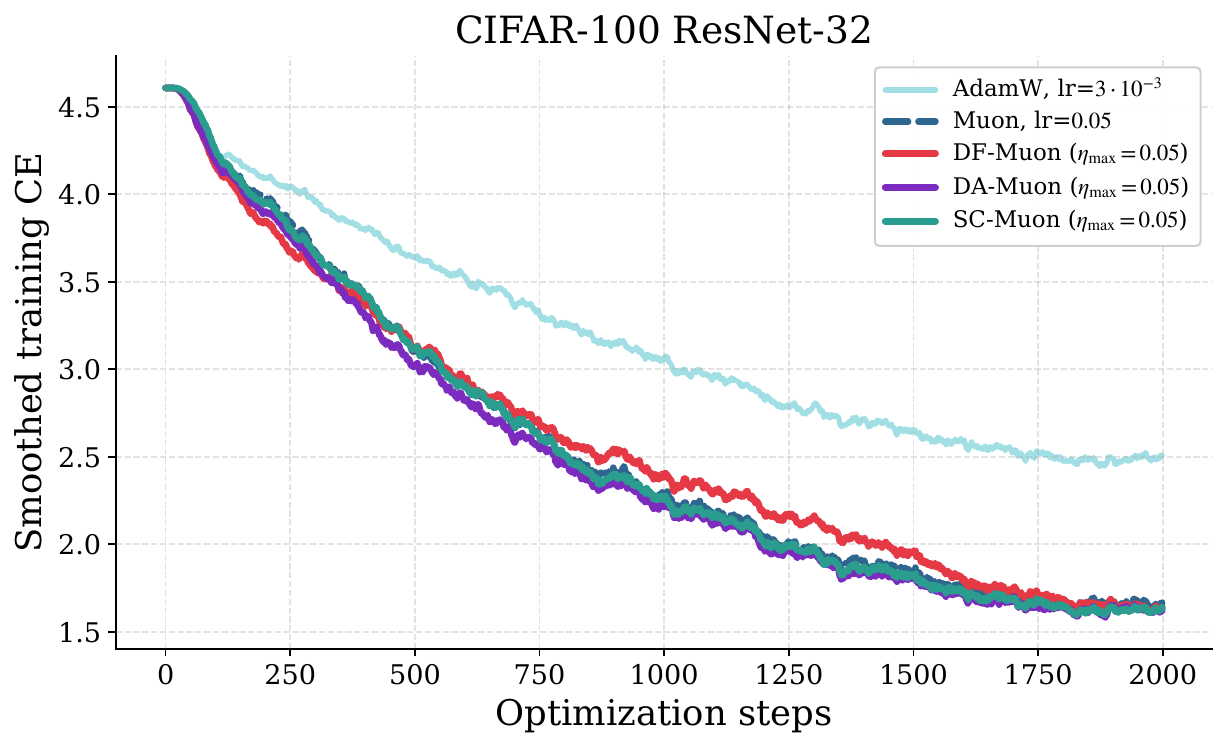}
    \caption{
    CIFAR-100/ResNet-32 training in a representative seed. We compare AdamW,
    the tuned fixed-Muon baseline, and the three adaptive Muon variants using
    the updated majorized DF-Muon implementation.
    }
    \label{fig:app_cifar100_training_curve}
\end{figure}

\paragraph{Observed adaptive scales.}
Because all methods use a warmup--cosine schedule, the schedule-multiplied actual step size approaches zero near the end of training. We therefore report the base scale $\eta_t$ selected by each adaptive rule before multiplying by the common schedule. Table~\ref{tab:app_cifar100_observed_eta} reports the mean base scale over the final $20\%$ of training, averaged over seeds, and Figure~\ref{fig:app_cifar100_observed_eta} shows the representative scale trajectories.

DA-Muon reaches the cap $\eta_{\max}=0.05$ on this task. SC-Muon selects a smaller stable scale around $0.035$. DF-Muon follows a data-dependent trajectory under the majorized scalar rule: it is initially conservative and then increases toward the cap late in training.

\begin{table}[h]
\centering
\caption{
Observed base scales on CIFAR-100/ResNet-32 over three seeds. For adaptive
methods, the mean base scale is averaged over the final $20\%$ of training and
then averaged over seeds.
}
\label{tab:app_cifar100_observed_eta}
\footnotesize
\setlength{\tabcolsep}{5pt}
\begin{tabular}{@{}lcc@{}}
\toprule
\textbf{Method}
& \textbf{Selection rule}
& \textbf{Mean base $\eta$} \\
\midrule
AdamW
& fixed
& $0.0030$ \\
Best fixed Muon
& fixed sweep
& $0.0500$ \\
DF-Muon
& majorized adaptive
& $0.0500$ \\
DA-Muon
& trajectory radius
& $0.0500$ \\
SC-Muon
& certificate
& $0.0350$ \\
\bottomrule
\end{tabular}
\end{table}

\begin{figure}[h]
    \centering
    \includegraphics[width=0.82\linewidth]{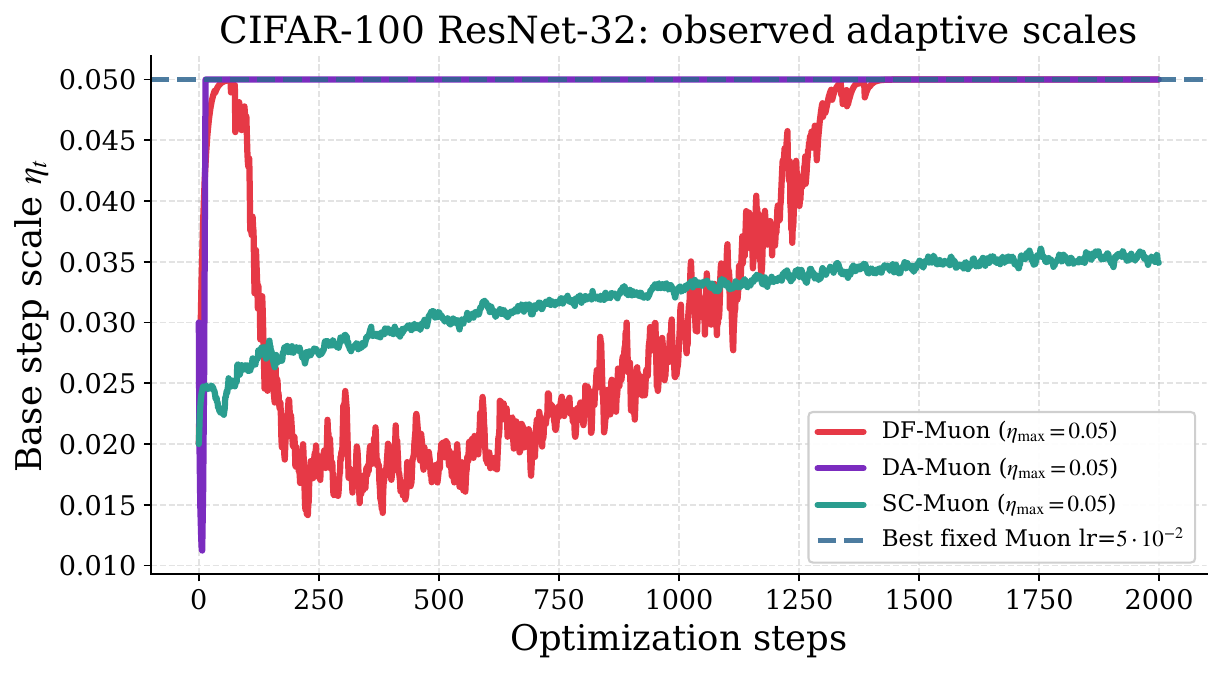}
    \caption{
    Observed base scales selected by the adaptive Muon variants on
    CIFAR-100/ResNet-32 in representative seed $1337$. The dashed horizontal
    line marks the tuned fixed-Muon scale, $\eta=0.05$. DA-Muon reaches the
    cap, SC-Muon stabilizes around $0.035$, and DF-Muon follows a
    conservative-then-increasing trajectory under the majorized scalar rule.
    }
    \label{fig:app_cifar100_observed_eta}
\end{figure}

\paragraph{Interpretation.}
The CIFAR-100/ResNet-32 experiment gives a complementary picture to the GPT-124M/WikiText-103 and NanoGPT/WikiText-2 language-modeling experiments. The fixed-Muon sweep identifies a strong baseline at $\eta=0.05$, and the adaptive variants remain competitive with this tuned baseline. The updated majorized DF-Muon improves mean test accuracy and slightly improves mean test cross-entropy relative to fixed Muon, while DA-Muon and SC-Muon are also competitive. The differences among the Muon variants are small relative to seed-to-seed variation, so we view this experiment as evidence that adaptive scalar scaling remains robust on a convolutional benchmark rather than as a decisive separation among the adaptive rules.

\subsection{Existing Assets, Licenses, and Implementations}
\label{app:assets_licenses}

All experiments use standard public benchmark datasets and do not introduce or redistribute new datasets. For language modeling, we use WikiText-103 in the main GPT-124M experiment and WikiText-2 in the smaller NanoGPT diagnostic experiment. Both are standard WikiText benchmarks derived from Wikipedia text and distributed under Creative Commons Attribution--ShareAlike terms \citep{merity2016pointer}. For image classification, we use CIFAR-100 with its standard train/test split \citep{krizhevsky2009learning}. In all cases, the datasets are used only as benchmarks and are not redistributed as part of this work.

The smaller language-modeling diagnostics use a NanoGPT-style GPT implementation based on the public NanoGPT codebase, which is released under the MIT License. The image-classification experiments use ViT-Tiny and CIFAR-style ResNet-32 implementations in PyTorch; dataset loading and standard image transforms are based on TorchVision \citep{paszke2019pytorch}. PyTorch and TorchVision are released under BSD-style licenses. We modified the training loops only to replace the optimizer on matrix-valued parameters by the Muon-based methods studied in this paper, while keeping the dataset splits, model families, and evaluation metrics standard.
\section{Additional Related Work}
\label{app:additional_related_work}

\paragraph{Broader Muon variants.}
Recent work has studied complementary aspects of Muon-type methods, including matrix-sign computations, variance reduction, error feedback and broader non-Euclidean SGD frameworks~\citep{amsel2025polar,qian2025muon,gruntkowska2025error,kovalev2025non}. These directions are orthogonal to our focus on adaptive scalar scaling.


\end{document}